\newtheorem{lemma}{Lemma}
\newtheorem{prop}{Proposition} 
\newtheorem{theorem}{Theorem}
\newtheorem{corollary}{Corollary}
\newtheorem{remark}{Remark}
\DeclareMathOperator{\conv}{conv} 
\DeclareMathOperator{\argmax}{argmax} 
\DeclareMathOperator{\argmin}{argmin} 
\DeclareMathOperator{\cone}{cone}
\DeclareMathOperator{\diag}{diag}  
\DeclareMathOperator{\tr}{tr}
\title{Robustness Analysis of Hottopixx, \\ a Linear Programming Model for Factoring Nonnegative Matrices} 
\date{}
\author{Nicolas Gillis\thanks{E-mail: nicolas.gillis@uclouvain.be. The author is a postdoctoral researcher of the Fonds de la Recherche Scientifique (F.R.S.-FNRS). This paper presents research results of the Belgian Network DYSCO (Dynamical Systems, Control, and Optimization), funded by the Interuniversity Attraction Poles Programme initiated by the Belgian Science Policy Office.} \\ 
ICTEAM Institute, Universit\'e catholique de Louvain, \\ B-1348 Louvain-la-Neuve, Belgium.} 
\begin{document}

\maketitle

%

\begin{abstract}

Although nonnegative matrix factorization (NMF) is NP-hard in general, it has been shown very recently that it is tractable under the assumption that the input nonnegative data matrix is close to being separable (separability requires that all columns of the input matrix belongs to the cone spanned by a small subset of these columns). Since then, several algorithms have been designed to handle this subclass of NMF problems. In particular, Bittorf, Recht, R\'e and Tropp (`Factoring nonnegative matrices with linear programs', NIPS 2012) proposed a linear programming model, referred to as Hottopixx. In this paper, we provide a new and more general robustness analysis of their method. In particular, we design a provably more robust variant using a post-processing strategy which allows us to deal with duplicates and near duplicates in the dataset.  

\end{abstract} 

\textbf{Keywords.} Nonnegative matrix factorization, separability, robustness to noise, linear programming, Hottopixx.

\section{Introduction}

Nonnegative matrix factorization (NMF) is a popular machine learning technique and allows one to express a set of nonnegative vectors as nonnegative linear combinations of nonnegative basis elements \cite{LS99}. More formally, given a nonnegative matrix $M \in \mathbb{R}^{m \times n}_+$ corresponding to $n$ vectors in an $m$-dimensional space and a factorization rank $r$, the aim is to find a basis matrix $U \in \mathbb{R}^{m \times r}_+$ and a weight matrix $V \in \mathbb{R}^{r \times n}_+$ such that the norm of the error $M-UV$ is minimized. 
Although NMF is NP-hard~\cite{V09}, Arora et al.~\cite{AGKM11} recently showed that it can be solved in polynomial time given that the matrix $M$ is close to being separable.  
A nonnegative matrix $M \in \mathbb{R}^{m \times n}_+$ is $r$-separable if and only if it can be expressed as $M = WH$, where $W \in \mathbb{R}^{m \times r}_+$, $H \in \mathbb{R}^{r \times n}_+$, and each column of $W$ is equal to a column of $M$. In other terms, $M \in \mathbb{R}^{m \times n}_+$ is $r$-separable if and only if 
\[
M \; = \; 
W \, [ I_r, H'] \Pi 
\; = \;
[W, WH'] \Pi,  
\]
for some $H' \in \mathbb{R}^{r \times n}_+$ and some permutation matrix $\Pi \in \{0,1\}^{n \times n}$. 
Any nonnegative matrix is $n$-separable because of the trivial decomposition $M = MI_n$ with $r = n$, and the aim is to find a decomposition where $r$ is as small as possible. 
It is rather straightforward to check that the smallest such $r$ 
is the number of extreme rays of the cone generated by the columns of $M$, that is, $\cone(M) = \{ Mx \ | \ x \in \mathbb{R}^n_+ \}$. 
Equivalently, if the columns of matrix $M$ are normalized to sum to one, the smallest such $r$ is the number of vertices of the convex hull of the columns of $M$, that is, $\conv(M) = \{ Mx \ | \ x \in \mathbb{R}^n_+, \sum_{i=1}^n x_i = 1 \}$; see \cite{KSK12} and the references therein for more details about the geometric interpretation of the separable NMF problem. 

It turns out that the separability assumption makes sense in several practical situations. 
For example, in document classification, each column of $M$ corresponds to a document (that is, a vector of word counts) and is approximated with a nonnegative linear combination of the columns of matrix $W$ which correspond to different topics (that is, bags of words). 
Separability of $M$ requires that, for each topic, there exists at least one document discussing only that topic. In practice, this condition is not often satisfied and it is more reasonable to assume separability of $M^T$ (that is, each row of $H$ is equal to a row of $M$) which requires that, \emph{for each topic, there exists at least one word used only by that topic}; see the discussions in \cite{AGKM11, AGM12}.  The separability assumption is also widely used in hyperspectral imaging and is referred to as the \emph{pure-pixel assumption}; see \cite{GV12} and the references therein.

In practice, the input separable matrix $M$ is perturbed with some noise and it is therefore desirable to design robust algorithms; see \cite{AGKM11, AGM12, BRRT12, ESV1294, EMO12, GV12, KSK12}. In fact, in the noiseless case, the problem is rather easy and reduces to identifying the vertices of the convex hull of a set of points. 
In this paper, we will focus on the algorithm of Bittorf, Recht, R\'e and Tropp~\cite{BRRT12}, referred to as Hottopixx, which is described in the next section.  
As we will see, the robustness analysis provided by the authors is rather restrictive as it does not deal with duplicates nor near duplicates of the columns of $W$ in the dataset: the aim of this paper is to develop a more general analysis of their algorithm, and design a provably more robust variant applicable to any noisy separable matrix (that is, allowing duplicates and near duplicates in the data).

\subsection{Hottopixx: a Linear Programming Model for Separable NMF}  \label{hottop}

From now on, we will \emph{always} assume that the columns of the input data matrix $M$ have been normalized in order to sum to one, that is, 
\begin{itemize} 
\item[(i)] The zero columns of $M$ have been discarded, and 
\item[(ii)] Each column of $M$ is updated using $M(:,j) \leftarrow \frac{M(:,j)}{||M(:,j)||_1}$.  
\end{itemize} 
We will also always assume that we are given a noisy separable matrix $\tilde{M} = M+N$
where $N \in \mathbb{R}^{m \times n}$ is some noise added to the separable matrix $M$ such that  
\[
||N||_{1} = \max_{||x||_1 \leq 1} ||Nx||_1 = \max_j ||N(:,j)||_1  \leq \epsilon, \quad \text{ for some } \epsilon \geq 0. 
\] 

The matrix $M$ is $r$-separable if and only if 
\begin{align}  
M = WH & = W [I_r, H'] \Pi =  [W, WH'] \Pi \nonumber  \\ 
&
= [W, WH'] \Pi \, \underbrace{ \Pi^{-1} \left( \begin{array}{cc} 
I_{r} & H'  \\ 
0_{(n-r) \times r} & 0_{(n-r) \times (n-r)} \end{array} \right) \Pi}_{X^0 \in \mathbb{R}^{n \times n}_+}  = MX^0, \label{sepM} 
\end{align} 
for some $W \geq 0$, $H' \geq 0$ and some permutation matrix $\Pi$. Equation~\eqref{sepM} shows that  $M$ is $r$-separable if and only if there exists a nonnegative matrix $X^0 \in \mathbb{R}^{n \times n}_+$ such that: (1)~$X^0$~contains $(n-r)$ all-zero rows and the $r$-by-$r$ identity matrix as a submatrix (up to permutation), and (2)~$M = MX^0$. Notice that because the columns of matrix $M$ and $W$ sum to one, the columns of the matrix $H'$ have sum to one as well. 
Based on these observations, Bittorf et al.~\cite{BRRT12} proposed to solve the following optimization problem\footnote{In \cite{BRRT12}, the model assumes separability of $M^T$ so that \eqref{rechtLP} is equivalent to the model in \cite{BRRT12} applied to $M^T$. We prefer here to work with the columns.} 
in order to identifying approximately the columns of the matrix $W$ among the columns of the matrix $\tilde{M}$: 
\begin{subequations}  \label{rechtLP}
\begin{align} 
\min_{X \in \mathbb{R}^{n \times n}_+} & \quad p^T \diag(X)  \nonumber \\ 
\text{ such that } & \quad ||\tilde{M}-\tilde{M}X||_{1} \leq 2 \epsilon, \label{rLPa}\\ 
&  \quad \tr(X) = r,  \label{rLPb} \\  
&  \quad X(i,i) \leq 1  \text{ for all } i ,  \label{rLPc} \\
&  \quad X(i,j) \leq X(i,i) \text{ for all } i,j , \label{rLPd}
\end{align}
\end{subequations}  
where $p \in \mathbb{R}^n$ is any vector with distinct entries. Intuitively, the model reads as follows: we have to assign a weight in [0,1] (Equation~\ref{rLPc}) to each column of $M$ (that is, give a value to $X(i,i)$ for all $i$) for a total weight of~$r$ (Equation~\ref{rLPb}). Moreover, we cannot use a column to reconstruct another column with a weight larger than the corresponding diagonal entry of $X$ (Equation~\ref{rLPd}), while we have to guarantee that the approximation error is small (Equation~\ref{rLPa}).  
It is interesting to notice that the problem is always feasible: in fact,  
$X^0$ from Equation~\eqref{sepM} is a feasible solution of \eqref{rechtLP} since the columns of $H'$ sum to one and 
\begin{align*}
||\tilde{M} -\tilde{M} X^0||_{1} 
& = ||(M+N) - (M+N) X^0||_{1} \\
&  \leq ||M - M X^0||_{1} + ||N||_{1} + ||N||_{1} ||X^0||_{1} \leq 2 \epsilon. 
\end{align*} 
Finally, Bittorf et al.~\cite{BRRT12} identify approximately the columns of $W$ by selecting the $r$ columns of $\tilde{M}$ whose corresponding diagonal entries of an optimal solution of \eqref{rechtLP} are the largest; see Algorithm~\ref{balgo2}, referred to as \emph{Hottopixx}. Note that the corresponding optimal weight matrix $H$ can be obtained by solving another linear program; see Algorithm~\ref{balgo}. 
\algsetup{indent=2em}
\begin{algorithm}[ht!]
\caption{Hottopixx - Extracting Columns of a Separable Matrix by Linear Programming \cite{BRRT12} \label{balgo2}}
\begin{algorithmic}[1] 
\REQUIRE A noisy $r$-separable matrix $\tilde{M} = WH + N$, the noise level $||N||_{1} \leq \epsilon$ and the number $r$ of columns of $W$. 
\ENSURE A matrix $\tilde{W}$ such that $||\tilde{W}(:,P)-{W}||_{1}$ is small for some permutation $P$. 
\STATE Find the optimal solution $X^*$ of \eqref{rechtLP}. 
\STATE Let $\mathcal{K}$ be the index set corresponding to the $r$ largest diagonal entries of $X^*$. 
\STATE Set $\tilde{W} = \tilde{M}(:,\mathcal{K})$. 
\end{algorithmic} 
\end{algorithm} 
\algsetup{indent=2em}
\begin{algorithm}[ht!]
\caption{Near-Separable NMF using Hottopixx and Linear Programming \cite{BRRT12} \label{balgo}}
\begin{algorithmic}[1] 
\REQUIRE A noisy $r$-separable matrix $\tilde{M} = WH + N$, the noise level $||N||_{1} \leq \epsilon$ and the number $r$ of columns of $W$.  
\ENSURE An nonnegative factorization $(\tilde{W}, \tilde{H})$ such that $||\tilde{M}-\tilde{W}\tilde{H}||_{1}$ is small. \medskip 
\STATE Compute $\tilde{W}$ using Algorithm~\ref{balgo2}.  
\STATE Solve $\tilde{H} = \argmin_{Y \geq 0} ||\tilde{M}-\tilde{W}Y||_{1}$. 
\end{algorithmic} 
\end{algorithm} 

Before stating robustness results, it is important to define the conditioning of matrix $W$, which is a crucial characteristic of separable NMF problems. 
In fact, the better the columns of $W$ are spread in the unit simplex $\Delta^m = \{ x \in \mathbb{R}^m \ | \ x~\geq~0, \sum_{i=1}^m = 1 \}$, the more noise tolerant the data will be. 
In~\cite{AGKM11, BRRT12}, this conditioning is measured via the following parameter: 
\[
\alpha \; = \min_{1 \leq k \leq r, x \in \Delta^{r-1}} ||W(:,k) - W(:,\mathcal{R})x||_1, \quad \text{ where } \mathcal{R} = \{1,2,\dots,r\} \backslash \{k\}, 
\]
and the matrix $W$ is said to be \emph{$\alpha$-robustly simplicial}. (Notice that $\alpha \leq 2$ for any nonnegative matrix $W$ whose columns sum to one.) 
In other words, $\alpha$ is the minimum among the $\ell_1$-distances between a column of $W$ and the convex hull of the other columns of $W$. 
It is necessary that  $||N||_1 \leq \epsilon < \frac{\alpha}{2}$ for \emph{any} separable NMF algorithm to be able to approximately recover the columns of $W$ from the matrix $\tilde{M} = WH + N$. 
In fact, if $\epsilon \geq \frac{\alpha}{2}$, any $r$-separable matrix $M$ with $r \geq 2$ can be perturbed so that one of the columns of the perturbed matrix $\tilde{M}$ corresponding to a column of $W$  belongs to the convex hull of the other columns. In other words, we can perturb the matrix $M$ so that it becomes $(r-1)$-separable and we could therefore not  distinguish one of the columns of $W$ from the columns of $M$. For example, with 
\[
W = \left(
\begin{array}{c}
\frac{\alpha}{2} I_{r}  \\
(1-\frac{\alpha}{2})  e^T 
\end{array} 
\right), 
H = I_r  
\text{ and } 
N = \left(
\begin{array}{c}
\frac{-\alpha}{2} I_{r}  \\
0 
\end{array} 
\right), \text{ we have } 
\tilde{M} =  \left(
\begin{array}{c}
 0_{r \times r}  \\
(1-\frac{\alpha}{2})  e^T 
\end{array} 
\right),
\] 
so that the matrix $M = WH$ is $r$-separable with $W$ $\alpha$-robustly simplicial and $||N||_1 = \frac{\alpha}{2}$, while $\tilde{M}$ is $1$-separable.

In order to prove robustness results for Algorithm~\ref{balgo}, Bittorf et al.~\cite{BRRT12} used the following observation: 
\begin{lemma} \label{lem1}
Suppose $M$ is normalized and admits a rank-$r$ separable factorization $WH$, and suppose $\tilde{M} = M+N$ with $||N||_{1} \leq \epsilon$. 
If  
$\tilde{W}$ is such that $||\tilde{W}(:,P)-{W}||_{1} \leq  \delta$ for some $\delta \geq 0$ and some permutation $P$, then Algorithm~\ref{balgo} constructs a factorization $(\tilde{W}, \tilde{H})$ satisfying $||\tilde{M} - \tilde{W} \tilde{H}||_{1} \leq  \epsilon + \delta$. 
\end{lemma}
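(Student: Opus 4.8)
The plan is to exploit the fact that in Algorithm~\ref{balgo} the matrix $\tilde{H}$ is defined as an exact minimizer of $Y \mapsto ||\tilde{M} - \tilde{W}Y||_1$ over $Y \geq 0$. Consequently, to bound $||\tilde{M} - \tilde{W}\tilde{H}||_1$ from above by $\epsilon + \delta$, it suffices to exhibit a \emph{single} nonnegative matrix $Y$ for which $||\tilde{M} - \tilde{W}Y||_1 \leq \epsilon + \delta$; optimality of $\tilde{H}$ then does the rest. The whole argument therefore reduces to a good guess for $Y$, and no analysis of the minimizer itself is needed.

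The natural guess is to reuse the weight matrix $H$ from the separable factorization $M = WH$, after reordering its rows to match the permutation $P$. Writing $\tilde{W}(:,P) = \tilde{W}\Pi_P$ for the corresponding permutation matrix $\Pi_P$, I would set $Y = \Pi_P H$, which is nonnegative and hence feasible. Let $E = \tilde{W}(:,P) - W$, so that $||E||_1 \leq \delta$ by hypothesis. Then $\tilde{W}Y = \tilde{W}(:,P)H = (W+E)H = M + EH$, and therefore
\[
||\tilde{M} - \tilde{W}Y||_1 = ||(M+N) - (M+EH)||_1 = ||N - EH||_1 \leq ||N||_1 + ||EH||_1.
\]

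To close the estimate I would bound $||EH||_1 \leq ||E||_1\,||H||_1 \leq \delta\,||H||_1$ using submultiplicativity of the induced $1$-norm, and then invoke the normalization: since the columns of $M$ and of $W$ both sum to one, the columns of $H$ sum to one as well (the same observation already made in the excerpt for $H'$), so $H \geq 0$ with unit column sums gives $||H||_1 = 1$. This yields $||EH||_1 \leq \delta$, hence $||\tilde{M} - \tilde{W}Y||_1 \leq \epsilon + \delta$, and the conclusion follows from optimality of $\tilde{H}$.

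The step I expect to require the most care is conceptual rather than computational: recognizing that optimality of $\tilde{H}$ converts an apparently two-sided estimate into a one-sided \emph{existence} statement. The only genuinely quantitative ingredient is the bound $||H||_1 = 1$, which hinges entirely on the column normalization imposed at the outset; without it the $EH$ term would only be controlled by $\delta\,||H||_1$, and the clean bound $\epsilon + \delta$ would be lost.
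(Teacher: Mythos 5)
Your proof is correct and follows essentially the same route as the paper: both arguments exploit the optimality of $\tilde{H}$ by plugging in the (permuted) weight matrix $H$ as a feasible candidate, and both close the estimate via $||E H||_1 \leq ||E||_1\,||H||_1 \leq \delta$ using the fact that the columns of $H$ are nonnegative and sum to one. Your handling of the permutation via $\Pi_P$ is slightly more explicit than the paper's, but the substance is identical.
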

\begin{proof}
Denoting $N_W = \tilde{W}(:,P) - W$, we have 
\begin{align*}
||\tilde{M} - \tilde{W} \tilde{H}||_{1} 
& = \argmin_{Y \geq 0} ||M + N - (W+N_W) Y||_{1} \\
& \leq ||{M} + N - (W+N_W) H||_{1} \\
& \leq  ||N||_{1} + ||N_W H||_{1}  + ||{M} - W H||_{1} \\
& \leq \epsilon + \delta, 
\end{align*}
since the columns of $H$ sum to one. 
\end{proof}

Lemma~\ref{lem1} allows us to focusing on proving robustness results for Algorithm~\ref{balgo2}. In fact, any result that applies to Algorithm~\ref{balgo2} directly applies to Algorithm~\ref{balgo}. 
In this paper, we will therefore focus our attention on Algorithm~\ref{balgo2}, as it was implicitly done in~\cite{BRRT12}. \\ 

We can now state the robustness result for Hottopixx proposed in \cite{BRRT12}: 
\begin{theorem}[\cite{BRRT12}, Th.\@ 3.2] \label{prop3} 
Suppose $\tilde{M} = M+N$ where $M$ is normalized, and admits a rank-$r$ separable factorization $WH$ with $W$ $\alpha$-robustly simplicial 
and $||N||_{1} \leq \epsilon$. Suppose that the there is no duplicate of the columns of $W$, and that for all columns with index $j$ such that $M(:,j) \neq W(:,k)$ for all~$k$, we have a margin constraint $||M(:,j) - W(:,k)||_1 \geq d_0$ for all $k$. 
Suppose also that $\epsilon < \frac{ \min(\alpha d_0,\alpha^2)}{9(r+1)}$. 
Then Algorithm~\ref{balgo2} identifies correctly the columns of $W$, that is, it extracts a matrix $\tilde{W}$ satisfying $||\tilde{W}(:,P) - {W}||_{1} \leq  \epsilon$ for some permutation $P$.  \vspace{0.2cm} 
\end{theorem}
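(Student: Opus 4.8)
The plan is to show that for \emph{every} feasible point $X$ of \eqref{rechtLP}---and hence for the optimizer $X^*$---the diagonal entries separate, by a strict gap, the $r$ columns of $\tilde M$ that coincide up to noise with columns of $W$ from the remaining columns. A feature worth flagging is that optimality and the cost vector $p$ play no role: the constraints \eqref{rLPa}--\eqref{rLPd} together with $\tr(X)=r$ already pin down the diagonal, which is what makes the analysis more general than a cost-based one. I would set $\mathcal{W}=\{i : M(:,i)=W(:,k)\text{ for some }k\}$, of cardinality $r$ since there are no duplicates, write $k(i)$ for the vertex attached to $i\in\mathcal{W}$, and put $\mathcal{J}=\{1,\dots,n\}\setminus\mathcal{W}$. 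Every column satisfies $M(:,\ell)=Wh_\ell$ with $h_\ell\in\Delta^{r-1}$, and $h_i$ is the $k(i)$-th coordinate vector for $i\in\mathcal{W}$.

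The first step is a per-column reconstruction estimate. Since $\|\tilde M-\tilde MX\|_1=\max_j\|(\tilde M-\tilde MX)(:,j)\|_1$, constraint \eqref{rLPa} controls each column separately. Fixing $i^*\in\mathcal{W}$ with $k^*=k(i^*)$ and writing $c=X(:,i^*)\ge 0$, I would first project $\|\tilde M(:,i^*)-\tilde Mc\|_1\le 2\epsilon$ onto the all-ones vector; using that every column of $M$ sums to one and $\|N(:,\cdot)\|_1\le\epsilon$, this yields $\|c\|_1=1+O(\epsilon)$, so the column sums of $X$ are close to one (not merely bounded by $\tr(X)=r$). Substituting $\tilde M=M+N$ and bounding the noise term through $\|c\|_1=1+O(\epsilon)$ then gives $\|W(:,k^*)-Wg\|_1\le\eta$ with $\eta=O(\epsilon)$, where $g=\sum_\ell h_\ell c_\ell\ge 0$ and $\|g\|_1=\|c\|_1$.

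The second step extracts geometry from $\alpha$. Splitting $g$ into its $k^*$-coordinate and the off-$k^*$ mass $s=\sum_{k\ne k^*}g_k$ and normalising the latter to a point of $\conv\big(W(:,\mathcal R)\big)$ with $\mathcal R=\{1,\dots,r\}\setminus\{k^*\}$, the defining inequality of an $\alpha$-robustly simplicial $W$---after absorbing the $O(\epsilon)$ discrepancy between $\|g\|_1$ and $1$---forces $s\alpha\le\eta+O(\epsilon)$, hence $s=O(\epsilon/\alpha)$ and $g_{k^*}\ge 1-O(\epsilon/\alpha)$. The crux, and the step that genuinely needs the margin $d_0$ absent from the original analysis, is to convert this bound on the \emph{simplex} weight $s$ into a bound on the \emph{column} weight that non-vertices lend to $i^*$. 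Writing $s=\sum_\ell c_\ell\big(1-(h_\ell)_{k^*}\big)$ and discarding the nonnegative contributions of the other vertices, the margin hypothesis---which implies $1-(h_j)_{k^*}\ge d_0/2$ for every $j\in\mathcal{J}$, since $\|M(:,j)-W(:,k^*)\|_1=(1-(h_j)_{k^*})\,\|Wy-W(:,k^*)\|_1\le 2(1-(h_j)_{k^*})$---gives $\tfrac{d_0}{2}\sum_{j\in\mathcal{J}}X(j,i^*)\le s$, so $\sum_{j\in\mathcal{J}}X(j,i^*)=O\!\big(\epsilon/(\alpha d_0)\big)$. Because $h_i$ is a coordinate vector for vertices and there are no duplicates, $g_{k^*}=X(i^*,i^*)+\sum_{j\in\mathcal{J}}(h_j)_{k^*}X(j,i^*)$, and combining the three estimates yields $X(i^*,i^*)\ge 1-O\!\big(\epsilon/\min(\alpha d_0,\alpha^2)\big)$; the constant $9$ and the factor $(r+1)$ in the hypothesis are tuned precisely so that this reads $X(i^*,i^*)\ge 1-\tfrac{c}{9(r+1)}$ with $c<9$.

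Finally I would close with the trace constraint. Summing the last bound over $\mathcal{W}$ and using $\tr(X)=r$ with \eqref{rLPc} gives $\sum_{j\in\mathcal{J}}X(j,j)=r-\sum_{i\in\mathcal{W}}X(i,i)\le\tfrac{cr}{9(r+1)}$, so \emph{each} non-vertex diagonal obeys $X(j,j)\le\tfrac{cr}{9(r+1)}$. As $c<9$, we obtain $\min_{i\in\mathcal{W}}X(i,i)\ge 1-\tfrac{c}{9(r+1)}>\tfrac{cr}{9(r+1)}\ge\max_{j\in\mathcal{J}}X(j,j)$, so the $r$ largest diagonal entries of $X^*$ are exactly those indexed by $\mathcal{W}$ and Step~2 of Algorithm~\ref{balgo2} returns $\tilde W=\tilde M(:,\mathcal{W})$. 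Since $\tilde M(:,i)=W(:,k(i))+N(:,i)$ for $i\in\mathcal{W}$, reordering by the induced permutation $P$ gives $\|\tilde W(:,P)-W\|_1=\max_k\|N(:,i(k))\|_1\le\epsilon$. The main obstacle is the crux of the third step: without the margin, the off-vertex mass $s$ could be supplied by non-vertex columns sitting arbitrarily close to $W(:,k^*)$, so $s$ small would \emph{not} force their column usage small and the cross-terms $\sum_j(h_j)_{k^*}X(j,i^*)$ would escape control---exactly the duplicate and near-duplicate difficulty that the parameter $d_0$ is introduced to resolve.
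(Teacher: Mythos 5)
First, a point of context: the paper never proves Theorem~\ref{prop3}; it is imported verbatim from \cite{BRRT12} (their Theorem~3.2), and the closest material in the paper is the chain Lemma~\ref{lem2} $\rightarrow$ Lemma~\ref{lem3} $\rightarrow$ Theorem~\ref{Th2}, together with Lemma~\ref{lem4}. Your skeleton is essentially that chain transposed to the parameters $(\alpha,d_0)$ of the cited statement: your per-column estimates ($\|X(:,i^*)\|_1 = 1+O(\epsilon)$ and $\|M(:,i^*)-MX(:,i^*)\|_1 \leq \frac{4\epsilon}{1-\epsilon}$) are Lemma~\ref{lem2}; your conversion of the margin into the coefficient bound $1-(h_j)_{k^*}\geq d_0/2$ is Lemma~\ref{lem4}; your lower bound on the vertex diagonal entries is the analogue of Lemma~\ref{lem3}; and your trace/pigeonhole separation is exactly the proof of Theorem~\ref{Th2} (including the correct observation that only feasibility of $X$, not optimality or the vector $p$, is used). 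All of these reductions are sound.

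The genuine gap is the constant bookkeeping, which you defer with the phrase that $9$ and $(r+1)$ are ``tuned precisely so that'' the final bound reads $X(i^*,i^*)\geq 1-\frac{c}{9(r+1)}$ with $c<9$; this claim is not verified, and it fails for the estimates you set up. Carrying them out: $\|W(:,k^*)-Wg\|_1\leq\frac{4\epsilon}{1-\epsilon}$ and $\bigl|1-\|g\|_1\bigr|\leq\frac{4\epsilon}{1-\epsilon}$ give $s\leq\frac{8\epsilon}{\alpha(1-\epsilon)}$; the margin step gives $\sum_{j\in\mathcal{J}}X(j,i^*)\leq\frac{2s}{d_0}\leq\frac{16\epsilon}{\alpha d_0(1-\epsilon)}$; assembling, $X(i^*,i^*)\geq\|g\|_1-\frac{2s}{d_0}\geq 1-\frac{4\epsilon}{1-\epsilon}-\frac{16\epsilon}{\alpha d_0(1-\epsilon)}$. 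To beat $\frac{r}{r+1}$ one therefore needs $\epsilon\lesssim\frac{\alpha d_0}{4(\alpha d_0+4)(r+1)}$, i.e.\ $\epsilon<\frac{\alpha d_0}{K(r+1)}$ with $K$ between $16$ and $32$ (since $\alpha d_0\leq 4$), whereas the hypothesis of Theorem~\ref{prop3} only guarantees $\epsilon<\frac{\alpha d_0}{9(r+1)}$ --- too weak by a factor between roughly two and four, for every admissible $(\alpha,d_0)$; the unused $\alpha^2$ branch of the $\min$ cannot close this (take $\alpha=d_0$). So what your argument actually proves is the theorem with $9(r+1)$ replaced by roughly $36(r+1)$. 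This is not mere sloppiness on your side: the paper's own sharper result, Theorem~\ref{Th2}, specialized via $\kappa\geq\alpha/2$ and $1-\beta\geq d_0/2$, yields exactly the same threshold $\epsilon\lesssim\frac{\alpha d_0}{36(r+1)}$ and likewise does not recover the constant of \cite{BRRT12}; that constant rests on the different (and $\alpha^2$-producing) accounting in the original proof, which neither you nor the paper reproduces. To repair your write-up you must either perform a constant chase that genuinely achieves $c<9$ --- which this route does not --- or state the conclusion under the correspondingly stronger noise hypothesis.
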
 
  
Note that the above robustness result only deals with input data matrices \emph{without duplicates nor near duplicates} of the columns of $W$ (because of the margin constraint). In other terms, the columns of $W$ must be isolated for the robustness result to apply.  This is a very unnatural condition. For example, in document datasets, 
separability requires that, for each topic, there exists at least one word used only by that topic; see Introduction. 
In this context, the additional margin condition requires that, for each topic, there exists \emph{one and only one} word associated with that topic, which is rather impractical. In this paper, we propose a post-processing strategy for Hottopixx so that duplicates and near duplicates are allowed in the dataset.

\subsection{Conditioning and $\kappa$-Robustly Conical Matrices}

Because the columns of the variable $X$ in \eqref{rechtLP} are not required to sum to one, it turns out that it will be easier to work with the following parameter measuring the conditioning of matrix $W$: 
\[
\kappa = \min_{1 \leq k \leq r} \min_{x \in \mathbb{R}^{r-1}_+} ||W(:,k) - W(:,\mathcal{R})x||_1, \quad \text{ where } \mathcal{R} = \{1,2,\dots,r\} \backslash \{k\}, 
\]
and the matrix $W$ is said to be \emph{$\kappa$-robustly conical}. We have that $\kappa$ is the minimum among the $\ell_1$-distances between a column of $W$ and the convex \emph{cone} generated by the other columns of $W$. 
If the columns of $W$ sum to one (which will always be assumed), $\kappa \leq 1$ and we can relate $\alpha$ and $\kappa$ as follows: 
\begin{theorem} \label{kappaalpha}
For any $\alpha$-robustly simplicial and $\kappa$-robustly conical nonnegative matrix $W$ whose columns sum to one, we have 
\[
 \kappa \quad \leq \quad \alpha \quad \leq \quad 2 \kappa. 
\]
\end{theorem}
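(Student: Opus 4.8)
The plan is to prove the two inequalities separately. The left inequality $\kappa \leq \alpha$ is immediate, while the right inequality $\alpha \leq 2\kappa$ requires a rescaling argument that crucially exploits the hypothesis that the columns of $W$ sum to one. Throughout I will write $\kappa_k$ and $\alpha_k$ for the per-column distances, i.e.\ the inner minima over $x \in \mathbb{R}^{r-1}_+$ and over $x \in \Delta^{r-1}$ respectively, so that $\kappa = \min_k \kappa_k$ and $\alpha = \min_k \alpha_k$.

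For the lower bound, I would simply observe that for each fixed $k$ the simplex $\Delta^{r-1}$ is contained in the nonnegative orthant $\mathbb{R}^{r-1}_+$. Hence the minimization defining $\kappa_k$ runs over a superset of the feasible set defining $\alpha_k$, so $\kappa_k \leq \alpha_k$ for every $k$; evaluating at the index attaining $\alpha$ gives $\kappa \leq \alpha$.

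For the upper bound I would fix the index $k$ attaining $\kappa$ and let $x^\ast \in \mathbb{R}^{r-1}_+$ attain the cone distance, writing $z = W(:,\mathcal{R})x^\ast$ and $s = \|x^\ast\|_1 = \sum_j x^\ast_j$. Let $e$ denote the all-ones vector. The key observation is that, because every column of $W$ sums to one, applying $e^T$ yields $e^T z = s$ and $e^T(W(:,k) - z) = 1 - s$. Combining this with the elementary bound $\|v\|_1 \geq |e^T v|$ gives the two facts $\|z\|_1 = e^T z = s$ (as $z \geq 0$) and $|1 - s| \leq \|W(:,k) - z\|_1 = \kappa$. The idea is then to rescale $x^\ast$ onto the simplex: setting $y = x^\ast/s \in \Delta^{r-1}$ (for $s>0$) and estimating by the triangle inequality,
\[
\|W(:,k) - W(:,\mathcal{R})y\|_1 \leq \|W(:,k) - z\|_1 + \|z - z/s\|_1 = \kappa + |1 - 1/s|\,\|z\|_1 = \kappa + |s-1| \leq 2\kappa,
\]
where the last two equalities use $\|z\|_1 = s$ and the final inequality uses $|s-1| \leq \kappa$. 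Since $y$ is feasible for the simplex problem at index $k$, this shows $\alpha \leq \alpha_k \leq 2\kappa$.

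I expect the main obstacle to be exactly this second inequality, specifically finding the right way to transfer the optimal cone point onto the simplex without inflating the distance by more than a factor of two; the column-sum-one normalization is the crucial ingredient, as it simultaneously pins down $\|z\|_1 = s$ and controls $|1-s|$ by $\kappa$. Two minor points remain to be dispatched: first, that the minimum defining $\kappa$ is attained (the objective exceeds $s - \|W(:,k)\|_1 = s-1$, so it is coercive on $\mathbb{R}^{r-1}_+$ and one may restrict to the compact set $\|x\|_1 \leq 2$); and second, the degenerate case $s=0$, i.e.\ $x^\ast = 0$, which forces $\kappa = \|W(:,k)\|_1 = 1$, whence $\alpha \leq 2 = 2\kappa$ follows from the already-noted bound $\alpha \leq 2$.
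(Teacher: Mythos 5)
Your proof is correct and follows essentially the same route as the paper's: both rescale the optimal conical approximation onto the simplex (your $W(:,\mathcal{R})x^\ast/s$ is exactly the paper's normalized point $z = y/\|y\|_1$), apply the triangle inequality to split the simplex distance into $\kappa$ plus the rescaling error $|s-1|$, and bound that error by $\kappa$; the degenerate case $x^\ast = 0$ is handled identically via $\kappa = 1$ and $\alpha \leq 2$. The only differences are cosmetic — you control $|s-1|$ via $|e^T(W(:,k)-z)| \leq \|W(:,k)-z\|_1$ rather than the reverse triangle inequality, and you additionally verify attainment of the minimum, a point the paper leaves implicit.
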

\begin{proof}
The first inequality follows directly from the definition. The second is proved in Appendix~\ref{appa}.  
\end{proof}

Therefore, it is essentially equivalent to working with $\alpha$ or $\kappa$ as they differ by a multiplicative factor of at most 2.

\subsection{Contribution and Outline of the Paper} \label{outline}

In this paper, we provide a new analysis of Hottopixx (Algorithm~\ref{balgo2}). This in turn allows us to design a post-processing strategy leading to a provably more robust variant (Algorithm~\ref{postpro}) which is applicable to any separable matrix (that is, duplicates and near duplicates are allowed in the dataset as opposed to the original robustness result from \cite{BRRT12}). \\  

In the first part of the paper (Section~\ref{ip33} ), we analyze the case where no duplicates nor near duplicates are allowed in the dataset, and focus on the following proposition: 
\begin{prop} \label{prop1} 
Suppose $\tilde{M} = M+N$ where $M$ is normalized, admits a rank-$r$ separable factorization $WH$ where $W$ is $\kappa$-robustly simplicial with $\kappa > 0$, and has the form \eqref{sepM} with  
$\max_{i,j} H'_{ij} \leq \beta \leq 1$. 
Suppose also that $||N||_{1} \leq \epsilon$ and $\epsilon$ is sufficiently small. 
Then Algorithm~\ref{balgo2} extracts a matrix $\tilde{W}$ satisfying 
$||W - \tilde{W}(:,P)||_{1} \leq \epsilon$ 
for some permutation~$P$. \vspace{0.1cm} 
\end{prop}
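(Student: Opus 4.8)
The plan is to show that, once $\epsilon$ is small enough, the $r$ largest diagonal entries of any optimal solution $X^*$ of \eqref{rechtLP} are exactly the indices of the columns of $\tilde M$ that come from $W$; the error bound $\|W - \tilde W(:,P)\|_1 \le \epsilon$ then follows for free, since each extracted column is $\tilde M(:,k) = W(:,k) + N(:,k)$ and $\|N(:,k)\|_1 \le \|N\|_1 \le \epsilon$. So I would first reorder the columns (using \eqref{sepM}) so that $M = [W, WH']$, the first $r$ columns being the columns of $W$; the target solution $X^0$ then has $\diag(X^0) = (1,\dots,1,0,\dots,0)$ with $r$ leading ones. Writing $S = \sum_{j>r} X^*(j,j)$, the trace constraint \eqref{rLPb} gives the identity $S = \sum_{k\le r}(1 - X^*(k,k))$, so it suffices to prove that $S$ is of order $\epsilon$: this forces $\max_{j>r} X^*(j,j) \le S$ to be small while $\min_{k\le r} X^*(k,k) \ge 1 - S$ stays close to $1$, and these two ranges separate as soon as $\epsilon$ is below an explicit threshold.

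The engine is a per-vertex estimate driven by feasibility and the conditioning of $W$. For each $k \le r$, constraint \eqref{rLPa} gives $\|\tilde M(:,k) - \tilde M X^*(:,k)\|_1 \le 2\epsilon$; substituting $\tilde M = WH + N$, using $\|N\|_1 \le \epsilon$ and the crude bound $\|X^*(:,k)\|_1 \le \tr(X^*) = r$ coming from \eqref{rLPd}, I would reduce this to $\|W(e_k - y)\|_1 \le (r+3)\epsilon$, where $y = H X^*(:,k) \in \mathbb{R}^r_+$. Since $W$ is $\kappa$-robustly conical, $W(:,k)$ lies at $\ell_1$-distance at least $\kappa$ from the cone of the remaining columns, which (after normalizing by $1-y_k$ when positive) yields $\kappa\,(1 - y_k) \le (r+3)\epsilon$, i.e. $y_k \ge 1 - (r+3)\epsilon/\kappa$. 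A complementary bound on the total off-vertex mass $\sum_{l\ne k} y_l$ comes from summing the rows of the approximation constraint: applying $\mathbf 1^T$ to \eqref{rLPa} and using that the columns of $M$ sum to one shows every column sum of $X^*$ equals $1$ up to an $O(\epsilon)$ term, whence $\sum_{l \ne k} y_l = O(\epsilon/\kappa)$.

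The decisive step — and the place where the assumption $\max_{i,j} H'_{ij} \le \beta < 1$ (no duplicates nor near duplicates) is indispensable — is to convert this small off-vertex mass into a bound on how much the interior columns are used to reconstruct vertex $k$. Each interior column $i>r$ has a coefficient vector $h_i = H'(:,i-r)$ that sums to one with all entries at most $\beta$, so it deposits at least a fraction $(1-\beta)$ of its reconstruction weight $X^*(i,k)$ onto coordinates $l \ne k$. Hence $(1-\beta)\sum_{i>r} X^*(i,k) \le \sum_{l\ne k} y_l = O(\epsilon/\kappa)$, which gives $\sum_{i>r} X^*(i,k) = O\!\big(\epsilon/(\kappa(1-\beta))\big)$ and therefore, since $X^*(k,k) = y_k - \sum_{i>r} (h_i)_k X^*(i,k)$, the clean per-vertex bound $1 - X^*(k,k) = O\!\big(\epsilon/(\kappa(1-\beta))\big)$ with no residual dependence on the other diagonal entries. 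This decoupling is the crux: without $\beta$ bounded away from $1$ the factor $1/(1-\beta)$ blows up, reflecting that a near-duplicate interior column is genuinely indistinguishable from a vertex and may legitimately carry a large diagonal entry.

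Summing the per-vertex bound over $k \le r$ gives $S = O\!\big(r\epsilon/(\kappa(1-\beta))\big)$, which completes the argument: taking $\epsilon$ smaller than a suitable constant multiple of $\kappa(1-\beta)/r$ makes $\min_{k\le r} X^*(k,k) \ge 1 - S$ strictly exceed $\max_{j>r} X^*(j,j) \le S$, so Algorithm~\ref{balgo2} selects exactly $\mathcal K = \{1,\dots,r\}$ and extracts $\tilde W = \tilde M(:,\mathcal K)$, yielding $\|W - \tilde W(:,P)\|_1 = \max_k\|N(:,k)\|_1 \le \epsilon$. I expect the main obstacle to be precisely the decoupling step of the third paragraph — controlling the interior "leakage" uniformly through the $(1-\beta)$ mechanism — together with the bookkeeping needed to keep every implicit constant (the $2\epsilon$ slack, the $\|X^*(:,k)\|_1\le r$ blow-up, and the column-sum perturbation) explicit enough to pin down the threshold on $\epsilon$; the remaining estimates are routine.
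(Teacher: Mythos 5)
Your proof is correct and takes essentially the same route as the paper's (Lemmas~\ref{lem2} and~\ref{lem3}, then Theorem~\ref{Th2}): a per-vertex lower bound on the diagonal entries of a feasible solution of \eqref{rechtLP}, obtained from LP feasibility, the conical conditioning applied to the coefficient $y_k$ of $W(:,k)$ in $WHX(:,k)$, and the $(1-\beta)$ mechanism, followed by the trace-based separation of vertex from non-vertex diagonal entries. The differences are only bookkeeping: the paper proves $\|X(:,j)\|_1 \le 1+\frac{4\epsilon}{1-\epsilon}$ directly from normalization instead of your cruder $\|X^*(:,k)\|_1 \le \tr(X^*) = r$ (so it never needs constraint~\eqref{rLPd}, avoids the extra factors of $r$ that turn your threshold into $O(\kappa(1-\beta)/r^2)$ rather than the paper's $\epsilon \le \frac{\kappa(1-\beta)}{9(r+1)}$), it works with any feasible rather than optimal solution, and it covers the degenerate case $\beta=1$, $\epsilon=0$ (exact duplicates) by citing \cite{BRRT12}, which your argument, requiring $\beta<1$, leaves out.
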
 

\noindent Note that the condition on the entries of $H'$ is implied by the margin constraint of Theorem~\ref{prop3} since 
\[
||M(:,j) - W(:,k)||_1 \geq d_0 \; \text{ for all } 1 \leq k \leq r 
\quad  \Rightarrow  \quad 
\max_i H(i,j) \leq \beta = 1 - \frac{d_0}{2} , 
\] 
see Lemma~\ref{lem4}. Hence, by Theorem~\ref{prop3}, Proposition~\ref{prop1} holds for $\epsilon < \frac{ \min(2 \alpha (1-\beta),\alpha^2)}{9(r+1)}$. 
In Section~\ref{ip33}, we prove that 
\begin{itemize}
\item $\epsilon \leq \frac{\kappa (1- \beta)}{9(r+1)}$ is sufficient for Proposition~\ref{prop1} to hold (Theorem~\ref{Th2}), while 
\item $\epsilon <  \frac{\kappa (1-\beta)}{(1-\beta)(r-1)+1}$ is necessary  for Proposition~\ref{prop1} to hold for any $r\geq 3$ and $\beta < 1$ (Theorem~\ref{nc1}). 
\end{itemize} 
Hence, our analysis gets rid of the term $\alpha^2$ from Theorem~\ref{prop3}, and is close to being tight.

In the second part of the paper (Section~\ref{ip2}), we do not make any assumption on the input separable matrix, and focus on the following proposition: 
\begin{prop} \label{prop2}
Suppose $\tilde{M} = M+N$ where $M$ is normalized and admits a rank-$r$ separable factorization $WH$ where $W$ $\kappa$-robustly simplicial with $\kappa > 0$. 
Suppose also that $||N||_{1} \leq \epsilon$ and $\epsilon$ is sufficiently small.  
Then Algorithm~\ref{balgo2} extracts a matrix $\tilde{W}$ satisfying $||W - \tilde{W}(:,P)||_{1} \leq \delta$ for some permutation $P$ and some $\delta \geq 0$. \vspace{0.1cm} 
\end{prop}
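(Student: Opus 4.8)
The plan is to analyze an optimal solution $X^*$ of \eqref{rechtLP} directly and to control where its diagonal mass sits, so that the top-$r$ selection rule of Algorithm~\ref{balgo2} can be tracked even when the data contain duplicates and near duplicates of the columns of $W$. The first thing to notice is that $\kappa > 0$ forces the $r$ columns of $W$ to be genuinely distinct and pairwise separated, so the duplication can only occur among the remaining columns of $M$: there may be several indices $j$ with $M(:,j)$ equal or close to a single vertex $W(:,k)$. The target is to produce a permutation $P$ and a value $\delta$, of the order of the largest near-duplicate radius plus a multiple of $\epsilon$, with $||W - \tilde{W}(:,P)||_1 \le \delta$.

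I would begin with the feasibility bound already recorded in the excerpt: $X^0$ of \eqref{sepM} is feasible, so $p^T \diag(X^*) \le p^T \diag(X^0)$, while \eqref{rLPb}--\eqref{rLPc} fix the total diagonal mass at $r$ with each entry in $[0,1]$. The core step is then a mass-localization argument. Partitioning the columns of $\tilde{M}$ into clusters according to the vertex $W(:,k)$ they lie near, together with a leftover set of genuine interior points bounded away from every vertex by the robustly simplicial/conical geometry, I would establish two estimates: first, every vertex cluster must carry total diagonal mass within $O(\epsilon/\kappa)$ of $1$, since otherwise the reconstruction constraint \eqref{rLPa} would force $W(:,k)$ to be recovered cheaply from the cone of the other vertices, contradicting $\kappa$-robust conicality; second, the interior points carry only $O(\epsilon/\kappa)$ diagonal mass in total, as they are reconstructible from the vertex clusters at negligible cost. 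This parallels the reasoning behind Proposition~\ref{prop1} and Theorem~\ref{Th2}, the new feature being that a vertex's unit of mass is now distributed across a whole cluster rather than pinned to one column.

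The decisive step is to turn ``mass concentrates near each vertex'' into ``the $r$ largest diagonal entries hit each vertex exactly once''. Here the distinctness of the entries of $p$ is essential: because $p^T \diag(X)$ is minimized and the $p_i$ are pairwise distinct, the optimum places a cluster's mass on the single member of smallest $p$-value whenever the members are interchangeable for reconstruction, and a perturbation estimate in $\epsilon$ shows that for near-duplicate clusters at most $O(\epsilon)$ mass can leak onto the other members. Consequently, for $\epsilon$ small relative to $\kappa$ and to the inter-cluster separation, each vertex cluster contributes exactly one diagonal entry of value $1 - O(\epsilon)$ while every other entry (interior points and the non-selected cluster members) is $O(\epsilon)$; the index set $\mathcal{K}$ of Algorithm~\ref{balgo2} therefore selects precisely one representative from each of the $r$ clusters. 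Reading off the resulting matching gives a permutation $P$ and the bound $||W - \tilde{W}(:,P)||_1 \le \delta$ with $\delta$ equal to the largest cluster radius plus a term of order $\epsilon$.

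The main obstacle is exactly this coupling of localization and concentration: one must simultaneously keep the interior points below the selection threshold and prevent a single cluster from supplying two of the top-$r$ entries. Exact duplicates are clean, since the tie-breaking in $p$ sends all of a cluster's mass to one column; the genuine difficulty is the near-duplicate case, where the LP may distribute mass among several columns that are all close---but not equal---to the same vertex, and where one must quantify how far the selected representative can drift and how small $\epsilon$ must be, relative to $\kappa$ and the cluster geometry, for the top-$r$ test to remain unambiguous. It is precisely this analysis that fixes the admissible $\delta$ and explains why, without the extra margin hypothesis of Proposition~\ref{prop1}, one can only guarantee recovery up to $\delta$ rather than up to $\epsilon$.
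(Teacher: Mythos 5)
There is a genuine gap, and it sits exactly at what you call the decisive step. Your claim that, because the entries of $p$ are distinct, the optimum ``places a cluster's mass on the single member of smallest $p$-value'' with at most $O(\epsilon)$ leakage onto the other members is false for near duplicates, and the paper constructs an explicit counterexample to it. In the proof of Theorem~\ref{th7} (Appendix~\ref{appc}, together with Lemma~\ref{optsol}), the noise $N$ is designed so that the \emph{unique} way to reconstruct the vertex column $\tilde{M}(:,r)$ within the $2\epsilon$ tolerance is the uniform combination $x^{\dagger} = \frac{1}{r-1}e$ over a cluster of $r-1$ near duplicates. The constraint $X(i,j) \leq X(i,i)$ in \eqref{rLPd} then forces \emph{every} member of that cluster to have diagonal entry at least $\frac{1}{r-1}$: the leaked mass is $\Theta(1)$ per member, not $O(\epsilon)$, and it is forced by feasibility, so no choice of $p$ can remove it. With an adversarial $p$, the trace constraint $\tr(X)=r$ then starves one true vertex of diagonal weight entirely ($X^*(r,r)=0$), the top-$r$ rule of Algorithm~\ref{balgo2} extracts several members of a single cluster, and the error is $\kappa + \epsilon$ whenever $\epsilon > \kappa/(r-1)^2$. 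Exact duplicates are indeed clean, as you say, but the near-duplicate case you flag as ``the genuine difficulty'' is not merely harder to quantify --- the concentration statement you need is simply not true.

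It is worth noting how the paper actually disposes of Proposition~\ref{prop2}: it never proves it for Algorithm~\ref{balgo2} with any meaningful $\delta$, and explicitly leaves open (Remark after Theorem~\ref{th7}) whether \emph{any} positive noise bound makes plain Hottopixx robust in the presence of duplicates; what it proves are necessary conditions (Corollary~\ref{cor2}, Theorems~\ref{lbdelta} and~\ref{th7}) and a positive result only for the modified Algorithm~\ref{postpro} (Theorem~\ref{mainTh}). The first half of your argument --- that each cluster $\Omega_k^{\rho}$ carries total diagonal mass close to $1$, hence interior points carry little --- is correct and is exactly Lemma~\ref{lem5}; but precisely because the mass can be spread uniformly inside a cluster, the paper replaces the top-$r$ selection by the clustering post-processing of Algorithms~\ref{postpro} and~\ref{cluster}, which aggregates the diagonal weight over each cluster before thresholding at $\frac{r}{r+1}$. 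Your proof would become correct if, after your localization step, you abandoned the top-$r$ rule and selected clusters by aggregated weight --- but that proves the theorem for Algorithm~\ref{postpro}, not the stated claim about Algorithm~\ref{balgo2}.
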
 

\noindent We first show that it is necessary for Proposition~\ref{prop2} to hold  that $\epsilon <  \frac{\kappa}{r-1}$ (Corollary~\ref{cor2}), and that $\delta \geq 3 \frac{\epsilon}{\alpha} + \frac{3}{2} \epsilon$ for any $\epsilon < \frac{\alpha}{2}$ (Theorem~\ref{lbdelta}). (We also show that this lower bound on $\delta$ applies to a broader class of separable NMF algorithms.) 
Then, we propose a post-processing of the solution of the linear program \eqref{rechtLP} (see Algorithm~\ref{postpro}) for which the following result holds: \vspace{0.1cm} 

\noindent \textbf{(Theorem~\ref{mainTh})}   \emph{Let $M= WH$  be a normalized $r$-separable matrix where $W$ is $\kappa$-robustly conical with $\kappa > 0$. 
  Let also $\tilde{M} = M+N$  with $||N||_{1} \leq \epsilon$. If 
   \[ 
  \epsilon < \frac{\omega \kappa}{99(r+1)}, 
  \]  
  where $\omega = \min_{i \neq j} ||W(:,i)-W(:,j)||_1$ (note that $\omega\geq \kappa$),   then Algorithm~\ref{postpro} extracts a matrix $\tilde{W}$ such that 
  \[
  ||W - \tilde{W}(:,P)||_{1} \leq 49(r+1) \frac{\epsilon}{\kappa} + 2\epsilon, \quad \text{   for some permutation $P$.} 
  \]}
  Because the necessary condition $\epsilon <  \frac{\kappa}{r-1}$ also applies to Algorithm~\ref{postpro}, the bound for~$\epsilon$ of Theorem~\ref{mainTh} is tight up to a factor $\omega$ (and some constant multiplicative factor). 
  Moreover, because of the necessary condition on $\delta$ (see above), 
  Theorem~\ref{mainTh} is tight up to a factor $r$ (and some constant multiplicative factor). 
 Finally, we show that it is necessary  for Proposition~\ref{prop2} to hold that $\epsilon \leq \frac{\kappa}{(r-1)^2}$ for any $\delta < \kappa + \epsilon$  (Theorem~\ref{th7}) which demonstrates that Hottopixx cannot achieve a better bound than Algorithm~\ref{postpro}. We also compare Algorithm~\ref{postpro} with the algorithm of Arora et al.~\cite{AGKM11} in Section~\ref{aror}.  

In the last part of the paper (Section~\ref{ne}), we illustrate our results on some synthetic datasets: we show that the post-processing makes Hottopixx more robust to noise, and able to deal with duplicates and near duplicates of the columns of $W$.   \vspace{0.2cm}

\subsection{Notation} 

The set of  $m$-by-$n$ real matrices is denoted $\mathbb{R}^{m \times n}$; for $A \in \mathbb{R}^{m \times n}$, we denote the $j$th column of $A$ by $A(:,j)$, the $i$th row of $A$ by $A(i,:)$, and the entry at position $(i,j)$ by $A(i,j)$; for $b \in \mathbb{R}^{m \times 1} = \mathbb{R}^{m}$, we denote the $i$th entry of $b$ by $b(i)$.   
Notation $A(\mathcal{I},\mathcal{J})$ refers to the submatrix of $A$ with row and column indices respectively in $\mathcal{I}$ and $\mathcal{J}$. 
The matrix $A^T$ is the transpose of $A$.  
The $\ell_1$-norm $||.||_1$ of a vector is defined as $||b||_1 = \sum_i |b(i)|$ and of a matrix as $||A||_1 = \max_j ||A(:,j)||_1$. The $\ell_\infty$-norm $||.||_\infty$ of a vector is defined as $||b||_\infty = \max_i |b(i)|$. 
We will denote by $E_{n}$ the $n$-by-$n$ matrix of all-ones, $0_{m \times n}$ the $m$-by-$n$ the matrix of all-zeros, and $I_n$ the $n$-by-$n$ identity matrix. We will also denote $e_i$ the $i$th column of the identity matrix, $e$ the all-one vector and $0$ the all-zero vector; their dimensions will be clear from the context. 
The vector of the diagonal entries of a matrix $A$ is denoted $\diag(A)$ while its trace is denoted $\tr(A) = e^T \diag(A)$. For a set $\mathcal{K}$, $|\mathcal{K}|$ denotes its cardinality. \vspace{0.2cm}

\section{Analysis without Duplicates nor Near Duplicates} \label{ip33} 

In this section, we focus on Proposition~\ref{prop1}: we show that $\epsilon \leq \frac{\kappa (1- \beta)}{9(r+1)}$ is sufficient for Proposition~\ref{prop1} to hold (Theorem~\ref{Th2}), while $\epsilon <  \frac{\kappa (1-\beta)}{(r+1)(1-\beta)+1}$ is necessary for any $r\geq 3$ and $\beta < 1$ (Theorem~\ref{nc1}). 

\begin{lemma} \label{lem2}
Suppose $\tilde{M} = M+N$ where $M$ is normalized and $||N||_{1} \leq \epsilon < 1$, and suppose $X$ is a feasible solution of \eqref{rechtLP}. Then, for all $1 \leq j \leq n$,  
\[
||X(:,j)||_1 \leq   1 +  \frac{4\epsilon}{1-\epsilon} 
\quad 
\text{ and } 
\quad 
||M(:,j) - M X(:,j)||_1 \leq \frac{4\epsilon}{1-\epsilon}. 
\]
\end{lemma}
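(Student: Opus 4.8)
The plan is to prove both bounds by exploiting the normalization of $M$ together with the approximation constraint \eqref{rLPa}. The key trick is to test the residual against the all-ones vector $e$: since the columns of $M$ sum to one we have $e^T M = e^T$, and since $||N(:,k)||_1 \leq ||N||_1 \leq \epsilon$ the scalars $\eta_k := e^T N(:,k)$ satisfy $|\eta_k| \leq \epsilon$ for every $k$. This converts the $\ell_1$ residual bound, which by itself does not control the total mass of a column of $X$, into a scalar inequality on $||X(:,j)||_1$.

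First I would establish the bound on $||X(:,j)||_1$. Applying $e^T$ to the $j$th column of $\tilde{M}-\tilde{M}X$ and using $|e^T v| \leq ||v||_1$ together with \eqref{rLPa} gives $|e^T \tilde{M}(:,j) - e^T \tilde{M}X(:,j)| \leq 2\epsilon$. Expanding each term via $\tilde{M}=M+N$, $e^T M = e^T$, and $X \geq 0$ (so that $e^T X(:,j) = ||X(:,j)||_1$), I get $e^T \tilde{M}(:,j) = 1+\eta_j$ and $e^T \tilde{M}X(:,j) = ||X(:,j)||_1 + \sum_k X(k,j)\eta_k$, where the last sum is at most $\epsilon\,||X(:,j)||_1$ in absolute value. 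Keeping the direction that isolates $||X(:,j)||_1$ from above yields $(1-\epsilon)\,||X(:,j)||_1 \leq 1+3\epsilon$, that is, $||X(:,j)||_1 \leq \frac{1+3\epsilon}{1-\epsilon} = 1 + \frac{4\epsilon}{1-\epsilon}$, which is the first claim.

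For the second bound I would rewrite the noiseless residual in terms of the noisy one, $M(:,j) - MX(:,j) = \bigl(\tilde{M}(:,j) - \tilde{M}X(:,j)\bigr) - N(:,j) + NX(:,j)$, and apply the triangle inequality. Using the operator-norm bound $||NX(:,j)||_1 \leq ||N||_1\,||X(:,j)||_1 \leq \epsilon\,||X(:,j)||_1$ (immediate from the definition of $||N||_1$ and $X\geq 0$), then feeding in \eqref{rLPa} and the bound on $||X(:,j)||_1$ just proved, gives $||M(:,j)-MX(:,j)||_1 \leq 2\epsilon + \epsilon + \epsilon\bigl(1+\tfrac{4\epsilon}{1-\epsilon}\bigr)$. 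A short simplification collapses the right-hand side to exactly $\frac{4\epsilon}{1-\epsilon}$, completing the argument.

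The main (and essentially only) obstacle is selecting the right linear functional against which to test the residual: the constraints \eqref{rLPc}--\eqref{rLPd} and the trace condition control $X$ only entrywise and in aggregate, so the approximation constraint alone says nothing about $\sum_k X(k,j)$. It is precisely the normalization $e^T M = e^T$ that lets the projection onto $e$ turn the vector residual into a scalar inequality on the column sums. Everything else is bookkeeping with the triangle inequality, and the hypothesis $\epsilon < 1$ is used only to guarantee $1-\epsilon > 0$ so that the final fractions are well defined.
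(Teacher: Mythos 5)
Your proof is correct and follows essentially the same route as the paper's: the paper derives $(1-\epsilon)\,||X(:,j)||_1 \leq 1+3\epsilon$ via the reverse triangle inequality and the fact that $||MX(:,j)||_1 = ||X(:,j)||_1$ for nonnegative normalized $M$, which is exactly your $e^T$-testing argument in norm language, and your second step (decomposing $M(:,j)-MX(:,j)$ into the noisy residual plus noise terms and simplifying to $\frac{4\epsilon}{1-\epsilon}$) coincides with the paper's computation term for term.
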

\begin{proof}
For all $1 \leq j \leq n$, 
\[
1 - \epsilon \leq ||M(:,j)||_1 - ||N(:,j)||_1 \leq ||M(:,j)+N(:,j)||_1 = ||\tilde{M}(:,j)||_1, 
\] 
from which we obtain 
\begin{align*}
2 \epsilon  \geq  ||\tilde{M}(:,j) - \tilde{M} X(:,j)||_1  
& \geq   ||\tilde{M} X(:,j)||_1 - ||\tilde{M}(:,j)||_1 \\
& \geq ||{M} X(:,j)||_1 - ||N X(:,j)||_1 - (1+\epsilon) \\ 
& \geq   ||X(:,j)||_1 - \epsilon ||X(:,j)||_1 - 1 - \epsilon, 
\end{align*} 
since the columns of $M$ sum to one and $M$ and $X$ are nonnegative. This implies that 
$||X(:,j)||_1 \leq  \frac{1+3\epsilon}{1-\epsilon} =  1+ \frac{4\epsilon}{1-\epsilon}$, and  
$||NX(:,j)||_1 \leq ||N||_{1} ||X(:,j)||_1 \leq \epsilon \left( \frac{1+3\epsilon}{1-\epsilon} \right)$. 
We then have 
\begin{align*}
2 \epsilon  \geq ||\tilde{M}(:,j) - \tilde{M} X(:,j)||_1 
&  = ||{M}(:,j) + N(:,j) - ({M}+N) X(:,j)||_1 \\
&  \geq ||{M}(:,j) - M X(:,j)||_1 - \epsilon - \epsilon \left(  \frac{1+3\epsilon}{1-\epsilon} \right) , 
\end{align*}
hence $||{M}(:,j) - M X(:,j)||_1 \leq 3 \epsilon + \epsilon \left(  \frac{1+3\epsilon}{1-\epsilon} \right) = \frac{4\epsilon}{1-\epsilon}$. \vspace{0.2cm} 
\end{proof}

\begin{remark}
Note that a sum-to-one constraint on the columns of $X$ could be added to the model~\eqref{rechtLP} while keeping linearity, and would make the analysis simpler. In fact, we would directly have $||X(:,j)||_1 = 1$ and $||M(:,j) - M X(:,j)||_1 \leq {4\epsilon}$ for all $j$, 
and the error bounds from Theorems~\ref{Th2} and \ref{mainTh} could be slightly improved. 
However, we stick in this paper with the original formulation proposed in \cite{BRRT12}. \vspace{0.2cm}
\end{remark}

\begin{lemma} \label{lem3}
Let $\tilde{M} = M+N$ where $M$ is normalized, admits a rank-$r$ separable factorization $WH$ where $W$ is $\kappa$-robustly conical with $\kappa > 0$, and $||N||_{1} \leq \epsilon < 1$, and has the form \eqref{sepM} with 
$\max_{i,j} H'_{ij} \leq \beta < 1$. Let also $X$ be any feasible solution of \eqref{rechtLP}, then 
\[
X(j,j) \geq  1 - \frac{8 \epsilon}{\kappa (1-\beta) (1-\epsilon)}, 
\]
for all $j$ such that $M(:,j) = W(:,k)$ for some $1 \leq k \leq r$.  
\end{lemma}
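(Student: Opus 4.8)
The plan is to show that feasibility of $X$ forces its diagonal entry at a vertex column to be close to $1$, by tracking how much total weight $X$ must place on the corresponding column of $W$. First I would assume, without loss of generality (relabelling the columns), that the permutation $\Pi$ in \eqref{sepM} is the identity, so that $M(:,i)=W(:,i)$ for $1\le i\le r$ and $M(:,i)=WH'(:,i-r)$ for $r<i\le n$. Since $\kappa>0$ forces the columns of $W$ to be distinct, a column index $j$ with $M(:,j)=W(:,k)$ then satisfies $j=k\le r$. The key observation is that $MX(:,j)$ lies in $\cone(W)$: writing each $M(:,i)$ as a nonnegative combination of the columns of $W$, one obtains $MX(:,j)=Wy$ with
\[
y=\sum_{i=1}^r e_i\,X(i,j)+\sum_{i=r+1}^n H'(:,i-r)\,X(i,j)\in\mathbb{R}^r_+,
\qquad
y_k=X(j,j)+\sum_{i=r+1}^n H'(k,i-r)\,X(i,j).
\]
By Lemma~\ref{lem2} applied to the vertex column $j$, we have $\|W(:,k)-Wy\|_1=\|M(:,j)-MX(:,j)\|_1\le \frac{4\epsilon}{1-\epsilon}$.

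The main step, and the place where the conditioning of $W$ enters, is to convert this small residual into a lower bound on $y_k$. If $y_k<1$ I would factor
\[
W(:,k)-Wy=(1-y_k)\Bigl(W(:,k)-W(:,\mathcal{R})z\Bigr),\qquad z_l=\tfrac{y_l}{1-y_k}\ge 0\ (l\neq k),
\]
and invoke the $\kappa$-robustly conical property on the bracketed term, which gives $\|W(:,k)-W(:,\mathcal{R})z\|_1\ge\kappa$ and hence $(1-y_k)\kappa\le\frac{4\epsilon}{1-\epsilon}$, i.e. $y_k\ge 1-\frac{4\epsilon}{\kappa(1-\epsilon)}$; the case $y_k\ge 1$ makes this inequality trivial. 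This factoring argument is the only delicate point, since it requires $z\in\mathbb{R}^{r-1}_+$ (which holds because $y\ge 0$ and $1-y_k>0$) in order for the conical conditioning to be applicable.

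Finally I would translate the bound on $y_k$ into one on $X(j,j)$. Using $H'(k,\cdot)\le\beta$ and $X(i,j)\ge 0$, together with $\sum_{i=r+1}^n X(i,j)\le\|X(:,j)\|_1-X(j,j)\le 1+\frac{4\epsilon}{1-\epsilon}-X(j,j)$ from Lemma~\ref{lem2}, the expression for $y_k$ yields
\[
y_k\le (1-\beta)X(j,j)+\beta\Bigl(1+\tfrac{4\epsilon}{1-\epsilon}\Bigr).
\]
Combining this with $y_k\ge 1-\frac{4\epsilon}{\kappa(1-\epsilon)}$ and solving for $X(j,j)$ gives $X(j,j)\ge 1-\frac{4\epsilon}{\kappa(1-\beta)(1-\epsilon)}-\frac{4\beta\epsilon}{(1-\beta)(1-\epsilon)}$. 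Since the columns of $W$ sum to one we have $\kappa\le 1$, hence $\beta\kappa\le 1$, so the second error term is dominated by the first; this collapses the estimate to the claimed bound $X(j,j)\ge 1-\frac{8\epsilon}{\kappa(1-\beta)(1-\epsilon)}$.
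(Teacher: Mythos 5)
Your proof is correct and takes essentially the same route as the paper's: the same application of Lemma~\ref{lem2}, the same decomposition of $MX(:,j)$ isolating the total coefficient of $W(:,k)$ (your $y_k$ is exactly the paper's $\eta = X(j,j) + H(k,\mathcal{J})X(\mathcal{J},j)$), the same factoring trick $(1-y_k)\bigl(W(:,k)-W(:,\mathcal{R})z\bigr)$ to invoke $\kappa$-robust conicality, and the same final absorption of the $\beta$-term into the $\kappa$-term using $\kappa,\beta\le 1$. The remaining differences (relabelling so that $\Pi=I$ and working with $H'$ instead of $H$ with index sets, plus spelling out the trivial case $y_k\ge 1$) are purely presentational.
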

\begin{proof} 
Let $\mathcal{K}$ be the set of $r$ indices such that $M(:,\mathcal{K}) = W$. Let also $1 \leq k \leq r$ and denote $j = \mathcal{K}(k)$ so that $M(:,j) = W(:,k)$. 
By Lemma~\ref{lem2}, 
\begin{equation} \label{ubo}
||W(:,k) - WH X(:,j)||_1 \leq \frac{4\epsilon}{1-\epsilon}. 
\end{equation}
Since $H(k,j) = 1$, 
\begin{align*}
WH X(:,j) & = W(:,k) H(k,:) X(:,j)  + W(:,\mathcal{R}) H(\mathcal{R},:)X(:,j) \\
& = W(:,k) \Big( X(j,j) +  H(k,\mathcal{J})X(\mathcal{J},j) \Big)  + W(:,\mathcal{R}) y, 
\end{align*} 
where $\mathcal{R} = \{1,2,\dots,r\} \backslash \{k\}$, $\mathcal{J}= \{1,2,\dots,n\} \backslash \{j\}$ and $y = H(\mathcal{R},:)X(:,j) \geq 0$. We have  
\begin{equation} \label{etaa}
\eta = X(j,j) +  H(k,\mathcal{J}) X(\mathcal{J},j) \leq X(j,j) + \beta \left(1+ \frac{4\epsilon}{1-\epsilon} -X(j,j)\right), 
\end{equation}
since $||H(k,\mathcal{J})||_{\infty} \leq \beta$ and $||X(:,j)||_1 \leq  1 + \frac{4 \epsilon}{1-\epsilon}$ (Lemma~\ref{lem2}). Hence 
\begin{equation} \label{eta}
||W(:,k) - WH X(:,j)||_1 \geq (1-\eta) \left\|W(:,k) - W(:,\mathcal{R}) \frac{y}{1-\eta}\right\|_1 \geq (1-\eta) \kappa. 
\end{equation}
Combining Equations~\eqref{ubo}, \eqref{etaa} and \eqref{eta},  we obtain 
\[
 1 - \left( X(j,j) + \beta \left(  1 + \frac{4 \epsilon}{1-\epsilon} -X(j,j)\right) \right) 
 \leq \frac{4\epsilon}{\kappa (1-\epsilon)} , 
\]
which gives, using the fact that $\kappa, \beta \leq 1$, 
\begin{align*}
 X(j,j) 
& \geq  1 - \frac{8 \epsilon}{\kappa (1-\beta) (1-\epsilon)}. 
\end{align*}
\end{proof}

\begin{theorem} \label{Th2} 
It is sufficient for Proposition~\ref{prop1} to hold that 
\[
\epsilon \leq 
\frac{\kappa (1- \beta)}{9 (r+1)}.  
\] 
\end{theorem}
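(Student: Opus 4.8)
The plan is to reduce everything to showing that the index set $\mathcal{K}$ chosen in Step~2 of Algorithm~\ref{balgo2} coincides \emph{exactly} with the set of indices $j$ for which $M(:,j)$ equals a column of $W$. Once this is granted, the conclusion of Proposition~\ref{prop1} is immediate: writing $\mathcal{K}$ for this set, we have $\tilde{M}(:,\mathcal{K}) = M(:,\mathcal{K}) + N(:,\mathcal{K})$, and since $M(:,\mathcal{K}) = W$ up to a column permutation $P$, we get $||W - \tilde{W}(:,P)||_1 = ||N(:,\mathcal{K})||_1 \leq ||N||_1 \leq \epsilon$. So the entire argument boils down to proving that the $r$ largest diagonal entries of an optimal solution $X^*$ are precisely those indexed by the columns of $W$.

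To separate these ``good'' indices (those $j$ with $M(:,j) = W(:,k)$ for some $k$; the assumption $\max_{i,j} H'_{ij} \leq \beta < 1$ together with $\kappa > 0$ rules out duplicates of the columns of $W$, so there are exactly $r$ of them) from the ``bad'' ones, I would combine a lower bound on the good diagonal entries with an upper bound on the bad ones. The lower bound is exactly Lemma~\ref{lem3}: every good index $j$ satisfies $X^*(j,j) \geq 1 - \frac{8\epsilon}{\kappa(1-\beta)(1-\epsilon)}$. For the upper bound, the key step that does the real work is to exploit the trace constraint~\eqref{rLPb} together with nonnegativity of $\diag(X^*)$: summing the Lemma~\ref{lem3} bound over the $r$ good indices shows they already account for at least $r - \frac{8r\epsilon}{\kappa(1-\beta)(1-\epsilon)}$ of the total trace $r$, so the remaining (bad) diagonal entries sum to at most $\frac{8r\epsilon}{\kappa(1-\beta)(1-\epsilon)}$; since each diagonal entry is nonnegative, every individual bad entry is bounded above by this same quantity.

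It then remains to check that the good lower bound strictly exceeds the bad upper bound, i.e. that $1 - \frac{8\epsilon}{\kappa(1-\beta)(1-\epsilon)} > \frac{8r\epsilon}{\kappa(1-\beta)(1-\epsilon)}$, which rearranges to $\kappa(1-\beta)(1-\epsilon) > 8\epsilon(r+1)$. Substituting $\epsilon \leq \frac{\kappa(1-\beta)}{9(r+1)}$ and using $\kappa(1-\beta) \leq 1$ (so that $\epsilon \leq \frac{1}{18}$ and hence $1-\epsilon \geq \frac{17}{18}$) reduces the claim to the elementary inequality $\frac{17}{18}\kappa(1-\beta) > \frac{8}{9}\kappa(1-\beta)$, which holds even at equality in the hypothesis on $\epsilon$. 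This strict gap forces all $r$ good diagonal entries to be strictly larger than every bad one, so Step~2 of Algorithm~\ref{balgo2} selects exactly the good indices, as required. The only genuinely delicate point is the trace-plus-nonnegativity argument of the previous paragraph; everything else is either a direct appeal to Lemma~\ref{lem3} or routine arithmetic, so I expect no serious obstacle beyond keeping the constants consistent.
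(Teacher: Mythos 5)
Your proposal is correct and takes essentially the same route as the paper's proof: a lower bound on the diagonal entries indexed by the columns of $W$ via Lemma~\ref{lem3}, followed by the trace-plus-nonnegativity argument to bound all the remaining diagonal entries, and your comparison $1 - \frac{8\epsilon}{\kappa(1-\beta)(1-\epsilon)} > \frac{8r\epsilon}{\kappa(1-\beta)(1-\epsilon)}$ is algebraically identical to the paper's threshold condition $X(k,k) > \frac{r}{r+1}$ for the ``good'' entries. The only case you leave out is the degenerate one $\beta = 1$ (allowed by Proposition~\ref{prop1}, and forcing $\epsilon = 0$ under the hypothesis of Theorem~\ref{Th2}), where duplicates of the columns of $W$ may occur and your claim that there are exactly $r$ good indices fails; the paper dispatches this boundary case by invoking the noiseless analysis of \cite{BRRT12}.
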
  
\begin{proof} 
If $\epsilon = 0$, the proof is given in \cite[Th.\@ 3.1]{BRRT12}: for each $1 \leq k \leq r$, there exists a unique $j \in \{1,2,\dots,n\}$ such that $M(:,j) = W(:,k)$ and $X(j,j) = 1$ (this follows easily from the fact that the entries of $p$ are distinct). (Note that, in the noiseless case when $\epsilon = 0$, duplicates and near duplicates are allowed in the dataset since $\beta$ can be equal to one.) 
Otherwise $\epsilon > 0$ and $\beta < 1$. Let $X$ be a feasible solution of \eqref{rechtLP} (which always exists since  the feasible set of \eqref{rechtLP} is non-empty). If we prove that the $r$ diagonal entries of $X$ corresponding to the columns of $W$ are larger than all the other ones (because $\beta < 1$, the are no duplicates of the columns of $W$ in the dataset), then we are done. 
In fact, these columns will then be identified by Algorithm~\ref{balgo} and we will have 
$|| W - \tilde{W}(:,P)||_{1} \leq \epsilon$ for some permutation~$P$. (Notice that we do not need an optimal solution: any feasible solution identifies the columns of $W$.) 

Let $\mathcal{K}$ be the set of $r$ indices such that $M(:,\mathcal{K}) = W$. Assume that 
\begin{equation} \label{rtp}
X(k,k) > \frac{r}{r+1} \qquad \text{ for all $k \in \mathcal{K}$.} 
\end{equation}
Since $\tr(X) = r$ and $X \geq 0$, we have 
\[
\sum_{j \notin \mathcal{K}} X(j,j) 
= r - \sum_{k \in \mathcal{K}} X(k,k) 
< r - r \frac{r}{r+1} 
= \frac{r}{r+1} 
< X(k,k) \quad  \text{ for all $k \in \mathcal{K}$}, 
\]
implying that $X(j,j) < X(k,k)$ for all $k \in \mathcal{K}, j \notin \mathcal{K}$ which gives the result. It remains to show that \eqref{rtp} holds.  
By Lemma~\ref{lem3}, 
\begin{align*}
X(k,k) & \geq  1 - \frac{8\epsilon}{\kappa (1-\beta)(1-\epsilon)}
 > 1 - \frac{9 \epsilon}{\kappa (1-\beta)}, 
\end{align*}
since $\frac{8}{1-\epsilon} < 9$ for any $\epsilon \leq \frac{\kappa (1-\beta)}{9(r+1)} \leq \frac{1}{18}$ as $\kappa > 0$, $\beta < 1$ and $r \geq 1$. Finally, for $\epsilon \leq \frac{(1-\beta) \kappa}{9 (r+1)}$, $X(i,i) > \frac{r}{r+1}$ and the proof is complete. 
\end{proof}

\begin{remark}
The proof of Theorem~\ref{Th2} actually does not make use of the constraints $X(i,j) \leq X(i,i)$ for all $i,j$. 
The reason is that the assumption $\max_{i,j} H'_{ij} \leq \beta < 1$ implies that there is no duplicate of the columns of $W$ in the dataset (if $\beta = 1$, $\epsilon = 0$ and Algorithm~\ref{balgo} is guaranteed to work \cite[Th.\@ 3.1]{BRRT12}). This implies that for being able to reconstruct sufficiently well each column of $W$, the corresponding diagonal entry of $X$ must be large independently of the other entries of the corresponding column of $X$.  

Therefore, in case there is no duplicate in the dataset (or some some pre-processing has been used to remove them), these constraints can be discarded (a similar observation was made in \cite{BRRT12}). Moreover, since Theorem~\ref{Th2} only requires feasibility in that case, any feasible solution of the corresponding relaxed linear program will correctly identify the columns of $W$.  \vspace{0.2cm}
\end{remark}

\begin{theorem} \label{nc1} 
For Proposition~\ref{prop1} to hold when $r \geq 3$ and $\beta < 1$, 
it is necessary that 
\begin{equation} \label{ub1}
\epsilon 
< \frac{\kappa (1-\beta)}{(r-1)(1-\beta) + 1} . 
\end{equation} 
\end{theorem}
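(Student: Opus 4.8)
The plan is to prove the contrapositive: for every $r\ge 3$, $\beta<1$, and every $\epsilon\ge\frac{\kappa(1-\beta)}{(r-1)(1-\beta)+1}$, I would build an explicit instance satisfying all hypotheses of Proposition~\ref{prop1} on which Algorithm~\ref{balgo2} fails to return a $\tilde W$ with $\|W-\tilde W(:,P)\|_1\le\epsilon$. Since the sufficiency proof (Theorem~\ref{Th2}) in fact shows that \emph{every} feasible solution of \eqref{rechtLP} ranks the diagonal entries of the columns of $W$ strictly above all others, it suffices to exhibit a single feasible solution of \eqref{rechtLP} in which some non-vertex column receives a diagonal entry at least as large as that of a genuine column of $W$; the top-$r$ selection in Step~2 then misses a column of $W$, provided the cost vector $p$ makes (or the construction forces) this solution optimal.

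For the instance I would reuse the extremal family of Section~\ref{hottop}: take $W=\left(\begin{array}{c} c\,I_r \\ (1-c)\,e^T\end{array}\right)$ with $c=\kappa/2$, whose columns sum to one. A direct computation (as in the intro) gives $\|W(:,i)-W(:,j)\|_1=2c=\kappa$ and, for this particular $W$, $\alpha=\kappa=2c$, so $W$ is exactly $\kappa$-robustly conical (and $\kappa$-robustly simplicial, consistently with Theorem~\ref{kappaalpha}). To encode the margin I would append near-duplicates of $W(:,1)$ along the $r-1$ edges emanating from it, $y_i=\beta W(:,1)+(1-\beta)W(:,i)$ for $i=2,\dots,r$, so that the largest entry of each corresponding column of $H'$ equals $\beta$ and the hypothesis $\max_{i,j}H'_{ij}\le\beta$ holds. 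One checks $\|W(:,1)-y_i\|_1=(1-\beta)\kappa$ and, crucially, $\operatorname{dist}_1\!\bigl(W(:,1),\conv(y_2,\dots,y_r)\bigr)=(1-\beta)\alpha=(1-\beta)\kappa$; this is the geometric gap the noise must close.

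I would then add a noise matrix $N$ with $\|N\|_1=\epsilon$ aligned so as to pull $\tilde M(:,1)$ toward $\conv(y_2,\dots,y_r)$ and each $\tilde y_i$ toward $W(:,1)$, and construct by hand a feasible $X$ of \eqref{rechtLP} in which the diagonal weight on $W(:,1)$ is driven down and, through the trace constraint $\tr(X)=r$, redistributed onto the near-duplicates. The coupling constraints $X(i,j)\le X(i,i)$ are what make this delicate: to reconstruct $W(:,1)$ one may use each $\tilde y_i$ only up to its own diagonal entry, and to reconstruct each $\tilde y_i$ one may use $W(:,1)$ only up to $X(1,1)$. Balancing these two families of constraints against the reconstruction bound $\|\tilde M-\tilde M X\|_1\le 2\epsilon$, together with $\|X(:,j)\|_1\le 1+\tfrac{4\epsilon}{1-\epsilon}$ from Lemma~\ref{lem2}, is exactly where $(r-1)(1-\beta)+1$ is produced: the $(r-1)$ counts the edges over which the freed diagonal weight is spread, while the $+1$ comes from the self-reconstruction term of $W(:,1)$. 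Solving the resulting scalar inequality for the largest admissible drop in $X(1,1)$ should show that $X(1,1)$ can be pushed down to the level of some $\tilde y_i$'s diagonal precisely when $\epsilon\ge\frac{\kappa(1-\beta)}{(r-1)(1-\beta)+1}$.

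The hard part will be twofold. First, I must produce the \emph{exact} feasible $X$ and noise $N$ realizing the extremal trade-off, checking simultaneously nonnegativity, $X(i,i)\le1$, the coupling constraints, the trace, and the $2\epsilon$ reconstruction bound; obtaining the clean constant $(r-1)(1-\beta)+1$ rather than a loose multiple is the real content, and I expect the symmetry among the $r-1$ edges (hence the assumption $r\ge3$, ensuring at least two edges and a nondegenerate $\conv(y_2,\dots,y_r)$) to be essential. Second, I must certify that this misranking solution is the one selected by Algorithm~\ref{balgo2}, i.e. that it is optimal for the cost $p$; here I would either exploit the freedom in choosing $p$ (any vector with distinct entries) to make the bad solution strictly optimal, or argue—paralleling the Remark following Theorem~\ref{Th2}—that since the sufficiency guarantee rests on \emph{all} feasible solutions being correct, the existence of a single incorrect feasible solution already refutes the guarantee underlying Proposition~\ref{prop1}.
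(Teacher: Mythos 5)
Your high-level plan --- an explicit counterexample built on the matrix $W = \left(\begin{smallmatrix}\frac{\kappa}{2}I_r\\ (1-\frac{\kappa}{2})e^T\end{smallmatrix}\right)$, extra columns with mixing parameter $\beta$, and an adversarial cost vector $p$ that forces a bad optimal solution --- is indeed the route the paper takes (Appendix~\ref{appb}, using Lemma~\ref{lplem}). But the concrete construction you sketch has gaps that would prevent it from proving the stated bound, beyond the fact that the extremal $X$, the noise $N$, the feasibility verification and the derivation of the constant are all deferred (and these are precisely the content of the proof). First, your failure mode is self-defeating in the near-duplicate regime: if the noise pulls each $\tilde y_i=\beta W(:,1)+(1-\beta)W(:,i)+N(:,\cdot)$ toward $W(:,1)$ by $\epsilon$, the wrongly extracted column lies within $(1-\beta)\kappa-\epsilon$ of $W(:,1)$, and this is $\leq\epsilon$ --- i.e.\ no violation of Proposition~\ref{prop1} at all --- as soon as $(1-\beta)\kappa\leq 2\epsilon$. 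At the claimed threshold $\epsilon=\frac{\kappa(1-\beta)}{(r-1)(1-\beta)+1}$ this happens for every $\beta\geq\frac{r-2}{r-1}$, which is exactly the near-duplicate regime the theorem must cover. The paper avoids this by taking $N=0$ (the failure is driven purely by the $2\epsilon$ slack in \eqref{rechtLP} plus the adversarial $p$), and by making the wrongly extracted column a spread-out near duplicate of a vertex that is \emph{itself extracted}, namely $M(:,n)=\beta W(:,r)+\frac{1-\beta}{r-1}\sum_{i\neq r}W(:,i)$; the missed vertex is then at distance $\kappa\frac{r-2+\beta}{r-1}>\epsilon$ from every extracted column, which is where $r\geq3$ enters. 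Second, your edge duplicates give $\max_i H'(i,j)=\max(\beta,1-\beta)$, so for $\beta<\frac12$ your instance violates the hypothesis $\max_{i,j}H'_{ij}\leq\beta$ of Proposition~\ref{prop1}; the paper's choice $H'=\beta I_r+(E_r-I_r)\frac{1-\beta}{r-1}$ has maximum entry $\beta$ for every admissible $\beta\geq\frac1r$.

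Third, your fallback argument --- that a single incorrect \emph{feasible} solution already refutes the guarantee --- is invalid. Proposition~\ref{prop1} is a statement about the output of Algorithm~\ref{balgo2}, which ranks the diagonal of an \emph{optimal} solution of \eqref{rechtLP}; that the sufficiency proof of Theorem~\ref{Th2} happens to use feasibility only is a feature of that proof, not of the claim being refuted. You must force every optimal solution to be bad, which is what the paper does by putting entries $-K$ and $-K^2$ in $p$ and invoking Lemma~\ref{lplem} twice. Finally, the constant $(r-1)(1-\beta)+1$ does not arise from the balance you describe (self-reconstruction of $W(:,1)$ against $r-1$ edges); in the paper it comes from trace bookkeeping: the designated bad column must carry diagonal weight $1$, each of the vertices $W(:,1),\dots,W(:,r-1)$ can shed only $(1-\beta)\omega$ with $\omega=\frac{\epsilon}{\kappa(1-\beta)}$ without violating its own $2\epsilon$ reconstruction constraint, and the requirement $\omega\bigl[(r-1)(1-\beta)+1\bigr]=1$ then pins down $\epsilon=\frac{\kappa(1-\beta)}{(r-1)(1-\beta)+1}$.
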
  
\begin{proof}
See Appendix~\ref{appb}. \vspace{0.2cm}
\end{proof}  
  
  Theorem~\ref{nc1} shows that the sufficient condition derived in Theorem~\ref{Th2} is close to being tight. In particular, if $r$ is assumed to be bounded above, then it is tight up to some constant multiplicative factor (in practice $r$ is often assumed to be small). 
  We believe it is possible to improve the bound of Theorem~\ref{Th2} to match the one of 
  Theorem~\ref{nc1} (up to some constant multiplicative factor). 
  Unfortunately, we were not able to derive such a sufficient condition; this is a topic for further research. \vspace{0.2cm} 
  
    \begin{remark}[Cases $r=1,2$] Theorem~\ref{nc1} does not apply when $r=1,2$ because: 
  \begin{itemize}
  
  \item The rank-one separable NMF problem is trivial. In fact, if $M$ admits a rank-one separable factorization $wh^T$ and $\tilde{M} = M+N$ with $||N||_{1} \leq \epsilon$, then $||\tilde{M}(:,j) - w ||_{1} \leq \epsilon$ for all $j$. 
  
  \item The rank-two case is particular because it is not possible to construct very bad instances. In fact, all rank-two separable NMF problems are essentially equivalent to each other because the columns of $M$ belong to the line segment $[W(:,1),W(:,2)]$. \vspace{0.2cm}  
  \end{itemize}
  \end{remark}

To conclude this section, we provide a necessary condition for Proposition~\ref{prop2}:  

  \begin{corollary} \label{cor2}
  For Proposition~\ref{prop2} to hold for any $\delta < \frac{\kappa}{2}$ and $r \geq 3$, it is necessary that 
  \[
  \epsilon < \frac{\kappa}{r-1} , 
  \]
   \end{corollary}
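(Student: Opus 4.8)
The plan is to prove the contrapositive: if $\epsilon \geq \frac{\kappa}{r-1}$, then there is a single instance satisfying the hypotheses of Proposition~\ref{prop2} (with $r \geq 3$) on which Algorithm~\ref{balgo2} necessarily returns a matrix $\tilde{W}$ whose every column is at $\ell_1$-distance at least $\frac{\kappa}{2}$ from every column of $W$, so that $\min_P ||W - \tilde{W}(:,P)||_1 \geq \frac{\kappa}{2} > \delta$ for every $\delta < \frac{\kappa}{2}$. Since the statement is labelled a corollary of Theorem~\ref{nc1}, I would construct the witness in the same spirit as (and, where possible, directly invoke) the hard instance behind Theorem~\ref{nc1}: a normalized $r$-separable matrix $M = WH$ with $W$ both $\kappa$-robustly conical and $\kappa$-robustly simplicial, augmented with one interior ``bait'' column $d \in \conv(W)$ placed away from all vertices. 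The gain over Theorem~\ref{nc1} is that the conclusion of Proposition~\ref{prop2} is only $||W - \tilde{W}(:,P)||_1 \leq \delta$ with $\delta$ free, so I do not need the extracted error to be barely above $\epsilon$; I only need it to reach the fixed value $\frac{\kappa}{2}$, which is what lets me relax the threshold from the $\frac{\kappa(1-\beta)}{(r-1)(1-\beta)+1}$ of Theorem~\ref{nc1} to the cleaner $\frac{\kappa}{r-1}$.

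I would then proceed in four steps. First, fix an explicit $W$ (for instance a suitably scaled and shifted regular simplex), compute $\kappa$ exactly, and verify that $W$ is $\kappa$-robustly conical; record a distinguished vertex $w_1$ and the point $c^\star = W(:,\mathcal{R})x^\star$ of $\cone(W(:,\mathcal{R}))$ closest to $w_1$, so that $||w_1 - c^\star||_1 = \kappa$. Second, place the bait $d$ in the interior, far from every vertex, so that $\min_k ||d - w_k||_1$ comfortably exceeds $\frac{\kappa}{2}$ with room to absorb an $\epsilon$-perturbation. Third, exhibit a noise matrix $N$ with $||N||_1 \leq \epsilon$ and a feasible solution $X$ of \eqref{rechtLP} for $\tilde{M} = M + N$ in which the diagonal entry assigned to $w_1$ is no larger than the one assigned to $d$: the budget $\epsilon$ is spent so that, through the trace constraint $\tr(X) = r$ and the tolerance $||\tilde{M} - \tilde{M}X||_1 \leq 2\epsilon$, the weight that would normally sit on the $w_1$-column can be shifted onto the $d$-column exactly once $\epsilon$ crosses $\frac{\kappa}{r-1}$. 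Fourth, use that $p$ in \eqref{rechtLP} may be any vector with distinct entries: choosing $p$ with $p_{j_d} < p_{j_1}$ (where $j_1,j_d$ index $w_1,d$) makes an optimal solution keep $X(j_d,j_d)$ among the $r$ largest diagonal entries while dropping $X(j_1,j_1)$, so Algorithm~\ref{balgo2} extracts the column $\tilde{M}(:,j_d) = d + N(:,j_d)$, whose distance to every $w_k$ is still at least $\frac{\kappa}{2}$.

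The hard part will be Step three, and specifically extracting the exact factor $r-1$. The key point is that once $\epsilon$ reaches $\frac{\kappa}{r-1}$ the vertex $w_1$ becomes \emph{dispensable}, and this is genuinely a statement about how the trace budget $r$ redistributes rather than about reconstructing $w_1$ from $W(:,\mathcal{R})$ alone: a direct cone-reconstruction argument only buys a threshold of order $\kappa$, since for a normalized simplex the optimal coefficients satisfy $\sum_i x^\star_i \approx 1$, so perturbing $W(:,\mathcal{R})$ within an $\epsilon$-ball moves $c^\star$ by only about $\epsilon$. The delicate accounting is therefore to track how lowering $X(j_1,j_1)$ frees trace mass spread across the $r-1$ facet columns, how the $2\epsilon$ reconstruction slack lets that freed mass land on the bait, and to confirm that the resulting misassignment is not merely feasible but optimal for an admissible $p$. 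Once Step three is in place, Steps one, two and four are routine, and the triangle inequality $\min_k ||\tilde{M}(:,j_d) - w_k||_1 \geq \frac{\kappa}{2}$ gives $||W - \tilde{W}(:,P)||_1 \geq \frac{\kappa}{2} > \delta$ for every permutation $P$, which closes the argument.
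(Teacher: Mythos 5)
Your high-level strategy (contrapositive, adversarial instance, adversarial $p$, Lemma~\ref{lplem}) is the right kind of argument, but the proposal has a genuine gap, and it sits exactly where you flag it: Step three \emph{is} the corollary. No matrices $(W,H,N)$ are written down, no feasible $X$ of \eqref{rechtLP} is exhibited, none of the constraints (in particular $X(i,j)\leq X(i,i)$ and the $2\epsilon$-reconstruction of \emph{every} column) is verified, and the optimality argument is only gestured at. A plan whose ``delicate accounting'' is deferred proves nothing.

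The deeper problem is that you have misread what must be shown, and this is what sends you off to build a new construction instead of closing the argument in two lines. ``It is necessary that $\epsilon<\frac{\kappa}{r-1}$'' only requires exhibiting, for \emph{every} $\epsilon\geq\frac{\kappa}{r-1}$, one instance on which Algorithm~\ref{balgo2} errs by at least $\frac{\kappa}{2}$; any instance whose critical slack level is \emph{at most} $\frac{\kappa}{r-1}$ does this, because enlarging $\epsilon$ only enlarges the feasible set of \eqref{rechtLP}, so the bad feasible solution remains feasible, Lemma~\ref{lplem} still forces the wrong extraction, and the extraction error is a fixed geometric quantity independent of $\epsilon$. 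The instance already built for Theorem~\ref{nc1} does the whole job: it has \emph{zero} noise (the adversary's power is the LP slack, not a perturbation $N$, contrary to your Step three), its mixture columns play the role of your ``bait'', it breaks down as soon as $\epsilon\geq\frac{\kappa(1-\beta)}{(r-1)(1-\beta)+1}$, which is $\leq\frac{\kappa}{r-1}$ for every admissible $\beta$, and its extraction error is $\frac{r-2+\beta}{r-1}\kappa\geq\frac{r-2}{r-1}\kappa\geq\frac{\kappa}{2}$ precisely when $r\geq3$ --- which is the only place the hypotheses $r\geq 3$ and $\delta<\frac{\kappa}{2}$ are needed. That is the paper's proof, and it is why the statement is a corollary rather than a theorem with its own construction. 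Your stated goal of ``extracting the exact factor $r-1$'' inverts the logic of necessary conditions: a construction failing at a \emph{smaller} threshold gives a \emph{stronger} necessary condition, which trivially implies the stated one via the arithmetic inequality $\frac{\kappa(1-\beta)}{(r-1)(1-\beta)+1}\leq\frac{\kappa}{r-1}$; there is nothing to ``relax'' and no reason the witness's threshold should equal $\frac{\kappa}{r-1}$. So even if your Step three were completed, you would have solved a harder problem than the one asked --- and as written, that harder problem is left unsolved.
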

   \begin{proof}
   In fact, 
   \[
   \epsilon < \frac{\kappa (1-\beta)}{(1-\beta)(r-1)+1} \leq \frac{\kappa (1-\beta)}{(1-\beta)(r-1)} = \frac{\kappa}{r-1} , 
   \]
     while the matrix $\tilde{M}=WH+N$ constructed in the proof of Theorem~\ref{nc1} satisfies $||W - \tilde{W}(:,P)||_{1} \geq  \frac{r-2}{r-1}  \kappa \geq \frac{\kappa}{2}$ where $\tilde{W}$ is the matrix extracted by Algorithm~\ref{balgo2} and $P$ is any permutation. 
   \end{proof}

\section{Dealing with Duplicates and Near Duplicates using Post-Processing} \label{ip2}

In this section, we investigate Proposition~\ref{prop2} and propose a variant of Hottopixx (see Algorithm~\ref{postpro}) which is provably robust for \emph{any noisy separable matrix}. 
In Section~\ref{pnc}, we present a simple necessary condition for Proposition~\ref{prop2} to hold. 
In Section~\ref{ci}, we show that, for each column of $W$, there is a subset of the columns of $\tilde{M}$ close to that column of $W$ such that the sum of the corresponding diagonal entries of any feasible solution $X$ of \eqref{rechtLP} is larger than $\frac{r}{r+1}$. 
Therefore, using an appropriate post-processing of the solution $X$ of \eqref{rechtLP} (see Algorithm~\ref{postpro}), 
we can approximately recover the columns of $W$, given that the noise level $\epsilon$ is smaller than some upper bound. 
In Section~\ref{repar}, we show that Hottopixx (Algorithm~\ref{balgo2}) cannot achieve this bound which proves that Algorithm~\ref{postpro} is more robust. Finally, we compare Algorithm~\ref{balgo2} with the algorithm of Arora et al.\@~\cite{AGKM11}  in Section~\ref{aror}.

\subsection{Preliminary Necessary Conditions} \label{pnc}

Recall the aim is to identifying, among the columns of $\tilde{M}$, $r$ columns gathered in the matrix $\tilde{W}$ in such a way that \mbox{$||W - \tilde{W}(:,P)||_{1}$} $\leq \delta$ for some permutation $P$ and some $\delta \geq 0$. 
Since $||W||_{1} = 1$, we will assume that  $\delta < 1$ otherwise the separable NMF problem is trivial since the solution $\tilde{W} = 0$ gives the result. It actually makes sense to impose $\delta < {\kappa} \leq \alpha \leq 1$: this guarantees for a solution $\tilde{W}$ to have distinct columns since two columns of $W$ can potentially be at distance $\kappa$; for example with 
 \[
 W = \left( \begin{array}{cc} 
 \frac{\kappa}{2} & 0 \\ 
 0 & \frac{\kappa}{2} \\
 1-\frac{\kappa}{2} &  1-\frac{\kappa}{2}
 \end{array} \right), 
 \]
extracting twice the first column would give the result with $\delta = \kappa$, which is not desirable. 
Moreover, as shown in Section~\ref{hottop}, it is necessary that $\epsilon < \frac{\alpha}{2} \leq \kappa$ for \emph{any}  separable NMF algorithm to being able to extract approximately the columns of $W$.  

\begin{theorem} \label{lbdelta}
For any $0 \leq \epsilon < \frac{\alpha}{2}$, it is necessary that $\delta \geq \left(3 \frac{\epsilon}{\alpha} + \frac{3}{2} \epsilon \right)$ for Proposition~\ref{prop2} to hold. 
\end{theorem}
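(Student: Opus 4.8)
The plan is to prove necessity by an \emph{indistinguishability} (two-instance) argument. This has the advantage of applying to \emph{any} procedure that maps the observed matrix $\tilde{M}$ to an output $\tilde{W}$, not only to Algorithm~\ref{balgo2}, which is exactly the ``broader class'' alluded to above. First I would fix $\epsilon < \frac{\alpha}{2}$ and construct two normalized separable matrices $M_1 = W_1 H_1$ and $M_2 = W_2 H_2$, both $\alpha$-robustly simplicial (with conditioning exactly $\alpha$), together with noise matrices $N_1, N_2$ of $\ell_1$-norm at most $\epsilon$ such that $M_1 + N_1 = M_2 + N_2 =: \tilde{M}$. Equivalently, I would build $M_1, M_2$ with $||M_1 - M_2||_1 \le 2\epsilon$ and take $\tilde{M}$ to be their columnwise midpoint, so that $||\tilde{M} - M_i||_1 \le \epsilon$ for $i = 1,2$.

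Given such a pair, the conclusion is immediate. Any algorithm sees only $\tilde{M}$ and therefore returns the \emph{same} $\tilde{W}$ on both instances. If Proposition~\ref{prop2} held with parameter $\delta$, there would be permutations $P_1, P_2$ with $||W_i - \tilde{W}(:,P_i)||_1 \le \delta$; setting $Q = P_2^{-1} \circ P_1$ and using the triangle inequality columnwise yields $||W_1(:,k) - W_2(:,Q(k))||_1 \le 2\delta$ for every $k$, whence $\delta \ge \tfrac12 \min_{Q} ||W_1 - W_2(:,Q)||_1$. It therefore suffices to arrange that $\min_{Q}||W_1 - W_2(:,Q)||_1 = \Delta$ with $\Delta := 6\frac{\epsilon}{\alpha} + 3\epsilon$, since $\tfrac12 \Delta = 3\frac{\epsilon}{\alpha} + \frac32 \epsilon$.

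The construction itself is where the work lies, and it is the main obstacle. I would take $r \ge 3$, let $W_1$ and $W_2$ share $r-1$ columns spanning a common thin facet, and let them differ only in the remaining column: an apex $P$ of $W_1$ versus an apex $P'$ of $W_2$. The key device is the \emph{lever effect} of an ill-conditioned simplex: when the apex sits only at height $\sim\alpha$ above the shared facet, a point at the same height displaced along the facet by $t$ lies within $\ell_1$-distance $\sim t\,\alpha$ of the facet, hence of $\conv$ of the opposite simplex; thus a displacement as large as $t \sim \epsilon/\alpha$ remains consistent with the $\epsilon$-noise budget. Choosing $P'$ this way makes $P'$ lie within $\epsilon$ of $\conv(W_1)$ and $P$ within $\epsilon$ of $\conv(W_2)$, which is precisely what keeps $||M_1 - M_2||_1 \le 2\epsilon$ while $||P - P'||_1$ grows like $\epsilon/\alpha$. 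I would also place $P'$ at distance $\ge \Delta$ from \emph{every} column of $W_1$ (it can sit near the middle of the long facet, far from both facet endpoints and from the distant apex), so that in any permutation $Q$ the $W_1$-column matched to $P'$ contributes at least $\Delta$; combined with the natural matching, this gives $\min_Q ||W_1 - W_2(:,Q)||_1 = \Delta$ exactly.

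The remaining steps are purely computational: (i) writing explicit normalized matrices realizing this picture inside $\Delta^m$, where the sum-to-one constraint and the bounded diameter of the facet are what pin down the exact constants $6$ and $3$; (ii) producing $H_1, H_2 \ge 0$ with columns summing to one so that $M_i = W_i H_i$ are genuinely separable and $||M_1 - M_2||_1 \le 2\epsilon$; and (iii) verifying that both $W_1$ and $W_2$ are $\alpha$-robustly simplicial with the \emph{same} $\alpha$. I expect step (iii) to be the crux, since displacing one apex perturbs all vertex-to-opposite-facet distances, and one must check that neither simplex becomes better- nor worse-conditioned than $\alpha$ even when the displacement is pushed all the way up to $\Delta = 6\frac{\epsilon}{\alpha} + 3\epsilon$. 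Since we only need necessity, a single such instance (with any convenient $r \ge 3$) suffices.
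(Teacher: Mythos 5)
There is a genuine gap: the construction you defer to steps (i)--(iii) does not exist, and more fundamentally no two-instance indistinguishability argument can reach the constant $3\frac{\epsilon}{\alpha}+\frac{3}{2}\epsilon$. The obstruction is quantitative. In your shared-facet setup, indistinguishability forces each apex to lie within $2\epsilon$ of the other simplex's hull: the column of $M_1$ equal to $P$ satisfies $M_2(:,j)=P+N_1(:,j)-N_2(:,j)\in\conv(W_2)$, and symmetrically for $P'$. Write $F$ for the shared columns and $\mathrm{dist}(x,S)=\min_{s\in S}||x-s||_1$. Suppose w.l.o.g.\ $\mathrm{dist}(P,\conv(F))\leq\mathrm{dist}(P',\conv(F))$ (otherwise swap the two instances and use the other proximity constraint), and write the point of $\conv(W_1)$ nearest to $P'$ as $q=(1-\theta)P+\theta f$ with $f\in\conv(F)$, $||P'-q||_1\leq 2\epsilon$. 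Since $\mathrm{dist}(q,\conv(F))\leq(1-\theta)\,\mathrm{dist}(P,\conv(F))$ by convexity,
\[
\mathrm{dist}(P,\conv(F)) \;\leq\; \mathrm{dist}(P',\conv(F)) \;\leq\; 2\epsilon + (1-\theta)\,\mathrm{dist}(P,\conv(F)),
\]
so $\theta\,\mathrm{dist}(P,\conv(F))\leq 2\epsilon$, and since both matrices are $\alpha$-robustly simplicial, $\theta\leq 2\epsilon/\alpha$. Therefore
\[
||P-P'||_1 \;\leq\; \theta\,||P-f||_1 + ||q - P'||_1 \;\leq\; \frac{4\epsilon}{\alpha}+2\epsilon,
\]
using $||P-f||_1\leq 2$ for points of the unit simplex. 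Hence $\min_Q||W_1-W_2(:,Q)||_1\leq 4\frac{\epsilon}{\alpha}+2\epsilon$, strictly below the $\Delta=6\frac{\epsilon}{\alpha}+3\epsilon$ you need, and after your factor-$\frac{1}{2}$ triangle-inequality step the best any such argument can give is $\delta\geq 2\frac{\epsilon}{\alpha}+\epsilon$. (The paper's own two-instance discussion, placed right after the theorem with $\lambda=1-\frac{\epsilon}{\alpha}$, realizes $\frac{\epsilon}{\alpha}+\frac{\epsilon}{2}$; that discussion is essentially your argument, and it is presented as a weaker, separate result.)

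The reason is that the constant $3$ is not information-theoretic: it is specific to Hottopixx, which is precisely why Proposition~\ref{prop2} (and hence the theorem) is a statement about Algorithm~\ref{balgo2} only. The paper's proof uses a \emph{single} instance and exploits the LP structure: the noisy apex $\tilde{M}(:,3)=W(:,3)+N(:,3)$ is pushed $\epsilon$ toward the opposite facet, and two data columns $M(:,4),M(:,5)$ are placed so that their midpoint equals $\tilde{M}(:,3)+2N(:,3)$; since constraint \eqref{rLPa} tolerates a reconstruction error of $2\epsilon$, the feasible set contains a solution with $X(3,3)=0$ and $X(4,3)=X(5,3)=\frac{1}{2}$, and the adversarial objective vector $p=(-K,-K,K^2,-1,0)^T$ together with Lemma~\ref{lplem} forces every optimal solution to satisfy $X^*(3,3)=0$ and $X^*(1,1)=X^*(2,2)=1$. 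Hottopixx therefore extracts column $4$ or $5$, which lie at distance $3\frac{\epsilon}{\alpha}\left(1+\frac{\alpha}{2}\right)=3\frac{\epsilon}{\alpha}+\frac{3}{2}\epsilon$ from $W(:,3)$. In short, the constant $3$ decomposes as $1$ (actual noise) plus $2$ (LP slack in \eqref{rLPa}), and that extra $2\epsilon$ of slack is exactly what an algorithm-independent argument can never access, since it only has the two noise budgets of $\epsilon$ each. To prove the theorem as stated, you must use the structure of \eqref{rechtLP} --- for instance the adversarial-$p$ device of Lemma~\ref{lplem} --- rather than indistinguishability.
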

\begin{proof} 
Let us consider $\tilde{M} = M + N = WH + N$ where
\[
W =  
\left( \begin{array}{ccc}
1 & 0 & \frac{1}{2} - \frac{\alpha}{4} \\
0 & 1 & \frac{1}{2} - \frac{\alpha}{4} \\
0 & 0 & \frac{\alpha}{2} 
\end{array} \right), 
H = \left( \begin{array}{ccccc}
1 & 0 & 0  &  1-\lambda & 0 \\
0 & 1 & 0  & 0 & 1-\lambda\\
0 & 0 & 1  & \lambda &  \lambda  
\end{array} \right),  
\] 
\[ \text{ and }  
N = \left( \begin{array}{ccccc}
0 & 0 & \frac{\epsilon}{4} & 0 & 0\\
0 & 0 &\frac{\epsilon}{4}  & 0 & 0\\
0 & 0  & -\frac{\epsilon}{2} & 0 & 0
\end{array} \right), 
\] 
where $W$ is $\alpha$-robustly simplicial (and $\frac{\alpha}{2}$-robustly conical), and where $\lambda$ is such that the middle point between $M(:,4)$ and $M(:,5)$ is $\left( \tilde{M}(:,3) + 2 N(:,3) \right)$, that is, 
\begin{align*}
\tilde{M}(:,3) + 2 N(:,3) & = \left( \begin{array}{c}
\frac{1}{2} - \frac{\alpha}{4} + \frac{3\epsilon}{4}  \\ 
\frac{1}{2} - \frac{\alpha}{4} + \frac{3\epsilon}{4}  \\ 
 \frac{\alpha}{2} - \frac{3\epsilon}{2}  
\end{array} \right) = 
\left( \begin{array}{c}
 \frac{1}{2} - \frac{\lambda \alpha}{4}  \\
  \frac{1}{2} - \frac{\lambda \alpha}{4}  \\   
\frac{\lambda \alpha}{2} \\ 
\end{array} \right) = \frac{1}{2} (M(:,4)+M(:,5)) , 
\end{align*}
which requires $\lambda = 1-3\frac{\epsilon}{\alpha} \geq 0$. 
Let $p = (-K, -K, K^2,-1,0)^T$ for any $K$ sufficiently large. It can be checked that 
\[
X = \left( \begin{array}{ccccc}
1 & 0  & 0   & \mu   & 0\\
0 & 1  & 0   & 0   & \mu\\ 
0 & 0  & 0   & 0   & 0\\
0 & 0  & 0.5 & 0.5 & 0.5-\mu\\
0 & 0  & 0.5 & 0.5-\mu   & 0.5
\end{array} \right) \qquad \text{ where } \mu = \frac{1-\lambda}{2-\lambda}, 
\]
is a feasible solution of \eqref{rechtLP}. By Lemma~\ref{lplem}, there exists $K$ sufficiently large such that $X^*(3,3) = 0$ for any optimal solution $X^*$. Using Lemma~\ref{lplem} again we have $X^*(1,1) = X^*(2,2) = 1$ for any optimal solution $X^*$ for $K$ sufficiently large. 
Hence, for $K$ sufficiently large, the third column of $M$ will not be extracted and the fourth or fifth will be, hence  
\begin{align*}
||\tilde{W} - W||_{1} = ||\tilde{W}(:,3) - W(:,3)||_1 & = ||M(:,4) - W(:,3)||_1 = ||M(:,5) - W(:,3)||_1 \\
& = ||(1-\lambda) W(:,1) - (1-\lambda) W(:,3)||_1 \\
& = 3\frac{\epsilon}{\alpha} \left\| W(:,1) - W(:,3) \right\|_1 \\
& =  3\frac{\epsilon}{\alpha} \left( {1} + \frac{\alpha}{2} \right) = 3 \frac{\epsilon}{\alpha} + \frac{3}{2} \epsilon. 
\end{align*} 
\end{proof}

Using the same construction\footnote{A Matlab code is available at \url{https://sites.google.com/site/nicolasgillis/code} and contains this construction, along with the one of Theorem~\ref{lbdelta}.} as in Theorem~\ref{lbdelta} but taking $\lambda = 1-\frac{\epsilon}{\alpha}$, we have 
\[
\tilde{M}(:,3) = W(:,3) + N(:,3) = \frac{1}{2} \left( M(:,4)+M(:,5) \right), 
\]
for which $||\tilde{W}(:,P) - W||_{1} \geq \frac{\epsilon}{\alpha} + \frac{\epsilon}{2}$ for any permutation $P$, where $\tilde{W}$ is the matrix extracted by Hottopixx.  
We notice that the corresponding matrix $\tilde{M}$ can also be obtained from a 4-separable matrix $M_4 = W_4H_4$ where 
\[
W_4 =  \left( \begin{array}{ccc}
M(:,[1 \, 2])  & M(:,4)-v   & M(:,5)-v \\
\end{array} \right), 
H_4 = \left( \begin{array}{ccccc}
1 & 0 & 0  &  0 & 0 \\
0 & 1 & 0  & 0 & 0\\
0 & 0 & 0.5  & 1 &   0\\ 
0 & 0 & 0.5  & 0 & 1\\
\end{array} \right), 
\]
$v = (\epsilon/4, \epsilon/4, -\epsilon/2)^T$, and  
\[
N_4 = \left( \begin{array}{cccc}
0_{3 \times 2} & v & v & v \\ 
\end{array} \right) , 
\] 
and we have $\tilde{M} = WH + N = W_4 H_4 + N_4 = \tilde{M}_4$. 
Therefore, \emph{no algorithm to which only the noisy separable matrix $\tilde{M}$ and the noise level $\epsilon$ are given as input can approximately extract the columns of $W$ among the columns of $M$ with error smaller than $\mathcal{O}\left(\frac{\epsilon}{\alpha}\right)$. } 
In fact, the matrix $\tilde{M}$ above has two solutions to the noisy separable NMF problem and there is no way to discriminate between them (the original matrix could be 3- or 4- separable): 
\begin{itemize} 
\item If the algorithm returns a matrix $\tilde{W}$ with three columns, then if the original matrix was $M_4$ we have $\max_{1 \leq j \leq 4} \min_{1 \leq k \leq 3} || W_4(:,j) - \tilde{W}(:,k)||_1 \geq \frac{\epsilon}{\alpha}$. 
\item Similarly, if the algorithm returns a matrix $\tilde{W}$ with four columns, then 
	\begin{itemize} 
	\item if the third column is not extracted and the original matrix was ${M}$, we have  \[\max_{1 \leq j \leq 3} \min_{1 \leq k \leq 4} || W(:,j) - \tilde{W}(:,k)||_1 \geq \frac{\epsilon}{\alpha} , \quad \text{while} \] 
	\item if the third column is extracted and the original matrix was ${M}_4$, we have \[\max_{1 \leq j \leq 4} \min_{1 \leq k \leq 4} || W_4(:,j) - \tilde{W}(:,k)||_1 \geq \frac{\epsilon}{\alpha}.\] 
\end{itemize} 
\end{itemize} 
The reason is that the distance between each pair of columns of $M$ is at least $\frac{\epsilon}{\alpha}$.  \vspace{0.1cm}

Note that the algorithm of Arora et al.\@~\cite{AGKM11} achieves this optimal error bound $\mathcal{O}\left(\frac{\epsilon}{\alpha}\right)$; see Theorem~\ref{tharor} in Section~\ref{aror}. 
However, \emph{it requires the parameter $\alpha$ as an input} so that the construction above does not prove their algorithm is optimal up to some constant multiplicative factor. 
In fact, for the 3-separable matrix $M$, $W$ is $\alpha$-robustly simplicial while, for the 4-separable matrix $M_4$, $W_4$ is $\alpha'$-robustly simplicial with $\alpha' \leq 2 \frac{\epsilon}{\alpha} =$ \mbox{$||W_4(:,3) - W_4(:,4)||_1$}.   
It is possible to adjust the construction so that $W$ and $W_4$ have the same condition number, proving that the algorithm of Arora et al.~\cite{AGKM11} is optimal. It suffices to add a row and a column to the input matrices as follows 
\[
M' = \left( \begin{array}{cc}
M & \left(1-\frac{\alpha}{\epsilon}\right) We \\ 
0 & 1-\frac{\alpha}{\epsilon}
\end{array} \right) , \quad \text{ and } \quad 
 M'_4 = \left( \begin{array}{cc}
M_4 & \left(1-\frac{\alpha}{\epsilon}\right) W_4 e \\ 
0 & 1-\frac{\alpha}{\epsilon}
\end{array} \right), 
\]
(and updating $W, W_4, H$ and $H_4$ accordingly) where $M'$ is 4-separable with conditioning $\frac{\epsilon}{\alpha}$ while $M'_4$ is 5-separable with the same conditioning.

\subsection{Cluster Identification} \label{ci}

We now prove that there is a cluster of columns of $\tilde{M}$ around each column of $W$ for which the sum of the corresponding diagonal entries of any feasible solution $X$ of \eqref{rechtLP} is large. 
  More formally, defining the clusters around the columns of $W$ as 
   \begin{equation} \label{clusters} 
  \Omega_k^{\rho} = \Big\{ j \ \Big| \ ||\tilde{M}(:,j) - W(:,k)||_1 \leq \rho  \Big\} \quad 1 \leq k \leq r, 
  \end{equation}
  we are going to prove that $c_k = \sum_{j \in \Omega_k^{\rho}} X(j,j)$ is large for any feasible solution $X$ of \eqref{rechtLP}, given that $\epsilon$ is sufficiently small.

\begin{lemma} \label{lem4} Let $W \in \mathbb{R}^{m \times r}_+$ 
have its columns sum to one, and let $h \in \Delta^m$. Then, denoting $k = \argmax_{1 \leq i \leq r} h(i)$, we have 
\[
||h||_{\infty} = h(k) \geq 1-\frac{\rho}{2} 
\quad \Rightarrow \quad 
 ||W(:,k) - Wh||_1 \leq \rho.   
\]
  \end{lemma}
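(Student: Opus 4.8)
The plan is to exploit the fact that $h$ is a probability vector in order to rewrite the difference $W(:,k) - Wh$ as a single nonnegative combination of pairwise column differences of $W$, and then to bound it crudely using the fact that any two columns of $W$ lie in the unit simplex and are therefore at $\ell_1$-distance at most $2$.

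First I would insert the identity $1 = \sum_{i} h(i)$ into the term $W(:,k)$, so that $W(:,k) - Wh = \sum_{i} h(i)\,\big(W(:,k) - W(:,i)\big)$. The summand with $i=k$ vanishes, leaving $W(:,k) - Wh = \sum_{i \neq k} h(i)\,\big(W(:,k) - W(:,i)\big)$. Applying the triangle inequality for $||\cdot||_1$ then gives $||W(:,k) - Wh||_1 \leq \sum_{i \neq k} h(i)\,||W(:,k) - W(:,i)||_1$. Since each column of $W$ is nonnegative with unit $\ell_1$-norm, we have $||W(:,k) - W(:,i)||_1 \leq ||W(:,k)||_1 + ||W(:,i)||_1 = 2$ for every $i$. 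Substituting this bound and using $\sum_{i \neq k} h(i) = 1 - h(k)$ together with the hypothesis $h(k) \geq 1 - \frac{\rho}{2}$ yields
\[
||W(:,k) - Wh||_1 \;\leq\; 2\,(1 - h(k)) \;\leq\; 2 \cdot \frac{\rho}{2} \;=\; \rho,
\]
which is the desired conclusion.

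I do not expect any genuinely hard step here. The only points requiring care are the algebraic rearrangement (writing the difference as a convex-type combination of the pairwise differences $W(:,k) - W(:,i)$, which relies crucially on $\sum_i h(i) = 1$) and the observation that the $\ell_1$-diameter of the unit simplex is exactly $2$, giving the uniform bound $||W(:,k) - W(:,i)||_1 \leq 2$. Note also that the role of $k = \argmax_{i} h(i)$ is merely to make the hypothesis $h(k) \geq 1 - \frac{\rho}{2}$ the natural one to impose; the derivation itself goes through for \emph{any} index $k$ satisfying this lower bound on $h(k)$.
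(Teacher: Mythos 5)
Your proof is correct and takes essentially the same route as the paper: both use $\sum_i h(i)=1$ to rewrite $W(:,k)-Wh$ (your weighted sum of pairwise differences $\sum_{i\neq k} h(i)\,(W(:,k)-W(:,i))$ is just a regrouping of the paper's $(1-h(k))\,W(:,k) - W(:,\mathcal{R})h(\mathcal{R})$), and both then apply the triangle inequality with unit column norms to obtain $\|W(:,k)-Wh\|_1 \leq 2(1-h(k)) \leq \rho$.
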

  \begin{proof} 
  Let us denote denote $\mathcal{R} = \{1,2,\dots,r\} \backslash \{k\}$,   we have  
  \begin{align*}
 ||W(:,k) - Wh||_1 
  & = ||(1-h(k)) W(:,k) - W(:,\mathcal{R})h(\mathcal{R})||_1 \\
  & \leq (1-h(k)) ||W(:,k)||_1 + (||h||_1 - h_k) ||W(:,\mathcal{R})||_{1} \\
  & \leq 2 (1-h(k)) \leq \rho. 
  \end{align*}
    \end{proof}

\begin{lemma} \label{lem5}
Let $M=WH$  be a normalized $r$-separable matrix where $W$ is \mbox{$\kappa$-robustly} conical with $\kappa > 0$. 
 Let also $\tilde{M} = M+N$  where $||N||_{1} \leq \epsilon < 1$, and $X$ be a feasible solution of \eqref{rechtLP}. 
Then, the total weight $c_k = \sum_{j \in \Omega_k^{\rho}} X(j,j)$ assigned to the columns of $\tilde{M}$ in $\Omega_k^{\rho}$ defined in \eqref{clusters} satisfies 
\[
c_k \geq 
1 - \frac{16 \epsilon}{\kappa \rho (1-\epsilon)}  \qquad \text{ for all } 1 \leq k \leq r. 
\] 
\end{lemma}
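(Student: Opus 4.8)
The plan is to reduce everything to the single column of $M$ that coincides with $W(:,k)$, to control its reconstruction by the feasible $X$ through the conical condition, and then to transfer the resulting weight onto the cluster $\Omega_k^{\rho}$ using the diagonal-dominance constraint \eqref{rLPd}. Fix $1\le k\le r$ and, by separability, let $j^\star$ be an index with $M(:,j^\star)=W(:,k)$; write $z=HX(:,j^\star)\in\mathbb{R}^r_+$, so that $MX(:,j^\star)=WHX(:,j^\star)=Wz$. By Lemma~\ref{lem2} applied to column $j^\star$ we have $\|W(:,k)-Wz\|_1\le \frac{4\epsilon}{1-\epsilon}$, and the same lemma gives $\|X(:,j^\star)\|_1\le 1+\frac{4\epsilon}{1-\epsilon}$.

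Next I would invoke the conical condition. If $z_k\ge 1$ there is nothing to do; otherwise I factor out $(1-z_k)>0$ and write $W(:,k)-Wz=(1-z_k)\bigl(W(:,k)-W(:,\mathcal{R})\tfrac{z(\mathcal{R})}{1-z_k}\bigr)$, where $\mathcal{R}=\{1,\dots,r\}\setminus\{k\}$ and $\tfrac{z(\mathcal{R})}{1-z_k}\ge 0$ lies in the cone of the remaining columns. Since $W$ is $\kappa$-robustly conical this gives $\|W(:,k)-Wz\|_1\ge(1-z_k)\kappa$, and combining with the upper bound yields $z_k\ge 1-\frac{4\epsilon}{\kappa(1-\epsilon)}$. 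Because $\kappa\le 1$, the resulting ``reconstruction deficit'' satisfies $\sum_j\bigl(1-H(k,j)\bigr)X(j,j^\star)=\|X(:,j^\star)\|_1-z_k\le \frac{8\epsilon}{\kappa(1-\epsilon)}$.

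Then I would separate the columns far from $W(:,k)$. For any $j$, the estimate inside the proof of Lemma~\ref{lem4} gives $\|M(:,j)-W(:,k)\|_1\le 2\bigl(1-H(k,j)\bigr)$; combined with $\|N(:,j)\|_1\le\epsilon$, this shows that every $j\notin\Omega_k^{\rho}$, i.e.\ with $\|\tilde M(:,j)-W(:,k)\|_1>\rho$ in \eqref{clusters}, satisfies $1-H(k,j)\ge\frac{\rho-\epsilon}{2}$. Restricting the deficit sum to these indices bounds the off-cluster column mass $T=\sum_{j\notin\Omega_k^{\rho}}X(j,j^\star)$ by roughly $\frac{16\epsilon}{\kappa\rho(1-\epsilon)}$. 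Finally, the diagonal-dominance constraint \eqref{rLPd}, namely $X(j,j^\star)\le X(j,j)$, routes this column estimate onto the diagonal: $c_k=\sum_{j\in\Omega_k^{\rho}}X(j,j)\ge\sum_{j\in\Omega_k^{\rho}}X(j,j^\star)=\|X(:,j^\star)\|_1-T$, which delivers the claimed lower bound $1-\frac{16\epsilon}{\kappa\rho(1-\epsilon)}$ (modulo the constant bookkeeping discussed below).

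The step I expect to be delicate is precisely this last transfer together with the constant bookkeeping. The near-duplicate columns of $W(:,k)$ may carry $H(k,j)$ arbitrarily close to $1$ and large diagonal entries; the argument works only because such columns fall \emph{inside} $\Omega_k^{\rho}$ and are therefore counted in $c_k$, whereas the conical bound forces the weight sitting outside the cluster to be small. Turning the qualitative ``deficit is small'' into the clean denominator $\kappa\rho(1-\epsilon)$ and constant $16$ — rather than a messier expression involving $\rho-\epsilon$ and a spurious additive $\frac{4\epsilon}{1-\epsilon}$ coming from $\|X(:,j^\star)\|_1$ not being exactly $1$ — is where the care is needed, and it is the one point where I must keep track simultaneously of how close $\|X(:,j^\star)\|_1$ is to $1$ and of how \eqref{rLPd} moves the recovered mass onto the diagonal.
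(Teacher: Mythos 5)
Your proposal follows essentially the same route as the paper's proof: the same anchor column $j^\star$ with $M(:,j^\star)=W(:,k)$, the bounds of Lemma~\ref{lem2}, the conical lower bound on the coefficient $z_k=H(k,:)X(:,j^\star)$ (this is exactly the paper's $\eta$), the margin on $1-H(k,j)$ for off-cluster $j$ obtained from the estimate inside Lemma~\ref{lem4}, and the transfer to the diagonal through \eqref{rLPd}. The one place you diverge is the final assembly, and your version is lossier: bounding the off-cluster mass $T$ by a Markov-type argument on the deficit $\|X(:,j^\star)\|_1-z_k$ and then writing $c_k\geq\|X(:,j^\star)\|_1-T\geq z_k-T$ leaves behind the additive term $\frac{4\epsilon}{\kappa(1-\epsilon)}$ that you yourself flagged. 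The paper's bookkeeping avoids it: it writes
\[
z_k \;=\; H(k,\Omega_k^{\rho})X(\Omega_k^{\rho},j^\star)+H(k,\bar{\Omega}_k^{\rho})X(\bar{\Omega}_k^{\rho},j^\star)\;\leq\; c_k+\beta\Big(1+\tfrac{4\epsilon}{1-\epsilon}-c_k\Big)
\]
(using $H(i,j)\le 1$ on the cluster, $H(k,j)\le\beta$ off it, and $\|X(\Omega_k^{\rho},j^\star)\|_1\le c_k$ by \eqref{rLPd}), and then solves this together with $z_k\geq 1-\frac{4\epsilon}{\kappa(1-\epsilon)}$ for $c_k$. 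That way the $\frac{4\epsilon}{1-\epsilon}$ slack coming from $\|X(:,j^\star)\|_1$ is also divided by $(1-\beta)$ and merges into a single term, giving $c_k\geq 1-\frac{8\epsilon}{\kappa(1-\beta)(1-\epsilon)}$ in one shot. Rearranging your last step this way removes the spurious additive term.

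On the $\rho$ versus $\rho-\epsilon$ worry: your caution is justified, and here you are actually more careful than the paper. Since the clusters \eqref{clusters} are defined with $\tilde{M}$, the margin one can rigorously extract is $1-H(k,j)\geq(\rho-\epsilon)/2$, i.e.\ $\beta=1-(\rho-\epsilon)/2$, which yields $c_k\geq 1-\frac{16\epsilon}{\kappa(\rho-\epsilon)(1-\epsilon)}$; the paper silently takes $\beta=1-\rho/2$, which its own definition of $\Omega_k^{\rho}$ does not justify. The clean stated constant with denominator $\kappa\rho(1-\epsilon)$ therefore requires either this elision or an additional harmless condition such as $2\epsilon\le\rho$ (which holds where the lemma is invoked in Theorem~\ref{mainTh}, since there $\rho$ is of order $(r+1)\epsilon/\kappa$), at the cost of a larger absolute constant. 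In summary, your structure is correct and identical to the paper's; one of your two flagged concerns is resolved by the paper's rearrangement, and the other is a genuine, though minor, inaccuracy in the paper's own proof rather than in yours.
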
  
  \begin{proof} 
    Let $1 \leq k \leq r$ and $\mathcal{R} = \{1,2,\dots,r\} \backslash \{ k\}$, and let us denote the indices corresponding to the columns of $\tilde{M}$ not in $\Omega_k^{\rho}$ as 
\[
\bar{\Omega}_k^{\rho} = \{ 1,2,\dots,n \} \backslash \Omega_k^{\rho}. 
\]
Let also $j$ be such that $W(:,k) = M(:,j)$. 
By Lemma~\ref{lem4}, $\max_{j \in \bar{\Omega}^{\rho}_k} ||H(:,j)||_{\infty} < 1 - \frac{\rho}{2} = \beta$. 
  The rest of the proof is similar to that of Lemma~\ref{lem3}. By Lemma~\ref{lem2}, $||W(:,k) - WH X(:,j)||_1 \leq  \frac{4\epsilon}{1-\epsilon}$ and $||X(:,j)||_1 \leq 1+ \frac{4 \epsilon}{1-\epsilon}$. 
 We have  
\begin{align*}
WH X(:,j) & =  W(:,k) H(k,:) X(:,j)  + W(:,\mathcal{R}) H(\mathcal{R},:)X(:,j) \\
 & = W(:,k) \Big(   H(k,{\Omega}_k^{\rho})X({\Omega}_k^{\rho},j)+  H(k,\bar{\Omega}_k^{\rho})X(\bar{\Omega}_k^{\rho},j) \Big)  + W(:,\mathcal{R}) y, 
\end{align*}
where $y = H(\mathcal{R},:)X(:,j) \geq 0$, and  
  \begin{align*}
  \eta &  =  H(k,\Omega_k^{\rho}) X(\Omega_k^{\rho},j) + H(k,\bar{\Omega}_i^{\rho}) X(\bar{\Omega}_i^{\rho},j) \\ 
 & \leq  ||X(\Omega_k^{\rho},j)||_1 + \beta (||X(:,j)||_1 - ||X(\Omega_k^{\rho},j)||_1)  
  \leq  c_k + \beta \left( 1 + \frac{4\epsilon}{1-\epsilon} - c_k \right) .
 \end{align*}
 The first inequality follows from $H(i,j) \leq 1$ for all $i,j$ and $||H(k,\bar{\Omega}_k^{\rho})||_{\infty} \leq \beta$; the second from $X(i,j) \leq X(i,i)$ for all $i, j$ (hence $c_k \geq ||X(\Omega_k^{\rho},j)||_1$), and $\beta \leq 1$. 
 Finally, $(1-\eta) \kappa \leq$  \mbox{$||W(:,k) - WH X(:,j)||_1$} $\leq \frac{4\epsilon}{1-\epsilon}$ leading to 
$c_k = \sum_{j \in \Omega_k^{\rho}} X(j,j) 
 \geq 1 - \frac{8 \epsilon}{\kappa (1-\beta)(1-\epsilon)}  
 = 1 - \frac{16 \epsilon}{\kappa \rho (1-\epsilon)}$. \vspace{0.2cm}
  \end{proof}  
    
 If we can guarantee that $c_k > \frac{r}{r+1}$ for all $1 \leq k \leq r$, 
  then the sum of the diagonal entries of $X$ corresponding to columns of $\tilde{M}$ not in any $\Omega_k^{\rho}$ will be smaller than $\frac{r}{r+1}$.  Therefore, if instead of picking the $r$ largest diagonal entries of $X$, we cluster the diagonal entries of $X$ depending on the distances between the corresponding columns of $\tilde{M}$, we should be able to identifying the columns of $W$ approximately; see Algorithm~\ref{postpro}.

\algsetup{indent=2em}
\begin{algorithm}[ht!]
\caption{Extracting Columns of a Separable Matrix by Linear Programming and Clustering \label{postpro}}
\begin{algorithmic}[1] 
\REQUIRE A $r$-separable matrix $\tilde{M} = WH + N$ with $W$ $\kappa$-robustly conical, the noise level $||N||_{1} \leq \epsilon$ and  the factorization rank $r$.  
\ENSURE A matrix $\tilde{W}$ such that $||\tilde{W}(:,P)-{W}||_{1}$ is small for some permutation $P$. \medskip 
\STATE Compute the optimal solution $X$ of \eqref{rechtLP}. 
\STATE Initialize $\mathcal{K} = \{ k \ | \ X(k,k) > \frac{r}{r+1} \}$ and $\nu = 2 \epsilon$. 
\WHILE{$|\mathcal{K}| < r$ and $\nu \leq 2 ||\tilde{M}||_1$} 
\STATE Compute $\mathcal{K}$ with Algorithm~\ref{cluster} using input $m_j = \tilde{M}(:,j)$ $1 \leq j \leq n$, \mbox{$x = \diag(X)$} and $\nu$; 
\STATE $\nu \leftarrow 2 \nu$; 
\ENDWHILE 
\STATE $\tilde{W} = \tilde{M}(:,\mathcal{K})$ ; 
\end{algorithmic} 
\end{algorithm}  
\algsetup{indent=2em}
\begin{algorithm}[ht!]
\caption{Cluster Extraction \label{cluster}}
\begin{algorithmic}[1] 
\REQUIRE A set of points $m_j$ $1 \leq j \leq n$, a vector of weights $x \in \mathbb{R}_+^n$ such that \mbox{$\sum_{i=j}^n x = r$}, and $\nu \geq 0$. 
\ENSURE A index set $\mathcal{K}$ of centroids corresponding to clusters with weight strictly larger than $\frac{r}{r+1}$. \medskip 

\STATE  $D(i,j) = ||m_i-m_j||_1$ for $1 \leq i, j \leq n$. 
\STATE  $\mathcal{S}_i = \{ j \ | \ D(i,j) \leq \nu \}$ for $1 \leq i \leq n$; 
\STATE  $w(i) = \sum_{j \in \mathcal{S}_i} x(j)$ for $1 \leq i \leq n$; 
\STATE $\mathcal{K} = \emptyset$; 
\WHILE{$\max_{1 \leq i \leq n} w(i) > \frac{r}{r+1}$} 
\STATE $k = \argmax w(i)$; 
\STATE $\mathcal{K} \leftarrow \mathcal{K} \cup \{ k \}$; 
\STATE For all $j \in \mathcal{S}_k$ : $w(j) \leftarrow 0$; 
\STATE For all $i \notin \mathcal{S}_k$ and $j \in \mathcal{S}_k$ such that $j \in \mathcal{S}_i$ : $w(i) \leftarrow w(i)-x(j)$; 
\ENDWHILE 
\end{algorithmic} 
\end{algorithm}

\begin{lemma} \label{lem6}
Let $m_j \in \mathbb{R}^m$ $1 \leq j \leq n$, $x \in \mathbb{R}_+^n$ be such that $\sum_{j=1}^n x = r$, and $\rho \geq 0$. Let also 
$\Omega_k = \{ m_j \ | \ ||m_j - w_k||_1 \leq \rho \} \text{ for } 1 \leq k \leq r$ 
where $w_k \in \mathbb{R}^m$ $1 \leq k \leq r$. 
Suppose 
\begin{itemize}
\item $\sum_{j \in \Omega_k} x_j > \frac{r}{r+1}$, 

\item $ \omega = \min_{i \neq j} ||w_i - w_j||_1 > 6 \rho$, and 

\item For all $1 \leq k \leq r$, there exists $1 \leq j \leq n$ such that $||m_j - w_k||_1 \leq \epsilon \leq \rho$. 
\end{itemize}
Then, for any $(\rho + \epsilon) \leq \nu \leq 2(\rho + \epsilon)$, 
Algorithm~\ref{cluster} identifies a set $\mathcal{K}$ with $r$ indices such that 
\begin{equation} \label{bnlem6}
\max_{1 \leq k \leq r} \min_{j \in \mathcal{K}} ||m_j - w_k||_1 \leq 3 \rho + 2 \epsilon. 
\end{equation}
Moreover, if Algorithm~\ref{cluster} identifies a set $\mathcal{K}$ with $r$ indices for some $\nu < \rho + \epsilon$, then $\mathcal{K}$ satisfies \eqref{bnlem6}. 
\end{lemma}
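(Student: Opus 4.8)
The plan is to follow the bookkeeping performed by Algorithm~\ref{cluster} and to use the separation hypothesis $\omega > 6\rho$ to show that its greedy peeling selects exactly one representative per cluster. First I would record the invariant maintained throughout the loop: after some centres have been selected, $w(i)$ always equals the sum of $x_j$ over the indices $j \in \mathcal{S}_i$ that have not yet been \emph{claimed} (a point is claimed as soon as it enters $\mathcal{S}_k$ for some selected $k$), and every claimed point has $w = 0$. For each $k$, the third hypothesis supplies a point $m_{j_k}$ with $||m_{j_k} - w_k||_1 \le \epsilon$; since $\nu \ge \rho + \epsilon$, any $m_a \in \Omega_k$ satisfies $||m_a - m_{j_k}||_1 \le ||m_a - w_k||_1 + ||w_k - m_{j_k}||_1 \le \rho + \epsilon \le \nu$, so $\Omega_k \subseteq \mathcal{S}_{j_k}$ and initially $w(j_k) \ge \sum_{j \in \Omega_k} x_j > \frac{r}{r+1}$. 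Using the clusters' disjointness (which follows from $\omega > 2\rho$) together with $\sum_j x_j = r$ and the first hypothesis summed over $k$, the total weight lying outside $\bigcup_k \Omega_k$ is strictly less than $\frac{r}{r+1}$.

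Next I would establish the distance bound \eqref{bnlem6}, which in fact needs no one-to-one correspondence. Whenever a centre $c$ is selected, $w(c) > \frac{r}{r+1}$ exceeds the total outside weight, so $\mathcal{S}_c$ must meet some cluster $\Omega_k$; taking $m_a \in \mathcal{S}_c \cap \Omega_k$ gives $||m_c - w_k||_1 \le ||m_c - m_a||_1 + ||m_a - w_k||_1 \le \nu + \rho \le 3\rho + 2\epsilon$. Moreover the loop terminates only when every $w(i) \le \frac{r}{r+1}$; in particular $w(j_k) \le \frac{r}{r+1}$, which, since $\Omega_k \subseteq \mathcal{S}_{j_k}$ carries weight $> \frac{r}{r+1}$, forces at least one point of $\Omega_k$ to have been claimed, i.e. some selected $c$ has $\mathcal{S}_c \cap \Omega_k \neq \emptyset$. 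Combining the two estimates, for every $k$ there is a selected centre within $3\rho + 2\epsilon$ of $w_k$, which is exactly \eqref{bnlem6}. The inequality $|\mathcal{K}| \le r$ is then immediate from weight counting: each selection removes the pairwise-disjoint unclaimed mass $w(c) > \frac{r}{r+1}$ from a pool of total mass $r$, so there can be at most $r$ selections.

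It remains to upgrade $|\mathcal{K}| \le r$ to $|\mathcal{K}| = r$, and this is the step I expect to be the main obstacle. The key is cluster-purity: a single $\nu$-ball cannot meet two distinct clusters. Indeed, points of $\Omega_k$ and $\Omega_\ell$ with $k \neq \ell$ are more than $\omega - 2\rho > 4\rho$ apart, while any two points of $\mathcal{S}_c$ differ by at most $2\nu \le 4(\rho + \epsilon)$, and the hypothesis $\omega > 6\rho$ (with $\epsilon \le \rho$) is precisely what is meant to exclude this. Granting purity, each selected centre meets exactly one cluster whereas, by the coverage statement above, each of the $r$ clusters is met by at least one centre; the resulting incidence has every cluster of degree $\ge 1$ and every centre of degree $\le 1$, forcing at least $r$ centres, hence exactly $r$ and a perfect matching. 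The delicate point I would watch is the interaction between the upper end $\nu \le 2(\rho+\epsilon)$ and the constant in $\omega > 6\rho$, since purity is tightest there and the slack is only of order $\epsilon$.

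Finally, for the ``moreover'' clause ($\nu < \rho + \epsilon$) the capturing step $\Omega_k \subseteq \mathcal{S}_{j_k}$ may fail, so the coverage argument cannot be rerun verbatim; instead I would lean on the standing assumption $|\mathcal{K}| = r$. The outside-weight bound and the implication ``$w(c) > \frac{r}{r+1} \Rightarrow \mathcal{S}_c$ meets a cluster'' are independent of $\nu$, and for smaller $\nu$ the balls are even more cluster-pure, so each of the $r$ selected centres lies within $\nu + \rho < 3\rho + 2\epsilon$ of a single cluster. Weight counting together with purity then forces these $r$ centres to be matched bijectively with the $r$ clusters, which yields \eqref{bnlem6} in this regime as well.
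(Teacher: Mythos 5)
Your proof has the same skeleton as the paper's: the representative $m_{j_k}$ with $\|m_{j_k}-w_k\|_1\le\epsilon$ satisfies $\Omega_k\subseteq\mathcal{S}_{j_k}$ and hence starts with weight above $\tfrac{r}{r+1}$; disjointness of the newly claimed masses caps $|\mathcal{K}|$ at $r$; a triangle inequality through a point of $\mathcal{S}_c\cap\Omega_k$ gives the $3\rho+2\epsilon$ bound; and ball-purity is what should force one selection per cluster. Two parts of your write-up are actually tighter than the paper's: your termination/coverage argument (at the end $w(j_k)\le\tfrac{r}{r+1}$, so some point of $\Omega_k$ must have been claimed, and the claiming centre is within $\nu+\rho\le 3\rho+2\epsilon$ of $w_k$) derives \eqref{bnlem6} with no appeal to purity at all, and your handling of the ``moreover'' regime is sound because purity genuinely holds there: $2\nu<2(\rho+\epsilon)\le 4\rho<\omega-2\rho$. (One caveat: the invariant you state is not literally maintained by Algorithm~\ref{cluster} --- a point already claimed by one selected centre is subtracted \emph{again} from $w(i)$ if a later selected centre's ball also contains it, so the weights are only lower bounds on unclaimed mass; your coverage argument uses the inequality in the other direction and would need this repaired.)

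The genuine gap is the step you deferred with ``granting purity'', and your suspicion about the constants is exactly right: purity for $\nu\in[\rho+\epsilon,2(\rho+\epsilon)]$ does \emph{not} follow from $\omega>6\rho$. Two points of one ball $\mathcal{S}_c$ can be $2\nu\le 4\rho+4\epsilon$ apart, while points of distinct clusters are only guaranteed to be $\omega-2\rho>4\rho$ apart, so the argument needs $\omega>6\rho+4\epsilon$. The paper's own proof breaks at precisely this point: it asserts $\|m_i-m_j\|_1\ge\omega-4\rho-2\epsilon>2\rho-2\epsilon\ge\nu$, and the last inequality is false for $\nu$ near $2(\rho+\epsilon)$ whenever $\epsilon>0$. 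In fact the cardinality claim $|\mathcal{K}|=r$ fails outright in the regime $6\rho<\omega\le 6\rho+4\epsilon$: in dimension one take $r=2$, $\epsilon=\rho/10$, $w_1=0$, $w_2=6.1\rho$, masses $0.1$ at $w_1$ and $w_2$, masses $0.6$ at $\rho$ and at $5.1\rho$, and mass $0.6$ at $3.05\rho$; all hypotheses of Lemma~\ref{lem6} hold, yet for $\nu=2(\rho+\epsilon)=2.2\rho$ the middle point's ball contains both mass-$0.6$ cluster points, it has the largest weight ($1.8$) and is selected first, after which every remaining weight equals $0.1\le\tfrac{2}{3}$, so Algorithm~\ref{cluster} stops with $|\mathcal{K}|=1$ (while, notably, \eqref{bnlem6} itself still holds). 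So no proof can close the step you flagged; the statement needs the stronger separation $\omega>6\rho+4\epsilon$ (e.g.\ $\omega>10\rho$ suffices, using $\epsilon\le\rho$), under which your purity step and the resulting bijection between centres and clusters go through. This strengthening is harmless for the paper's application, since in Theorem~\ref{mainTh} one has $\omega-6\rho>(r+1)\epsilon/\kappa\ge 4\epsilon$ whenever $r\ge 3$.
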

\begin{proof} 
First notice that the index set $\mathcal{K}$ extracted by Algorithm~\ref{cluster} cannot contain more than $r$ indices. In fact, Algorithm~\ref{cluster} only identifies clusters with weight strictly larger than $\frac{r}{r+1}$ while the total weight $\sum_{i=1}^n x$ is equal to $r$. It remains to show that $\mathcal{K}$ contains at least $r$ indices. 

Let first consider the case $(\rho + \epsilon) \leq \nu \leq 2(\rho + \epsilon)$. Let $\mathcal{S}_i$ $1 \leq i \leq n$ be the sets computed by Algorithm~\ref{cluster} before entering the while loop.  We observe that 
\begin{itemize} 

\item For $m_j \in \Omega_k$ and $m_{j'} \in \Omega_{k'}$ where $j \neq j'$ and $k \neq k'$, we have $m_j \notin \mathcal{S}_{j'}$ and $m_{j'} \notin \mathcal{S}_{j}$. In fact, 
\[
||m_j - m_{j'}||_1 = ||(m_i- w_k) + (w_k - w_{k'}) + (w_{k'} - m_{j'})||_1 \geq \omega - 2 \rho > 4 \rho \geq \nu. 
\]

\item For all $1 \leq k \leq r$, there exists $m_j \in \Omega_k$ such that $w(j) > \frac{r}{r+1}$. By assumption, for all $1 \leq k \leq r$, there exists $m_j \in \Omega_k$ such that $||m_j - w_k||_1 \leq \epsilon$, hence for all $m_i \in \Omega_k$ we have  $||m_j - m_i||_1 = ||(m_j - w_k) + (w_k - m_i)||_1 \leq \rho + \epsilon \leq \nu$ while $\sum_{i \in \Omega_k} x(i) > \frac{r}{r+1}$. 

\item If $m_i \notin \cup_{1 \leq k \leq r} \Omega_k$ and $w(i) > \frac{r}{r+1}$, then $||m_i-w_k||_1 \leq 3 \rho + 2 \epsilon$ for some $1 \leq k \leq r$. Suppose $||m_i-w_k||_1 > 3 \rho + 2 \epsilon$ for all $k$, then for all $m_j \in \cup_{1 \leq k \leq r} \Omega_k$ 
\[
||m_i - m_j ||_1 \geq ||(m_i - w_k) + (w_k - m_j) ||_1 > 3 \rho + 2 \epsilon - \rho \geq \nu.
\]
Therefore, $\sum_{j \in \mathcal{S}_i} x(j) \leq r - \sum_{k} \sum_{j \in \Omega_k} x_i < r - r \frac{r}{r+1} < \frac{r}{r+1}$, a contradiction. 

Let then $k$ be such that $||m_i-w_k||_1 \leq 3 \rho + 2 \epsilon$. This implies that if $m_j \in \mathcal{S}_i$, then either $m_j \in \Omega_k$, or $m_j \notin  \cup_{k' \neq k} \Omega_{k'}$. In fact, if $m_j \in \Omega_{k'}$ for some $k' \neq k$, then
\begin{align*}
||m_i - m_j||_1 & \geq ||(m_i - w_k) + ( w_k-w_{k'}) + (w_{k'} - m_j)||_1 \\
& \geq \omega - 3\rho - 2\epsilon - \rho > 2 \rho - 2\epsilon \geq \nu, 
\end{align*}
a contradiction. 
\end{itemize}
These observations imply that there are at least $r$ disjoint sets $\mathcal{S}_i$ with weight larger than $\frac{r}{r+1}$, each corresponding to a different cluster $\Omega_k$. Therefore, Algorithm~\ref{cluster} will identify them individually and \eqref{bnlem6} will be satisfied.  

For the case $\nu < \rho + \epsilon$, the result follows directly from the observations above: any point $m_i$ with $w(i) > \frac{r}{r+1}$ must satisfy $||m_i - w_k||_1 \leq 3\rho + 2 \epsilon$ for some $1 \leq k \leq r$. 
Moreover, for all $k$ there must exist $j \in \mathcal{K}$ such that $||m_j - w_k||_1 \leq 3\rho + 2 \epsilon$. In fact, suppose there exists $k$ such that $||m_j - w_k|| > 3\rho + 2 \epsilon$ for all $j \in \mathcal{K}$. Then, $m_i \notin \cup_{j \in \mathcal{K}}\mathcal{S}_j$ for all $i \in \Omega_k$  (see above) hence 
\[
\sum_{i \in \mathcal{S}_j, j \in \mathcal{K}} x(i) < r - \frac{r}{r+1} = r \frac{r}{r+1}, 
\] 
which implies that $\mathcal{K}$ cannot contain more than $r-1$ indices, a contradiction. \vspace{0.2cm} 
\end{proof}

 \begin{theorem} \label{mainTh} 
  Let $M=WH$  be a normalized $r$-separable matrix with $W$ $\kappa$-robustly conical. 
  Let also $\tilde{M} = M+N$  with $||N||_{1} \leq \epsilon$. If 
   \begin{equation} \label{ourbo}
  \epsilon < \frac{\omega \kappa}{99 (r+1)}, 
  \end{equation}  
  where $\omega = \min_{i \neq j} ||W(:,i)-W(:,j)||_1$,   then Algorithm~\ref{postpro} will extract a matrix $\tilde{W}$ such that 
  \[
  ||W - \tilde{W}(:,P)||_{1} \leq \delta =  49(r+1) \frac{\epsilon}{\kappa} + 2 \epsilon, \quad \text{   for some permutation $P$.} 
  \]
  \end{theorem}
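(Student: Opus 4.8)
The plan is to combine the cluster-weight bound of Lemma~\ref{lem5} with the clustering guarantee of Lemma~\ref{lem6}; the only real work is to pick a cluster radius $\rho$ that simultaneously makes every cluster heavy and keeps distinct clusters well separated, and then to verify that the adaptive doubling of $\nu$ inside Algorithm~\ref{postpro} is guaranteed to probe a value of $\nu$ for which Lemma~\ref{lem6} applies.

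First I would fix the analysis radius $\rho = \frac{16(r+1)\epsilon}{\kappa(1-\epsilon)}$ (or, to get strict inequalities, any value marginally above it, the slack between the constants $48$ and $49$ below leaving room). With this choice, Lemma~\ref{lem5} gives, for every $1\leq k\leq r$ and every feasible $X$,
\[
c_k = \sum_{j\in\Omega_k^{\rho}} X(j,j) \geq 1 - \frac{16\epsilon}{\kappa\rho(1-\epsilon)} = 1 - \frac{1}{r+1} = \frac{r}{r+1},
\]
so the first hypothesis $\sum_{j\in\Omega_k}x_j > \frac{r}{r+1}$ of Lemma~\ref{lem6} holds. The second hypothesis $\omega > 6\rho$ is exactly where \eqref{ourbo} enters: since $\omega\leq 2$ and $\kappa\leq 1$ force $\epsilon<\frac{1}{99}$, hence $1-\epsilon>\frac{98}{99}$, one computes $6\rho = \frac{96(r+1)\epsilon}{\kappa(1-\epsilon)} < \frac{96}{98}\,\omega < \omega$. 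The third hypothesis is immediate from separability: for each $k$ there is a column $j$ with $M(:,j)=W(:,k)$, so that $||\tilde{M}(:,j)-W(:,k)||_1 = ||N(:,j)||_1 \leq \epsilon \leq \rho$. Thus all three hypotheses of Lemma~\ref{lem6} are met with $m_j=\tilde{M}(:,j)$, $w_k=W(:,k)$ and $x=\diag(X)$.

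Next I would handle the fact that Algorithm~\ref{postpro} does not know $\rho$ and instead sweeps $\nu$ over $2\epsilon,4\epsilon,8\epsilon,\dots$. The point is that the admissible window $[\rho+\epsilon,\,2(\rho+\epsilon)]$ of Lemma~\ref{lem6} spans a factor of two and has left endpoint $\rho+\epsilon\geq 2\epsilon$ (as $\rho\geq\epsilon$), so the geometric sequence $2^t\epsilon$ necessarily lands inside it; moreover $2(\rho+\epsilon)$ is readily bounded below $2||\tilde{M}||_1\geq 2(1-\epsilon)$, so the while loop does not stop short of that $\nu$. At that iteration Lemma~\ref{lem6} guarantees Algorithm~\ref{cluster} returns exactly $r$ indices satisfying \eqref{bnlem6}, and should the loop terminate earlier with $r$ indices (necessarily at some $\nu<\rho+\epsilon$), the ``moreover'' clause of Lemma~\ref{lem6} ensures those too satisfy \eqref{bnlem6}. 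In either case the returned set $\mathcal{K}$ obeys $\max_{k}\min_{j\in\mathcal{K}}||\tilde{M}(:,j)-W(:,k)||_1 \leq 3\rho+2\epsilon$.

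Finally I would convert \eqref{bnlem6} into the claimed bound. Since $\omega>6\rho+4\epsilon=2(3\rho+2\epsilon)$ --- which again follows from \eqref{ourbo} after also bounding $4\epsilon<\frac{2\omega}{99}$ --- no extracted column lies within $3\rho+2\epsilon$ of two distinct columns of $W$, so the nearest-centroid assignment $k\mapsto\argmin_{j\in\mathcal{K}}||\tilde{M}(:,j)-W(:,k)||_1$ is injective, hence a permutation $P$, and $||W-\tilde{W}(:,P)||_1\leq 3\rho+2\epsilon$. Substituting $\rho$ and using $\epsilon<\frac{1}{49}$ (so $1-\epsilon>\frac{48}{49}$) gives $3\rho+2\epsilon = \frac{48(r+1)\epsilon}{\kappa(1-\epsilon)}+2\epsilon < 49(r+1)\frac{\epsilon}{\kappa}+2\epsilon=\delta$, as required. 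I expect the main obstacle to be the third step: reconciling the a priori analysis radius $\rho$, for which Lemma~\ref{lem5} certifies heavy clusters, with the data-driven threshold $\nu$ that the algorithm actually doubles through, while simultaneously arranging the chain of constants so that the single hypothesis \eqref{ourbo} validates every inequality at once ($\omega>6\rho$, $\omega>6\rho+4\epsilon$, and $\frac{48}{1-\epsilon}<49$).
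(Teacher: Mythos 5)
Your proposal is correct and follows essentially the same route as the paper's own proof: Lemma~\ref{lem5} with analysis radius $\rho \approx \frac{16(r+1)\epsilon}{\kappa(1-\epsilon)}$ (the paper fixes $\rho = \frac{98}{6}(r+1)\frac{\epsilon}{\kappa}$, which is the same choice up to the $48$-versus-$49$ slack you exploit), then Lemma~\ref{lem6} together with the geometric-doubling argument showing $\nu$ must land in $[\rho+\epsilon,\,2(\rho+\epsilon)]$ or terminate earlier under the ``moreover'' clause. The only point the paper treats that you omit is the degenerate case $\epsilon=0$, where your doubling argument breaks down ($\nu=2\epsilon$ stays at zero and your $\rho$ vanishes, making Lemma~\ref{lem5}'s bound indeterminate); the paper disposes of this separately by noting that then $c_k=1$ for all $k$ with $\rho=0<\frac{\omega}{6}$, so the loop is entered at most once.
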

\begin{proof} 
Let $X$ be a feasible solution of \eqref{rechtLP}, let the $r$ clusters  $\Omega_k^{\rho}$ $1 \leq k \leq r$ be defined as in Equation~\eqref{clusters} and let $c_k = \sum_{j \in \Omega_k^{\rho}} X(j,j)$. 
If $\rho < \frac{\omega}{6}$ and $c_k > \frac{r}{r+1}$, then, by Lemma~\ref{lem6}, Algorithm~\ref{cluster} will identify a set $\mathcal{K}$ with $r$ indices such that 
\[
\max_{1 \leq k \leq r} \min_{j \in \mathcal{K}} ||W(:,k) - \tilde{M}(:,j)||_1 \leq \delta = 3\rho + 2 \epsilon, 
\]
for any $\nu \in [\rho + \epsilon, 2 \rho + 2 \epsilon]$. 
Therefore, starting with $\nu = 2\epsilon \leq (\rho + \epsilon)$ and multiplying it by two at each iteration will eventually give a value of $\nu$ in $[\rho + \epsilon, 2 \rho + 2 \epsilon]$. (Note that Algorithm~\ref{cluster} could return a set $\mathcal{K}$ with $r$ indices for $\nu$ smaller than $\rho + \epsilon$, see Lemma~\ref{lem6}. Note also that the number of iterations performed by Algorithm~\ref{postpro} is at most $\log_2\left( \frac{\rho + \epsilon}{\epsilon} \right)$.) If $\epsilon = 0$, then $c_k =1$ for all $1 \leq k \leq r$ while $\rho = 0 < \frac{\omega}{6}$, and the loop is entered at most once (if the entries of $p$ are distinct, then it is not entered because exactly $r$ diagonal entries of an optimal solution of \eqref{rechtLP} will be equal to one, each corresponding to a different column of $W$ \cite[Prop.~3.1]{BRRT12}). 
Otherwise $\epsilon > 0$ and it remains to guarantee that $\rho < \frac{\omega}{6}$ and $c_k > \frac{r}{r+1}$. 
By Lemma~\ref{lem5},  
    \[
  \frac{\epsilon}{1-\epsilon}  < \frac{\rho \kappa}{16 (r+1)} 
  \quad 
  \Rightarrow 
  \quad 
  c_k > \frac{r}{r+1}. 
  \] 
 Taking $\epsilon < \frac{\omega \kappa}{99(r+1)}$ and $\rho =  \frac{98}{6}\, (r+1) \frac{\epsilon}{\kappa} < \frac{\omega}{6}$ completes the proof since 
\[
\rho =  \frac{98}{6} (r+1) \frac{\epsilon}{\kappa} >  16 (r+1) \frac{\epsilon}{\kappa}  \left( \frac{1}{1-\epsilon} \right)  , 
\]  
because $\frac{96}{1-\epsilon} < 98$ for any $0 \leq \epsilon < \frac{1}{49}$.  \vspace{0.2cm}  
\end{proof}

It can be checked that all the results from Section~\ref{ip33} apply to Algorithm~\ref{postpro}. 
In fact, by assumption, all the matrices considered did not contain duplicate nor near-duplicate of the columns of matrix $W$ in which case we showed that $r$ diagonal entries of $X$ have weight at least $\frac{r}{r+1}$. This implies that Algorithm~\ref{postpro} will not enter the while loop, hence it is equivalent to Algorithm~\ref{balgo2}.  
In particular, Corollary~\ref{cor2} also applies to Algorithm~\ref{postpro}, that is, it is necessary that   
\[
\epsilon < \frac{\kappa}{r-1}, \qquad \text{ for any $\delta < \frac{\kappa}{2}$. }
\]
This shows that the bound of Theorem~\ref{mainTh} for $\epsilon$ is tight up to a factor $\omega$ (and some constant multiplicative factor).  
Moreover, by Theorem~\ref{lbdelta}, the bound for $\delta$ is tight up to a factor $r$ (and some constant multiplicative factor).

\begin{remark}[Computational Cost] \label{ccost}
The main additional cost of Algorithm~\ref{postpro} compared to Algorithm~\ref{balgo2} is to computing and storing  the distance matrix $D$. This requires $\mathcal{O}(mn^2)$ floating point operations and $\mathcal{O}(n^2)$ space in memory. 
This is negligible as computing $MX$ already requires $\mathcal{O}(mn^2)$ operations, while storing $X$ requires $\mathcal{O}(n^2)$ space in memory. 
\end{remark}

\begin{remark}[Choice of the vector $p$] Because of the post-processing procedure in Algorithm~\ref{postpro}, it is not necessary for Theorem~\ref{mainTh} to hold that the vector $p$ has distinct entries. 
However, it will still be useful in practice to impose this condition. In fact, this will incite the weights to be concentrated in fewer diagonal entries of $X$ so that typically fewer loops will have to be performed to obtain a set $\mathcal{K}$ containing $r$ indices. 
In particular, in the exact case (that is, $\epsilon = 0$) or in the case there is no duplicate and near duplicate in the dataset (see above), the loop will not be entered. 
\end{remark}

\begin{remark}[More Sophisticated Post-processing Strategies] 
It is possible to design better post-processing procedures but we wanted here to keep the analysis simple. 
In particular, if the input matrix $\tilde{M}$ does not satisfy the conditions of Theorem~\ref{mainTh}, it may happen that no set $\mathcal{K}$ computed in the loop of Algorithm~\ref{postpro} contains $r$ elements. 
Therefore, one should keep in memory the largest set extracted so far, or  design more sophisticated strategies. 
For example, if less than $r$ clusters have been extracted, the condition that the weight of each extracted cluster must larger than $\frac{r}{r+1}$ can be relaxed; this variant has been implemented in the Matlab code available at \url{https://sites.google.com/site/nicolasgillis/code}. 
\end{remark}

\begin{remark}[Pre-processing] 
Another possible way to deal with duplicates and near duplicates would be to use an appropriate pre-processing. 
In \cite{EMO12}, $k$-means is used to reduce the number of data points and get rid of the duplicates. In fact, their algorithm cannot deal with duplicates, even in the noiseless case (note that their robustness result is only asymptotical, that is, it only holds when th noise level $\epsilon$ goes to zero). 
Arora et al.~\cite{AGKM11} also use some pre-processing in their algorithm (before processing any data point, its neighbors have to be discarded). However, it seems difficult to combine a robustness analysis with a pre-processing strategy (in fact, Arora et al.~\cite{AGKM11} need the conditioning $\alpha$ as an input to do so); this is a topic for further research. 
\end{remark}

\subsection{Repartition of the Weights inside a Cluster} \label{repar}

In this section, we show that Hottopixx (Algorithm~\ref{balgo2}) cannot provide better bounds than Algorithm~\ref{postpro}. The reason is the following: inside a cluster $\Omega_k^{\rho}$,  there is no guarantee that all the weight will be assigned to a single diagonal entry of $X$. 
In the proof of Theorem~\ref{th7}, we show that the weight may be equally distributed inside a cluster. 
This construction allows us to show that $\epsilon \leq \frac{\kappa}{(r-1)^2}$ is  necessary for Proportion~\ref{prop2} to hold for any $\delta < \kappa + \epsilon$, which proves our claim. 

\begin{theorem} \label{th7}
For any $r \geq 3$ and $\delta < \kappa + \epsilon$, it is necessary for Proposition~\ref{prop2} to hold that 
\[
\epsilon \leq \frac{\kappa}{(r-1)^2}. 
\]
\end{theorem}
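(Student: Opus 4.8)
The plan is to prove this the same way the other necessary conditions in the paper are established: by an explicit constructive impossibility argument. For every $\epsilon>\frac{\kappa}{(r-1)^2}$ I would exhibit a single instance $\tilde{M}=WH+N$ satisfying all the hypotheses of Proposition~\ref{prop2} (a normalized $M=WH$ with $W$ $\kappa$-robustly conical, and $\|N\|_1\le\epsilon$) on which Algorithm~\ref{balgo2} is forced to output $\tilde{W}$ with $\|W-\tilde{W}(:,P)\|_1\ge\kappa+\epsilon$ for every permutation $P$; this rules out Proposition~\ref{prop2} for every $\delta<\kappa+\epsilon$. As in the proofs of Theorems~\ref{lbdelta} and~\ref{nc1}, I would fix $W$ to be a scaled simplex so that $\kappa$ is an explicit free parameter and so that all pairwise distances $\|W(:,i)-W(:,j)\|_1$ exceed $\kappa+2\epsilon$, and then augment $\tilde{M}$ with a carefully placed block of near-duplicate columns clustered around one chosen vertex, say $W(:,r)$.

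The heart of the argument is the repartition phenomenon announced before the statement. I would design the cluster around $W(:,r)$ to consist of $r-1$ near-duplicate columns, each within $\epsilon$ of $W(:,r)$ (hence genuine noisy columns of a separable matrix), arranged so that the reconstruction constraint $\|\tilde{M}-\tilde{M}X\|_1\le 2\epsilon$ of~\eqref{rechtLP}, together with $X(i,j)\le X(i,i)$, forces the total diagonal weight near $W(:,r)$ to be $\approx 1$ but spread as $\approx\frac{1}{r-1}$ on each of these $r-1$ columns, the cluster being impossible to reconstruct from a single member. Simultaneously I would engineer the neighborhoods of the remaining $r-1$ vertices so that, after the trace budget $\tr(X)=r$ is spent, the $r$ largest diagonal entries all lie within $\epsilon$ of vertices other than $W(:,r)$ (for instance, one vertex splitting into two entries of weight $\approx\frac12$), so that each $\frac{1}{r-1}$-weight near $W(:,r)$ is strictly dominated and excluded from the top $r$. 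The threshold $\epsilon\le\frac{\kappa}{(r-1)^2}$ is the boundary at which this configuration becomes feasible: one factor $(r-1)$ is the number of pieces into which the weight must split so that each piece $\frac{1}{r-1}$ falls below the competing weights, and the second factor $(r-1)$ comes from the feasibility of reconstructing $\tilde{M}(:,j)\approx W(:,r)$ from the spread cluster within the $\ell_1$-tolerance $\frac{4\epsilon}{1-\epsilon}$ of Lemma~\ref{lem2}, balanced against the separation $\kappa$.

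To turn ``some feasible solution spreads the weight'' into a statement about the solution Hottopixx actually selects, I would, exactly as in the proof of Theorem~\ref{lbdelta}, take the cost vector $p$ with widely separated entries and invoke Lemma~\ref{lplem} to pin the relevant diagonal entries of every optimal solution $X^*$, forcing $X^*$ to place zero weight on whatever single column would otherwise recover $W(:,r)$ and fixing the competing high-weight entries, so that the unique optimum is the bad spread solution. Given this, every column extracted by Algorithm~\ref{balgo2} lies within $\epsilon$ of some vertex $W(:,k)$ with $k\ne r$, whence for any permutation $P$ the column matched to $W(:,r)$ is at $\ell_1$-distance at least $\min_{k\ne r}\|W(:,r)-W(:,k)\|_1-\epsilon\ge\kappa+\epsilon$, the claimed bound. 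The main obstacle is precisely this optimality and uniqueness step: a linear objective is minimized at a vertex of the feasible polytope, so making an equally spread, fractional solution the unique minimizer requires the constraints $X(i,j)\le X(i,i)$ and the reconstruction equalities to carve out a polytope whose optimal vertex carries the prescribed fractional diagonal, and checking this, together with the exact arithmetic that yields the clean constant $(r-1)^2$ and the bound $\kappa+\epsilon$ rather than merely $\kappa$, is where the real work lies. The hypothesis $r\ge 3$ enters because at least three vertices are needed to host the $(r-1)$-way split while still leaving a dominated cluster at $W(:,r)$.
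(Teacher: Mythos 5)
Your high-level plan is indeed the paper's route (explicit counterexample with a scaled simplex, a cluster whose diagonal weight is forced to spread as $\frac{1}{r-1}$, and Lemma~\ref{lplem} with hierarchically large entries of $p$ to pin every optimal solution), but both of your concrete mechanisms have genuine flaws. First, the cluster design: if the $r-1$ cluster columns are all within $\epsilon$ of $W(:,r)$, then any single member reconstructs every other member within $2\epsilon$, so concentrating all the weight on the one member with the smallest entry of $p$ is feasible for \eqref{rechtLP} and strictly preferred by the objective; nothing forces the $\frac{1}{r-1}$ split, and your statement that "the cluster is impossible to reconstruct from a single member" is incompatible with the members being $\epsilon$-close noisy duplicates. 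The paper avoids this by taking the cluster columns to be genuine \emph{mixtures} $\lambda W(:,j) + (1-\lambda)W(:,r)$, $j \neq r$, with $\lambda = 2\epsilon/\kappa$, so that they sit at distance $2\epsilon$ from $W(:,r)$ in $r-1$ \emph{distinct} directions, perturbed by a noise block $Z$ with zero row sums; Lemma~\ref{optsol} then shows that the \emph{unique} way to reconstruct the true vertex column $\tilde{M}(:,r)$ within the tolerance $2\epsilon$ is the uniform combination $\frac{1}{r-1}e$ of the cluster. The spread is forced through that column (whose own diagonal entry is killed by $p_r = K^3$), not through the cluster members themselves; together with $X(i,j) \le X(i,i)$ this is what pins $X^*(j,j) = \frac{1}{r-1}$ on the cluster.

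Second, your competing mechanism and your accounting for the exponent $2$ are wrong. A vertex "splitting into two entries of weight $\approx \frac12$" suffers from the same concentration objection, and, were it forceable, it would be forceable at \emph{any} noise level, so it could not produce the threshold $\epsilon > \kappa/(r-1)^2$ (indeed it would contradict the paper's open question of whether Hottopixx has any positive noise tolerance with duplicates). In the paper the competitor is a single column equal to the centroid $\frac{1}{r-1}W(:,1\text{:}r-1)e$ plus noise $\epsilon$ in a fresh coordinate: each of the $r-1$ well-resolved vertex columns may donate up to $2\epsilon/\kappa$ of mixing weight toward it within its own $2\epsilon$ reconstruction budget, and the reward $p_n = -K$ forces every optimal solution to collect all of this slack, giving $X^*(n,n) \ge (r-1)\frac{2\epsilon}{\kappa}$. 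Hottopixx fails precisely when $(r-1)\frac{2\epsilon}{\kappa} > \frac{1}{r-1}$, i.e. $\epsilon > \frac{\kappa}{2(r-1)^2}$ — that trace-budget accumulation, not the tolerance $\frac{4\epsilon}{1-\epsilon}$ of Lemma~\ref{lem2}, is the source of the second factor $(r-1)$. This choice of competitor also yields the error bound for free: the centroid column is at $\ell_1$-distance exactly $\kappa + \epsilon$ from $W(:,r)$, so one does not need pairwise vertex distances exceeding $\kappa + 2\epsilon$; note that your requirement is not even satisfiable by the scaled simplex you start from, whose pairwise column distances equal $\kappa$ exactly.
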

\begin{proof}
See Appendix~\ref{appc}. \vspace{0.2cm}
\end{proof}

\begin{remark}
It remains an open question whether there exists a bound on the noise level to guarantee Hottopixx to be robust for any separable matrix, that is, one that also contains duplicates and near duplicates (note that, by Theorem~\ref{th7}, this bound, if it exists, has to be smaller than $\frac{\kappa}{(r-1)^2}$). 
\end{remark}

\subsection{Comparison with the Algorithm of Arora et al.~\cite{AGKM11}} \label{aror} 

In this section, we compare the theoretical bounds for Algorithm~\ref{postpro} obtained in Theorem~\ref{mainTh} with the ones of the algorithm of Arora et al.\@ for which the following holds.  
\begin{theorem}[\cite{AGKM11}, Th.\@ 5.7] \label{tharor}
Let $M=WH$  be a normalized $r$-separable matrix with $W$ $\alpha$-robustly simplicial. 
  Let also $\tilde{M} = M+N$  with $||N||_{1} \leq \epsilon$. If 
   \begin{equation} \label{arobo}
  \epsilon < \frac{\alpha^2}{20+13 \alpha}, 
  \end{equation} 
 then the algorithm proposed by Arora et al.~\cite{AGKM11}  extracts a matrix $\tilde{W}$ such that 
  \[
  ||W - \tilde{W}(:,P)||_{1} \leq 10\frac{\epsilon}{\alpha} + 6 \epsilon ,  \quad \text{   for some permutation $P$.} 
  \] 
\end{theorem}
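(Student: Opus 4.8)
Since Theorem~\ref{tharor} is quoted verbatim from \cite{AGKM11}, the plan is to reconstruct the argument underlying their anchor-finding procedure rather than to invent a new one. The starting point is the geometric reading of separability already recorded in the introduction: when the columns are normalized to sum to one, the columns of $W$ are exactly the vertices of $\conv(M)$, and the $\alpha$-robustly simplicial hypothesis says that each vertex $W(:,k)$ sits at $\ell_1$-distance at least $\alpha$ from the convex hull of the remaining columns of $W$. I would first make this quantitative in the noisy regime: for an index $j$ with $M(:,j)=W(:,k)$, the perturbed point $\tilde M(:,j)$ still lies at distance at least $\alpha - \mathcal{O}(\epsilon)$ from the convex hull of the perturbed non-anchor columns, whereas for an index $j$ with $M(:,j)$ interior (a genuine convex combination of the columns of $W$), $\tilde M(:,j)$ lies within $\mathcal{O}(\epsilon)$ of the convex hull of the perturbed anchors. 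This two-sided \emph{detection gap} is what everything rests on.

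The core of the algorithm is to turn this gap into a test. For each column I would solve a linear program computing the $\ell_1$-distance from $\tilde M(:,j)$ to the convex hull of the other columns, and flag $j$ as an anchor when this residual exceeds a threshold of order $\alpha$. The only complication is duplicates and near duplicates: a true vertex can be reconstructed almost exactly by a near-copy of itself, which would shrink its residual and hide it from the test. This is precisely why the procedure of \cite{AGKM11} requires $\alpha$ as an explicit input---before testing a point, all of its neighbors within an $\alpha$-scaled radius are discarded, so that the residual is measured only against points belonging to \emph{other} vertex clusters. With this pre-processing, the separation of the previous paragraph survives, and choosing the threshold strictly between the interior residual $\mathcal{O}(\epsilon)$ and the anchor residual $\alpha - \mathcal{O}(\epsilon)$ guarantees that exactly one representative of each of the $r$ vertices is retained.

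It then remains to bound $\|W - \tilde W(:,P)\|_1$. The key estimate is the reverse direction of the detection gap: if a retained point $\tilde M(:,j)$ passes the robust-extreme test, its noiseless counterpart $M(:,j)$ cannot be far from some vertex $W(:,k)$, because a point at distance more than $\mathcal{O}(\epsilon/\alpha)$ from every vertex of an $\alpha$-robustly simplicial simplex is forced deep enough into the interior to fail the residual threshold. Propagating the noise $\epsilon$ through this geometric implication and adding back the $\|N\|_1\le\epsilon$ displacement yields the claimed bound $\|W-\tilde W(:,P)\|_1 \le 10\,\frac{\epsilon}{\alpha} + 6\epsilon$. The factor $1/\alpha$ is the unavoidable geometric amplification already met in Theorem~\ref{lbdelta}: in a thin simplex a small absolute distance to the convex hull corresponds to a large displacement along a vertex direction.

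The main obstacle is not conceptual but quantitative. One must track the explicit constants $10$ and $6$, together with the admissible threshold on $\epsilon$, through the entire chain linking (i)~the LP residual, (ii)~the true distance to $\conv(W)$, (iii)~the conditioning $\alpha$, and (iv)~the noise level, while simultaneously keeping the anchor residual above and the interior residual below the test threshold. Demanding that this gap remain strictly positive \emph{after} the $\mathcal{O}(\epsilon/\alpha)$ amplification is exactly what forces the \emph{quadratic} requirement $\epsilon < \alpha^2/(20+13\alpha)$---in contrast to the linear-in-$\kappa$ bound obtained for Algorithm~\ref{postpro} in Theorem~\ref{mainTh}---since a detection margin of order $\alpha$ must dominate an error of order $\epsilon/\alpha$.
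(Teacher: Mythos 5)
The first thing to note is that the paper does not prove this statement at all: Theorem~\ref{tharor} is imported verbatim from Arora et al.~\cite{AGKM11} (their Theorem~5.7) and is used in Section~\ref{aror} only as a benchmark against Theorem~\ref{mainTh}. So there is no in-paper proof to compare you against; the only available check is whether your reconstruction is faithful to the cited algorithm. On that score your outline matches everything the paper records about it: the distance-to-convex-hull residual test (computable by linear programming), the pre-processing that discards all neighbors of a point before testing it, the resulting need for $\alpha$ as an explicit input, and the $\epsilon/\alpha$ geometric amplification that also appears in Theorem~\ref{lbdelta}. Your heuristic for the quadratic admissibility condition (a detection margin of order $\alpha$ must dominate an error of order $\epsilon/\alpha$) is the right explanation for why \eqref{arobo} scales like $\alpha^2$ rather than like the linear-in-$\kappa$ bound \eqref{ourbo}.

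As a proof of the precise statement, however, your proposal has a genuine gap: the statement \emph{is} the constants, and every quantitative step is hidden behind an $\mathcal{O}(\cdot)$. You never fix the test threshold or the neighbor-removal radius (in \cite{AGKM11} both are explicit functions of $\epsilon$ and $\alpha$), and you never state, let alone prove, the lemma quantifying ``forced deep enough into the interior'' --- i.e., that a column whose representation $h$ satisfies $\|h\|_\infty \leq 1 - c\,\epsilon/\alpha$ for a suitable constant $c$ must fail the residual test once neighbors are removed. Without that lemma the two-sided detection gap cannot be closed, and neither $\epsilon < \alpha^2/(20+13\alpha)$ nor the error $10\frac{\epsilon}{\alpha}+6\epsilon$ can be derived; your last paragraph concedes this is ``the main obstacle'' but does not carry it out. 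One smaller inaccuracy to repair: the claim in your first paragraph that an anchor's perturbed residual against the convex hull of the other perturbed columns is at least $\alpha - \mathcal{O}(\epsilon)$ is false in the presence of duplicates (a duplicate of $W(:,k)$ is itself a non-anchor column and drives that residual down to $\mathcal{O}(\epsilon)$); the bound only holds \emph{after} the neighbor-removal of your second paragraph, so the detection gap must be stated for the pruned point set from the start rather than patched afterwards.
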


There are two bounds to compare. First, there is the bound on noise level $\epsilon$ allowed to have any error guarantee. 
The one from Equation~\eqref{arobo} does not dominate the one from Theorem~\ref{mainTh}, see Equation~\eqref{ourbo}. 
In fact, $\alpha$ and $\kappa$ only differ by a factor of at most two (Theorem~\ref{kappaalpha}) while $\omega$ ($\geq \kappa \geq \frac{\alpha}{2}$) can potentially be arbitrarily larger than $\alpha$ (take for example the columns of $W$ as the vertices of a flat triangle). 
Hence, for some highly ill-conditioned matrices, Algorithm~\ref{postpro} can tolerate much higher noise levels.  

Second, there is the bound on the error: the algorithm of Arora et al.\@ dominates the one of Algorithm~\ref{postpro}, but only up to a factor $r$ (which is usually small in practice). 
This is not very surprising since the algorithm of Arora et al.\@ is optimal in terms of the error bound; see Section~\ref{pnc}. 
However, \emph{the algorithm of Arora et al.\@  requires the parameter $\alpha$ as an input,} which, we believe, is highly impractical. At least, we do not know of an efficient way to compute $\alpha$ (and this issue is not discussed in their paper).  Moreover, it was observed in \cite{BRRT12} that Hottopixx performs better than the algorithm of Arora et al.\@ on some synthetic datasets.

To conclude, Algorithm~\ref{postpro} is, to the best of our knowledge, the provably most robust algorithm for separable NMF for which the condition number $\alpha$ is not required as an input.

\section{Numerical Experiments} \label{ne}

  In this section, we present some numerical experiments to show the superiority of Algorithm~\ref{postpro} over Hottopixx in case there are duplicates and near duplicates of the columns of $W$ in the data set, otherwise both algorithms coincide since the post-processing will not be entered (cf.\@ the discussion after Theorem~\ref{mainTh}). 
  All experiments were run on a two-core machine with 2.99GHz and 2GB of RAM using a CPLEX implementation to solve the LP \eqref{rechtLP}; the code was developed in \cite{GL13}, and is available at 
  \url{https://sites.google.com/site/nicolasgillis/code}. 
We use the constructions from the proof of Theorem~\ref{th7} with the following parameters: $\kappa = 0.1$ and $K = 5$, while we vary the value of the rank $r$ and the noise level $\epsilon$.  
Moreover, we duplicate each column of $W$ twice (that is, each column of $W$ is present three times in the data set), and permute the columns of $\tilde{M}$ at random in order to avoid a bias towards the natural ordering. Finally, we slightly perturb the vector $p$ in the objective function to make  its entries distinct (since we also duplicated the entries of $p$) by adding to each entry a value drawn from the normal distribution with mean zero and standard deviation $0.1$.  

\begin{figure*}[!h]
\begin{center}
    \includegraphics[width=.49\textwidth]{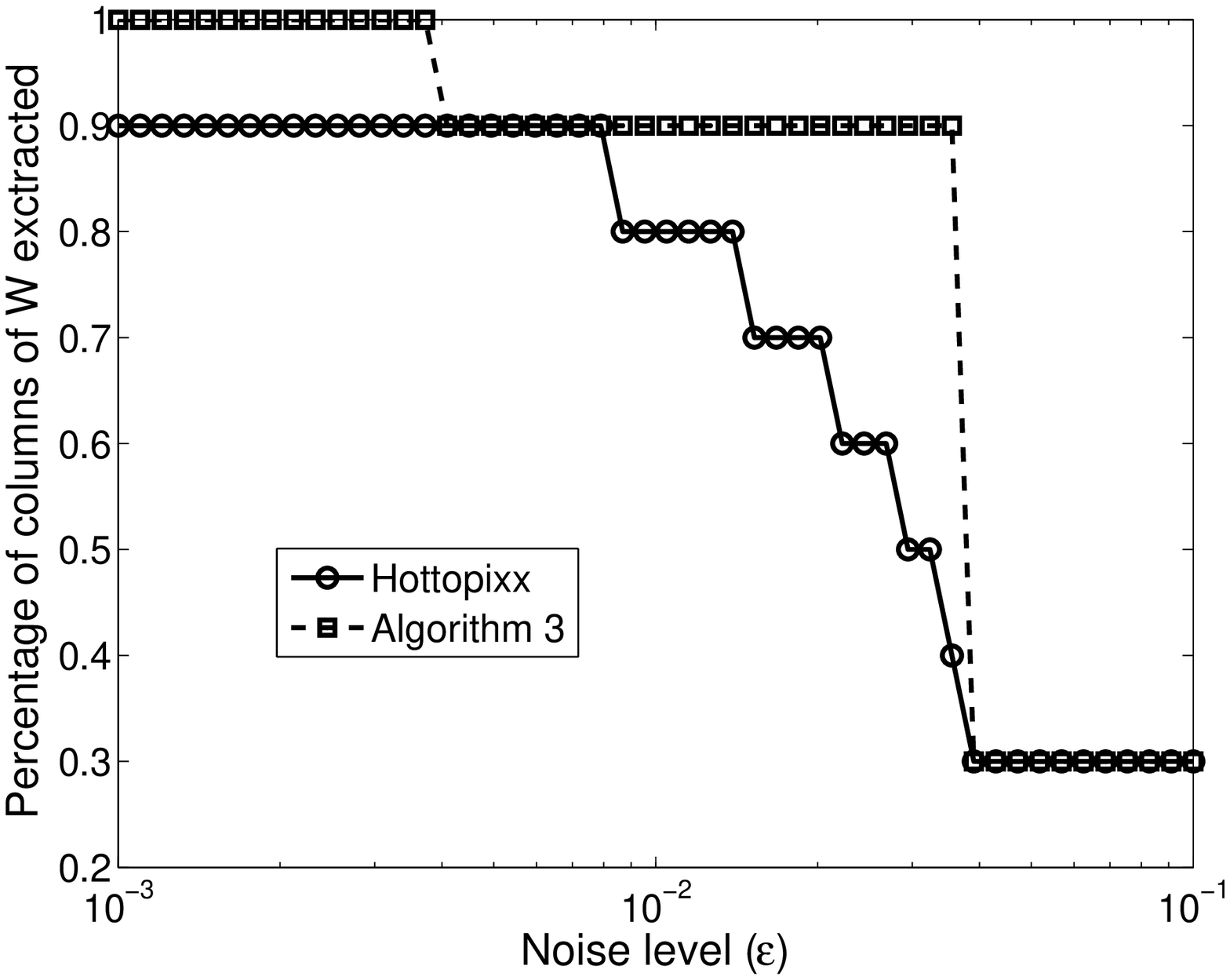}
    \hfill
    \includegraphics[width=.49\textwidth]{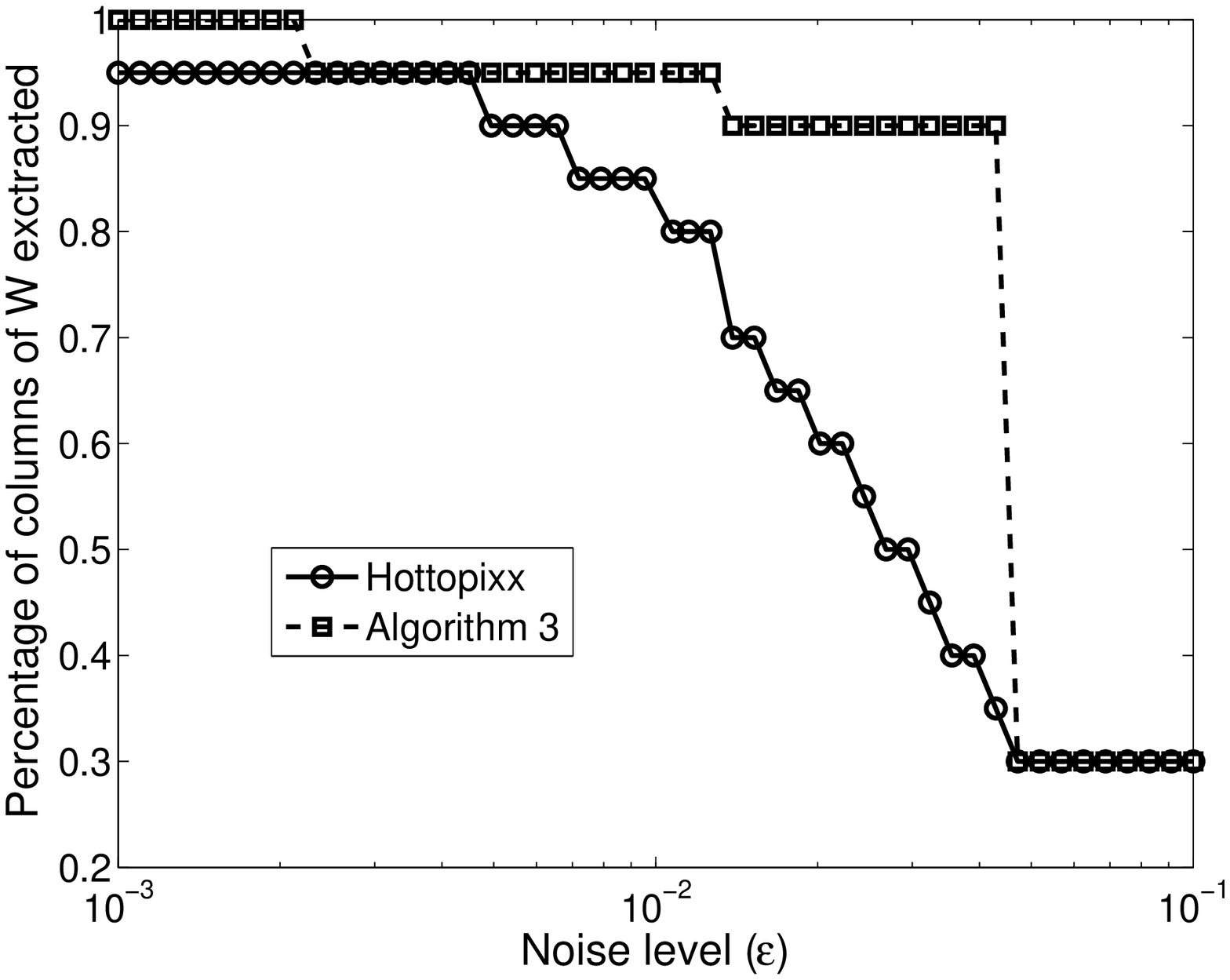} \\
    \includegraphics[width=.49\textwidth]{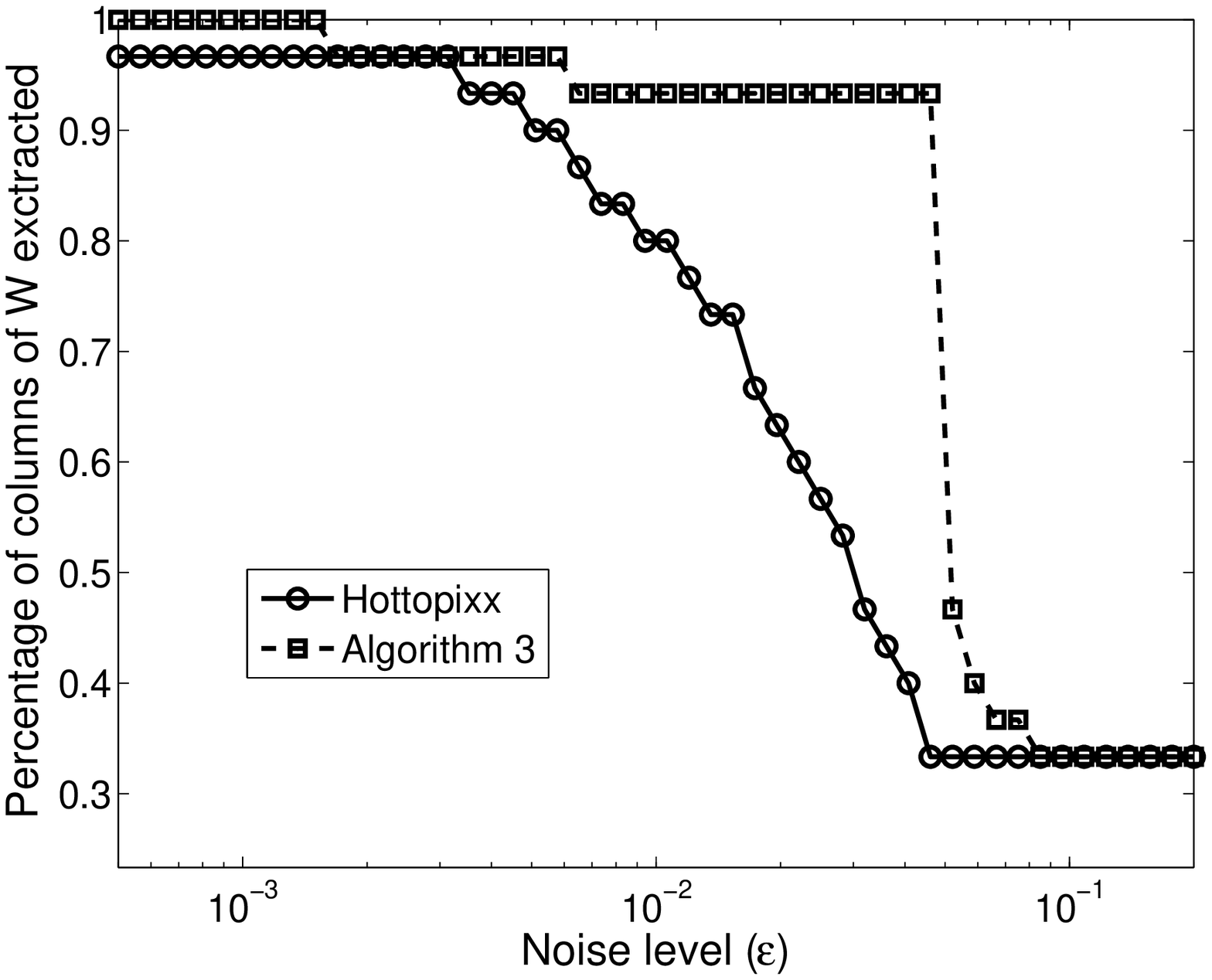}
    \hfill
    \includegraphics[width=.49\textwidth]{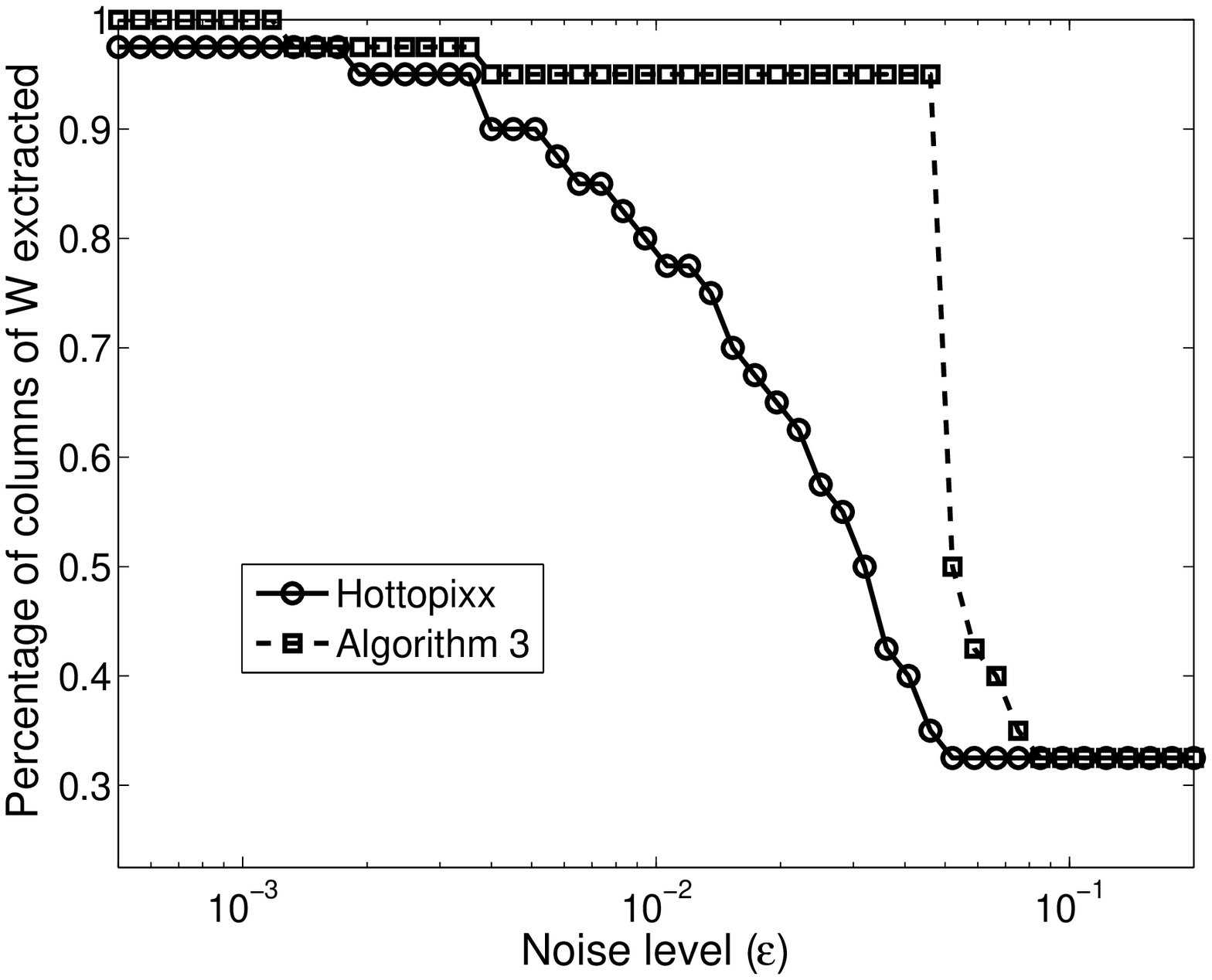}
    \caption{Comparison of Hottopixx and Algorithm~\ref{postpro} on the near-separable matrices from Theorem~\ref{th7} with duplicates of the columns of $W$. From left to right, top to bottom: $r = 10, 20, 30, 40$.}
    \label{th36}
\end{center}
\end{figure*} 
Figure~\ref{th36} displays the percentage of columns of $W$ correctly extracted by the two algorithms for different values of the rank $r$ and of the noise level $\epsilon$ (hence the higher the curve, the better).  
  As shown in Theorem~\ref{th7}, Hottopixx cannot extract one of the columns of $W$, even for small noise   levels. 
  More interestingly, Algorithm~\ref{postpro} clearly outperforms Hottopixx for larger noise levels. 
  For example, for $r = 40$ and $\epsilon = 0.046$ (see the plot in the bottom right corner of  Figure~\ref{th36}), Hottopixx only identifies 35\% of the forty columns of $W$ while Algorithm~\ref{postpro} identifies 95\% of them.   
  The reason for this behavior is the following: when the noise is large, the set of feasible solutions of \eqref{rechtLP} is typically larger (because the constraint $||M-MX||_1 \leq 2\epsilon$ is relaxed). For $\epsilon$ sufficiently large, this allows all duplicates of some columns of $W$ corresponding to small entries of $p$ to be given a large weight, hence some columns of $W$ are extracted more than once. This is not possible with the post-processing which clusters these entries together, and is then able to extract other columns of $W$ whose corresponding diagonal entries of $X$ are smaller. 
  
 Note that the computational time of the post-processing strategy is negligible and takes in average about 5\%  percent of the total time needed to solve the LP~\eqref{rechtLP}.

  \section{Conclusion and Further Work}

 In this paper, we have proposed a provably more robust variant of Hottopixx based on an appropriate post-processing of the solution of the linear program \eqref{rechtLP} (see Algorithm~\ref{postpro}). 
 In particular, we proved that Algorithm~\ref{postpro} is robust for any input separable matrix $M$ (Theorem~\ref{mainTh}), while our analysis is close to being tight. 
 
 
 It would be interesting to improve the bound of Theorem~\ref{mainTh} or show that the bound is tight. It would also be particularly interesting to design more robust or computationally more effective (or both?) separable NMF algorithms. In particular, the following question seems to be open:  
 does it exist a polynomial-time algorithm to which only the noisy separable matrix $\tilde{M}$ and the noise level $\epsilon$ are given as input, and that achieves an error of order $\mathcal{O}\left( \frac{\epsilon}{\alpha} \right)$? Such an algorithm would be optimal (see Section~\ref{pnc}). 
 Note that the algorithm of Arora et al.~\cite{AGKM11} achieves this bound but requires the parameter $\alpha$ as an input, which is highly impractical (there does not seem to be an easy way to evaluate $\alpha$). 

 \section*{Acknowledgments}
 
The author would like to thank Robert Luce (T.U.\@ Berlin) for some insightful discussions and for pointing out an error in Lemma~\ref{lem2} in a previous draft. The author would also like to thank the reviewers for their feedback which helped improve the paper significantly. \vspace{0.2cm}

\appendix

\section{Proof of Theorem~\ref{kappaalpha}} \label{appa} 

\begin{proof}[\textbf{Proof of Theorem~\ref{kappaalpha}}]
Let  
\[
k = \argmin_{1 \leq j \leq r} \min_{x \in \mathbb{R}^{r-1}_+} ||W(:,j) - W(:,\mathcal{J})x||_1, \quad \text{ where } \mathcal{J} = \{1,2,\dots,r\} \backslash \{j\},
\] 
$w = W(:,k)$, and y$ = W(:,\mathcal{R})x^*$ where
 \[
x^* = \argmin_{x \in \mathbb{R}^{r-1}_+} ||W(:,k) - W(:,\mathcal{R})x||_1, \quad \text{ where } \mathcal{R} = \{1,2,\dots,r\} \backslash \{k\}, 
\] 
so that, by definition, $||y-w||_1 = \kappa$. 
If $y = 0$, we are done since $\kappa = ||w||_1 = 1 \geq \frac{1}{2} \alpha$ as $\alpha \leq 2$.
Otherwise $y \neq 0$ and we define $z = \frac{y}{||y||_1} = \lambda^{-1} y$. By definition, $||w-z||_1 \geq \alpha$ since $z$ belongs to the convex hull of the columns of $W$. 
We have 
\begin{align*}
\alpha \leq ||w-z||_1 & = ||w-(\lambda + 1-\lambda )z||_1 
 \leq ||w-\lambda z||_1 + (1-\lambda ) ||z||_1 
 = \kappa + (1-\lambda ) \leq 2 \kappa 
\end{align*}
since $\kappa = ||w-\lambda z||_1 \geq ||w||_1-||\lambda z||_1 = 1-\lambda$, and the proof is complete. 
\end{proof}

\vspace{0.5cm}

\section{Proof of Theorem~\ref{nc1}}  \label{appb}

The following lemma shows  that if one of the coefficients in the objective function of a linear program is much larger than all the other ones, then the corresponding entry of any optimal solution must be smaller than the corresponding entry of any feasible solution. Although the result is clear intuitively, we provide here a simple proof. 
\begin{lemma} \label{lplem} 
Let consider the following linear program 
\begin{equation} \label{lp}
\min_{x \in \mathbb{R}^n} c_K^T x \quad \text{ such that } \; Ax = b \; \text{ and } \; l \leq x \leq u, 
\end{equation}
with $l, u \in \mathbb{R}^n$, $l \leq u$, and $c_K = (K,\tilde{c}) \in \mathbb{R}^n$ where $K \in \mathbb{R}$ is a parameter. Let us denote $x^*_K$ an optimal solution of \eqref{lp} depending on $K$. 
Assume there exists a feasible solution $x^f$ of \eqref{lp} such that $x^f(1) = s$. Then, for any $K$ sufficiently large, $x^*_K(1) \leq s$. 

Similarly, if $c_K(1) = -K$ and there exists a feasible solution such that $x(1) = t$, Then, for any $K$ sufficiently large, $x^*_K(1) \geq t$.  
\end{lemma}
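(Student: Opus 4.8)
The plan is to exploit the fact that the feasible region $P = \{x \in \mathbb{R}^n : Ax = b, \ l \leq x \leq u\}$ is a bounded polyhedron (boundedness comes for free from the box constraints $l \leq x \leq u$), hence a polytope with finitely many vertices, and that an LP attains its optimum on the face spanned by its optimal vertices. The intuition is that, as $K \to +\infty$, the term $K\,x(1)$ dominates the objective $c_K^T x = K\,x(1) + \tilde{c}^T x_{2:n}$ and forces the first coordinate of every optimal solution down to its smallest attainable value; since $x^f$ is feasible, this value is at most $s$.

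Concretely, I would first set $m^* = \min\{x(1) : x \in P\}$. This minimum is attained at a vertex of $P$, and because $x^f \in P$ we immediately get $m^* \leq x^f(1) = s$. I would then split the (finite) vertex set of $P$ into those $v$ with $v(1) = m^*$ and those with $v(1) > m^*$, introduce the gap $\gamma = \min\{v(1) - m^* : v(1) > m^*\} > 0$ (if no vertex lies strictly above $m^*$ the claim is immediate), and a bound $B = \max_v |\tilde{c}^T v_{2:n}|$, which is finite since $P$ is bounded. Comparing objective values vertex by vertex, a vertex at level $m^*$ has objective at most $K m^* + B$, whereas a vertex above has objective at least $K(m^*+\gamma) - B$; choosing $K > 2B/\gamma$ makes the second strictly larger than the first, so every optimal vertex satisfies $v(1) = m^*$.

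To pass from optimal vertices to \emph{any} optimal solution $x^*_K$, I would invoke that the optimal set is a face $G$ of $P$ whose vertices are optimal vertices of $P$, hence all have first coordinate $m^*$; since every point of $G$ is a convex combination of these vertices and $x \mapsto x(1)$ is linear, every point of $G$ has first coordinate $m^*$. This yields $x^*_K(1) = m^* \leq s$, as claimed. The second statement follows by the symmetric argument: with $c_K(1) = -K$ the objective is minimized by pushing $x(1)$ to its \emph{maximum} $M^* = \max\{x(1) : x \in P\} \geq t$, and the identical gap computation gives $x^*_K(1) = M^* \geq t$ for $K$ large.

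The main obstacle is obtaining the \emph{exact} inequality $x^*_K(1) \leq s$ rather than an approximate one. A direct optimality comparison between $x^*_K$ and $x^f$ only gives $K(x^*_K(1) - s) \leq \tilde{c}^T(x^f_{2:n} - (x^*_K)_{2:n}) \leq 2B$, i.e. $x^*_K(1) \leq s + 2B/K$, which does not close the argument precisely in the case $s = m^*$ — and that is exactly the case used in the applications (e.g. in Theorem~\ref{lbdelta}, where $s = 0$ and nonnegativity forces $m^* = 0$). The finiteness of the vertex set together with the strictly positive gap $\gamma$ are what upgrade the $O(1/K)$ slack to the exact bound; the only other point to watch is the phrase ``for any optimal solution'', which is handled by the face/convex-combination step above.
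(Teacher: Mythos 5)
Your proof is correct and follows essentially the same route as the paper's: both arguments rest on the finiteness of the vertex set of the bounded feasible polytope, a strictly positive gap in the first coordinate between the ``low'' vertices and the rest, a uniform bound on the $\tilde{c}$-part of the objective over the feasible set, and the fact that any optimal solution is a convex combination of optimal vertices (so the vertex-level conclusion transfers to all optimal solutions). The only differences are cosmetic: you argue directly, anchoring the gap at $m^* = \min\{x(1) : x \in P\}$ and concluding $x^*_K(1) = m^* \leq s$, whereas the paper anchors the gap at $s$ itself and derives a contradiction from a hypothetical optimal solution with $x^*_K(1) > s$.
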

\begin{proof}
Let $\mathcal{V} 
\neq \emptyset$ be the set of vertices of the feasible set of \eqref{lp}, and $\bar{\mathcal{V}} = \{ x \in \mathcal{V} \ | \ x(1) > s \}$. Notice that because the feasible set of \eqref{lp} is a polytope, there always exists an optimal solution in $\mathcal{V}$. Let us denote $d = \min_{x \in \bar{\mathcal{V}}} x(1) > s$. Assume there exists an optimal solution $x^*_K$ such that $x^*_K(1) > s$. 
This implies that there exists an optimal solution $\bar{x}^*_K \in \bar{\mathcal{V}}$ (since any optimal solution is a convex combination of optimal vertices in ${\mathcal{V}}$). 
Therefore, 
\[
K d - ||\tilde{c}||_2 ||u||_2 
\leq c_K^T x^*_K = c_K^T \bar{x}^*_K 
= K \bar{x}^*_K(1) + \tilde{c}^T \bar{x}^*_K(2\text{:}n)  
\leq c_K^T x^f 
\leq Ks  + ||\tilde{c}||_2 ||u||_2 , 
\]
which is absurd for any $K > \frac{2||\tilde{c}||_2 ||u||_2}{d-s}$. 
\end{proof}

The linear program \eqref{rechtLP} can be written in the form of \eqref{lp}; in fact, $0 \leq X \leq 1$ while the $mn$ additional variables necessary to express the constraint $||M-MX||_{1} \leq 2\epsilon$ linearly will be in the interval $[0,2\epsilon]$. Therefore, Lemma~\ref{lplem} applies to \eqref{rechtLP}.

\begin{proof}[\textbf{Proof of Theorem~\ref{nc1}}]
We prove the result with the following construction: Let 
\[
W = \left(
\begin{array}{c}
\frac{\kappa}{2} I_{r}  \\
(1-\frac{\kappa}{2})  e^T 
\end{array} 
\right), 
\] 
which is $\kappa$-robustly conical. 
Let also 
\[
H = \left( 
\begin{array}{cc}
I_r & \beta I_r + (E_{r} - I_r) \frac{1-\beta}{r-1} 
\end{array} 
\right), 
\] 
so that $\max_{i,j} H'_{ij} = \beta$ (note that $\beta$ must be larger than $\frac{1}{r}$ since the columns of $H'$ sum to one), $N = 0$,  
$\tilde{M} = WH + N$, $p = (1,2,3,\dots, r-1, -K, -1, -2,\dots,-(r-1),-K^2)^T$ for $K$ sufficiently large, and 
\[ 
 \epsilon =  \frac{\kappa (1-\beta)}{(r-1)(1-\beta) + 1} \leq \frac{\kappa}{r-1} . 
\]
Assume that 
\[
X = \left( \begin{array}{cccc} 
(1-\delta) I_{r-1} + \frac{\delta-\omega}{r-1}  J_{r-1}  & 0
& (1-\delta) \left( \beta I_{r-1} + \frac{1}{r-1} J_{r-1} \left( \frac{1-\omega}{1-\delta} - \beta  \right)\right) & 0  \\ 
\frac{\delta-\omega}{r-1} e^T & 1 & \frac{1-\delta}{r-1} \left( \frac{1-\omega}{1-\delta} - \beta \right) e^T & 0 \\  
\omega I_{r-1}  & 0 & \omega I_{r-1}   & 0  \\ 
0 & 0 &  0 &  1 \\  
\end{array} \right)
\] 
where $J_{r-1} = E_{r-1} - I_{r-1}$,  
\[
\delta = (2-\beta) \omega  
\quad \text{and} \quad 
\omega = \frac{\epsilon}{\kappa (1-\beta)},  \quad \text{ implying }  X(n,n) = \omega + (r-1)(\delta-\omega) = 1, 
\]
is a feasible solution of \eqref{rechtLP} (note that $n = 2r$). By Lemma~\ref{lplem}, there exists $K$ sufficiently large such that any optimal solution $X^*$ must satisfy $X^*(n,n) = 1$. Using Lemma~\ref{lplem} again, there exists $K$ sufficiently large such that  $X^*(r,r) = 1$. 
Therefore, for $K$ sufficiently large, the $r$th and $n$th column of $\tilde{M}$ will be extracted implying 
\begin{align*} 
||W - \tilde{W}(:,P)||_{1} 
&  = \min_{1 \leq j \leq r-1} ||W(:,j) - M(:,n)||_1 \\
&  = ||W(:,1) - M(:,n)||_1 = \kappa \frac{r-2+\beta}{r-1} > \frac{r-2}{r-1} \kappa \geq \epsilon, 
\end{align*} 
and the proof will be complete. 

It remains to show that $X$ is feasible: Clearly, $\tr(X) = r$. For the constraints $0 \leq X \leq 1$, we check that   
\[
0 \leq \omega =  \frac{\epsilon}{\kappa(1-\beta)} = \frac{1}{r(1-\beta) + 1} \leq \delta  = (2-\beta) \omega =  \frac{(1-\beta) + 1}{r(1-\beta) + 1} \leq 1 ,  
\] 
and
 \[
 0 \leq \frac{1}{r-1} \left( \frac{1-\omega}{1-\delta} - \beta \right) \leq 1  \quad \text{ since } \;  \frac{1-\omega}{1-\delta} = \frac{r-1}{r-2} \geq 1. 
 \] 
For $X(i,j) \leq X(i,i)$ for all $i,j$, we only have to check that   
 \[
 1-\delta \geq \frac{\delta-\omega}{r-1} \iff (r-1) (r-2) (1-\beta) \geq (1-\beta). 
 \]
 It remains to verify that  $||M(:,j)-MX(:,j)||_{1} \leq 2 \epsilon$ for all $1 \leq j \leq 2r$: 
\begin{itemize}

\item $1 \leq j \leq r-1$. Letting $\mathcal{J} = \{1,2,\dots,r\} \backslash \{j\}$, we have 
\begin{align*}
||M(:,j)-MX(:,j)||_{1} & = \left\|M(:,j)-(1-\delta) M(:,j) - \frac{\delta - \omega}{r-1} M(:,j+r) \right\|_{1} \\ 
& =  \left\|\delta M(:,j) - \omega M(:,j+r) - \frac{\delta - \omega}{r-1} M(:,\mathcal{J})e \right\|_{1} \\ 
& =  \omega \left\| M(:,j) -  M(:,j+r) \right\|_{1} \\ 
& \qquad \qquad + (\delta - \omega)  \left\|M(:,j)- \frac{1}{r-1} M(:,\mathcal{J})e \right\|_{1} \\ 
& =  \omega \kappa (1-\beta)  + (\delta - \omega) \kappa =   2 \omega \kappa (1-\beta) = 2 \epsilon . 
\end{align*}

\item $r+1 \leq j \leq 2r-1$. Letting $\mathcal{R} = \{1,2,\dots,r\} \backslash \{j-r\}$ and ${w}_j = W(:,\mathcal{R}) \frac{e}{r-1}$, we have 
\begin{align*}
& ||M(:,j)-MX(:,j)||_{1}  \\ 
& \quad = \left\|M(:,j)- \omega M(:,j) - (1-\delta) \beta W(:,j-r) -  \left( 1 - \omega - \beta (1-\delta) \right) w_j \right\|_{1} \\ 
 & \quad= (1- \omega ) \left\|M(:,j)- \frac{r-2}{r-1} \beta M(:,j-r) 
-  \left(1-\beta \frac{r-2}{r-1}\right)  {w}_j \right\|_{1} \\
 &\quad = (1- \omega ) \left\|M(:,j)- \left(1-\frac{1}{r-1}\right) \beta W(:,j-r) 
-  \left(1-\beta + \beta \frac{1}{r-1}\right)  {w}_j \right\|_{1} \\
 & \quad= \frac{\beta(1- \omega )}{r-1} \left\|  W(:,j-r) -  {w}_j \right\|_{1} = \frac{\beta(1- \omega ) \kappa}{r-1} \leq \frac{(1-\omega) \kappa}{r} \\
 & \quad=  \frac{(r-1) (1-\beta) \kappa}{r((r-1)(1-\beta)+1) } \leq \frac{(1-\beta) \kappa}{(r-1)(1-\beta)+1 } = \epsilon, 
\end{align*}
In fact, $\frac{1-\delta}{1-\omega} = \frac{r-2}{r-1}$, $\beta \leq \frac{1}{r}$, and, by construction, $M(:,j) = \beta W(:,j-r) + (1-\beta) w_j$.
\end{itemize} 
  \end{proof}

\section{Proof of Theorem~\ref{th7}} \label{appc}

\begin{proof}[\textbf{Proof of Theorem~\ref{th7}}]
We prove the result with the following construction: Let 
\[
W = \left(
\begin{array}{c}
\frac{\kappa}{2} I_{r}  \\
(1-\frac{\kappa}{2})  e^T  \\ 
0_{r \times r}  
\end{array} 
\right) ,  \quad 
H = \left( 
\begin{array}{cccc}
I_{r-1} & 0 &  \lambda I_{r-1} &  \frac{1}{r-1} e \\
 0 & 1 &  (1-\lambda) e^T &  0  \\
\end{array} 
\right) ,  
\] 
where $\lambda = 2 \frac{\epsilon}{\kappa}$,  
\[
N = \left( \begin{array}{cccc }
0_{(r+1) \times r} &  0_{(r+1) \times 1} &  0_{(r+1) \times (r-1)} & 0_{(r+1) \times 1} \\
\epsilon e^T &  0 &  0_{1 \times (r-1)} & \epsilon \\
0_{(r-1) \times (r-1)} &  0_{(r-1) \times 1} &  Z & 0_{(r-1) \times 1} \\ 
\end{array} \right), 
\]
where $Z = x I_{r-1} + y (E_{r-1} - I_{r-1})$ with $x = \frac{1}{r-1} \epsilon$ and $y = \frac{-x}{r-2}$. The matrix $Z$ has been constructed so that $||Z(:,j)||_1 \leq \epsilon$ for all $j$, $\sum_j Z(i,j) = 0$ for all $i$, and $\left\|\tilde{M}(:,j) - \frac{1}{r-1} \tilde{M}(:,\mathcal{I}) e\right\|_1 = 2\epsilon$ for all $j \in \mathcal{J} = \{r+1,r+2,\dots 2r-1\}$ and $\mathcal{I} = \mathcal{J} \backslash \{i\}$.  
Let also $\tilde{M} = WH + N$,   
\[
\frac{\kappa}{(r-1)^2} < \epsilon \leq \frac{\kappa}{2(r-1)} \quad \text{ so that } \lambda \leq \frac{1}{r-1}, 
\]
and 
\[
p = ( 1, 2, \dots, r-1, K^3, K^2,K^2+1,\dots,K^2+r-1, -K )^T, 
\]
for $K$ sufficiently large. 
Assume
\[
X = \left( \begin{array}{cccc}
(1-\frac{2\epsilon}{\kappa}) I_{r-1}  & 0  & \lambda \frac{r-2}{r-1}  I_{r-1} &  \left(1-\frac{2\epsilon (r-1)}{ \kappa}\right)  e^T  \\
  0 & 0  & 0  & 0 \\
 0  & \frac{1}{r-1} e  &  \frac{1}{r-1} E_{r-1}  & 0 \\
 \frac{2\epsilon}{\kappa} e^T & 0  & 0 & (r-1) \frac{2\epsilon}{\kappa}  \\
\end{array} \right) , 
\] 
 is feasible for \eqref{rechtLP}. Letting $X^*$ be any optimal solution, by Lemma~\ref{lplem}, there exists $K$ sufficiently large such that $X^*(r,r) = 0$. By Lemma~\ref{optsol} (see below), this implies that $X^*(j,j) \geq \frac{1}{r-1}$ for $j \in \mathcal{J}$. Using Lemma~\ref{lplem} again, we  have that for $K$ sufficiently large $X^*(j,j) = \frac{1}{r-1}$ for all for $j \in \mathcal{J}$, and $X^*(n,n) \geq (r-1) \frac{2\epsilon}{\kappa}$. 
Therefore, since 
\[
\frac{2\epsilon (r-1)}{\kappa} > \frac{1}{r-1} \iff {\epsilon} > \frac{\kappa}{2(r-1)^2}, 
\]
and 
\[
1-\frac{2\epsilon}{\kappa} > \frac{1}{r-1} \iff \epsilon < \left(\frac{r-2}{r-1}\right) \frac{\kappa}{2} \leq \frac{\kappa}{4} , 
\]
 the first $r-1$ columns and the last column of $\tilde{M}$ will be extracted so that
\[
||W - \tilde{W} ||_{1} = ||W(:,r) - \tilde{M}(:,n)||_1 = \kappa + \epsilon, 
\]
and the proof will be complete. 

It remains to show that $X$ is feasible. We clearly have $\tr(X) = r$, $0 \leq X \leq 1$, and $X(i,j) \leq X(i,i)$ for all $i,j$ because $\epsilon \leq \frac{\kappa}{2(r-1)}$ while, for $||\tilde{M}(:,j) - \tilde{M}X(:,j)||_{1} \leq 2\epsilon$ for all $j$, we have 
\begin{itemize}

\item $1 \leq j \leq r-1$. 
\[
||\tilde{M}(:,j) - \tilde{M}X(:,j)||_{1} = \frac{2\epsilon}{\kappa} ||\tilde{M}(:,j) - \tilde{M}(:,n)||_1 = 2 \frac{r-2}{r-1}\epsilon \leq 2\epsilon. 
\]
\item $j = r$. This follows from Lemma~\ref{optsol}.
 
\item $r+1 \leq j \leq 2r-1$. This follows from the construction of matrix $Z$. 

\item $j = 2r$. $\tilde{M}(:,j) = \tilde{M}X(:,j)$ since $\tilde{M}(:,j) = \frac{1}{r-1}W(:,1\text{:}r-1)e$.
 
\end{itemize}
\end{proof}

\begin{lemma} \label{optsol} 
Let $W, H, N$ and $\tilde{M} = WH+N$ be the matrices constructed in Theorem~\ref{th7}. Let also $\mathcal{R} = \{1,2,\dots,n\} \backslash \{r\}$. 
Then 
\begin{equation} \label{optx}
\min_{x \geq 0} ||\tilde{M}(:,r) - \tilde{M}(:,\mathcal{R})x||_1 = 2 \epsilon, 
\end{equation}
and the \emph{unique} optimal solution of \eqref{optx} is given by 
\[
x^{\dagger} = 
\left( \begin{array}{c}
0_{(r-1) \times 1}  \\
\frac{1}{r-1} e \\
0 _{1 \times 1}
\end{array} \right) \; \in \mathbb{R}^{2r-1}.  
\]
\end{lemma}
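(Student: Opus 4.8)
The plan is to read \eqref{optx} as a linear program---it is exactly the $\ell_1$-distance from $\tilde{M}(:,r)$ to the cone generated by the remaining columns---and to certify simultaneously the optimal value $2\epsilon$ and the uniqueness of $x^{\dagger}$ by producing a single dual vector $g$. First I would compute the residual $s = \tilde{M}(:,r) - \tilde{M}(:,\mathcal{R})x^{\dagger}$ directly. Since $\sum_i Z(:,i) = 0$ and the columns $\tilde{M}(:,r+i)$ all carry the same entry $1-\frac{\kappa}{2}$ in the dense row, the vector $\tilde{M}(:,\mathcal{R})x^{\dagger} = \frac{1}{r-1}\sum_{i=1}^{r-1}\tilde{M}(:,r+i)$ agrees with $\tilde{M}(:,r)$ on the dense row and on the whole $Z$-block, so $s$ is supported only on the first $r$ coordinates, with value $-\frac{\lambda\kappa}{2(r-1)}$ on rows $1,\dots,r-1$ and $+\frac{\lambda\kappa}{2}$ on row $r$; hence $\|s\|_1 = \lambda\kappa = 2\epsilon$. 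This shows $x^{\dagger}$ is feasible with objective value $2\epsilon$.

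Next I would invoke weak duality: for any $g$ with $\|g\|_{\infty}\le 1$ and any $x\ge 0$,
\[
\|\tilde{M}(:,r) - \tilde{M}(:,\mathcal{R})x\|_1 \;\ge\; g^T\tilde{M}(:,r) \;-\; \sum_{j\in\mathcal{R}} x_j\,\big(g^T\tilde{M}(:,j)\big),
\]
so if $g^T\tilde{M}(:,j)\le 0$ for every $j\in\mathcal{R}$, the right-hand side is at least $g^T\tilde{M}(:,r)$ for all $x\ge 0$. The aim is to take $g$ equal to $\operatorname{sign}(s)$ on the support of $s$ (that is, $g_i=-1$ for $1\le i\le r-1$ and $g_r=+1$), and to fix its remaining entries---on the dense row, on the $\epsilon$-row, and on the $Z$-block, where $s$ vanishes---inside $[-1,1]$ so that $g^T\tilde{M}(:,j)=0$ for the active columns $j\in\{r+1,\dots,2r-1\}$ and $g^T\tilde{M}(:,r)=2\epsilon$.

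The crux is solving for the free entries of $g$; write $g_b$ for the value on the dense row and $g_D$ for the vector on the $Z$-block (the $\epsilon$-row value may be set to $0$, since the active columns vanish there). The $r-1$ active-column equations collapse to $Zg_D = c\,e$ for a single constant $c$ depending on $g_b$. Here the structure of $Z$ is decisive: $Z$ is symmetric with zero row sums (so $Ze=0$) and has nonzero eigenvalue on $e^{\perp}$, hence $\ker Z=\operatorname{span}(e)$ and $\operatorname{range}Z=e^{\perp}$; a nonzero multiple of $e$ never lies in $\operatorname{range}Z$. This forces $c=0$, which pins down $g_b = \frac{\kappa(2\lambda-1)}{2-\kappa}$ (one checks $|g_b|<1$ using $\kappa\le 1$ and $\lambda<1$) and permits $g_D=0$. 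With these choices $g^T\tilde{M}(:,r)=2\epsilon$, the active equalities hold, and for the inactive columns $j\in\{1,\dots,r-1,2r\}$ the reduced cost is $g^T\tilde{M}(:,j)=\kappa(\lambda-1)<0$ \emph{strictly}. This yields both the lower bound $2\epsilon$ (so the optimal value is $2\epsilon$) and the strict slack needed below.

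Finally, for uniqueness I would examine the equality case. Equality forces $x'_j\,(g^T\tilde{M}(:,j))=0$ for any optimal $x'$, so the strict negativity of the inactive reduced costs makes every optimal $x'$ vanish on $\{1,\dots,r-1,2r\}$, i.e. $x'$ is supported on the active columns with weight vector $w\ge0$. Equality $g^Ts'=\|s'\|_1$ forces the residual $s'=\tilde{M}(:,r)-\tilde{M}(:,\mathcal{R})x'$ to be sign-consistent with $g$; since $g_b$ and $g_D$ lie strictly inside $(-1,1)$, the residual must be \emph{exactly} zero on the dense row and throughout the $Z$-block, giving $\sum_i w_i=1$ and $Zw=0$. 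As $\ker Z=\operatorname{span}(e)$, these two conditions force $w=\frac{1}{r-1}e$, i.e. $x'=x^{\dagger}$. The main obstacle is precisely this interplay between the kernel/range structure of $Z$ and the strictly interior dual values: it is what both rigidifies the certificate (fixing $g_b$) and collapses the optimal set to the single point $x^{\dagger}$.
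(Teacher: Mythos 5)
Your proof is correct, and it takes a genuinely different route from the paper's. The paper works entirely on the primal side: it first folds the weight of the last column into the first $r-1$ columns, then exploits the permutation symmetry of the construction (averaging an optimal solution over all permutations of $\{1,\dots,r-1\}$ and invoking convexity) to reduce \eqref{optx} to a two-variable piecewise-linear problem in $(a,b)=(\|y\|_1,\|z\|_1)$, identifies $(0,1)$ as its unique minimizer by a subdifferential case analysis, and only at the very end uses $\ker Z=\operatorname{span}(e)$ to force the weights inside the cluster to be uniform. You instead produce one explicit dual certificate that settles both claims at once, and your computations check out: the residual at $x^{\dagger}$ has $\ell_1$-norm $\lambda\kappa=2\epsilon$; the solvability requirement $ce\in\operatorname{range}(Z)=e^{\perp}$ does force $c=0$ and hence $g_b=\kappa(2\lambda-1)/(2-\kappa)$, with $|g_b|<1$ because $0<\lambda\le\frac{1}{r-1}<1$ and $\kappa\le 1$; the active reduced costs vanish while the inactive ones equal $\kappa(\lambda-1)<0$; and $g^{T}\tilde{M}(:,r)=\kappa\lambda=2\epsilon$. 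Complementary slackness plus the strict interiority of $g$ on the dense row and the $Z$-block then gives $\sum_i w_i=1$ and $Zw=0$, hence $w=\frac{1}{r-1}e$, exactly as you argue. As for what each approach buys: yours dispenses with the symmetrization trick and the piecewise-linear case analysis, handles value and uniqueness in one stroke, and makes the two roles of $Z$ transparent (its range condition rigidifies the dual certificate, its kernel collapses the primal optimal face); the paper's argument is more elementary in that it never appeals to duality or certificates, at the price of a longer explicit computation and a two-stage uniqueness argument.
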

\begin{proof} 
Let $x^{*} = (y , z , w)$ be an optimal solution of \eqref{optx} where $y, z \in \mathbb{R}_+^{r-1}$ and $w \in \mathbb{R}_+$. We have to show that $x^* = x^{\dagger}$. 
From $x^*$, let us construct another optimal solution $x' = (y',z',0)$ such that all the entries of $y'$ and $z'$ are equal to each other. 
Because $\tilde{M}(:,n) = \frac{1}{r-1}\tilde{M}(:,1\text{:}r-1)e$, we take $w = 0$, replace $z \leftarrow z + \frac{w}{r-1}$ and obtain an equivalent solution. Let us denote $\bar{\mathcal{R}} = \mathcal{R} \backslash \{n\}$ and 
\[
g(y,z) = \left\|\tilde{M}(:,r) - \tilde{M}(:,\bar{\mathcal{R}})\binom{y}{z}\right\|_1. 
\] 
By symmetry, one can check that $g(y,z) =  g(y(P),z(P))$ for any permutation $P$ of $\{1,2,\dots,r-1\}$ (this simply amounts to permuting the first and last $r-1$ columns of $M(:,\bar{\mathcal{R}})$). By convexity, $(y',z') = \frac{1}{|\Pi|}\sum_{P \in \Pi}  (y(P),z(P))$, where $\Pi$ is the set of all possible permutations of  $\{1,2,\dots,r-1\}$, is also an optimal solution of \eqref{optx} hence all entries of $y'$ and $z'$ are equal to each other, and $||y'||_1 =  ||y||_1$ and $||z'||_1 =  ||z||_1$. 
 
Therefore, denoting $y'(i)= \frac{a}{r-1}$ and $z'(i)= \frac{b}{r-1}$ for all $1 \leq i \leq r-1$, the optimization problem \eqref{optx} can be reduced to 
\begin{equation} \label{eq1}
\min_{a, b \geq 0}  \left\|
\left( \begin{array}{c}
  0_{(r-1) \times 1}  \\
 \frac{\kappa}{2}  \\ 
 1-\frac{\kappa}{2} \\
 0 \\ 
 0_{(r-1) \times 1} 
\end{array} \right) 
- a \left( \begin{array}{c}
 \frac{\kappa}{2(r-1)} e  \\
 0 \\ 
  1-\frac{\kappa}{2} \\
 \epsilon \\
 0_{(r-1) \times 1} \\
\end{array} \right) 
- b \left( \begin{array}{c}
   \frac{\lambda \kappa}{2(r-1)}  e  \\
 (1-\lambda) \frac{ \kappa}{2} \\ 
  1-\frac{\kappa}{2} \\
  0 \\ 
 0_{(r-1) \times 1} 
\end{array} \right) \right\|_1
\end{equation}
\[
\equiv 
\min_{a, b \geq 0}  \; h(a,b) = 
\frac{\kappa}{2} \left| a + \lambda b \right| 
+ 
\frac{\kappa}{2} \left| 1- (1-\lambda) b \right|  
+
\left( 1 - \frac{\kappa}{2} \right) \left| 1- a -  b \right|  
+
\epsilon \left| a \right|.   
\]
Let us show that $(a^*,b^*) = (0,1)$ is the unique optimal solution, for which $h(0,1) = {\kappa \lambda} = 2\epsilon$. 
First, note that $(0,0)$ cannot be optimal since $h(0,0) = 1$. 
For $a+b > 1$, the subdifferential of $h$ in $a$ is larger than $1 - \epsilon > 0$, while, 
for $0 < a+b < 1$, the subdifferential of $h$ in $b$ is $\kappa\lambda - 1 = 2\epsilon - 1 < 0$ (recall that $\lambda = \frac{2\epsilon}{\kappa}$, $\epsilon \leq \frac{\kappa}{2(r-1)}$, $\kappa \leq 1$, and $r \geq 3$)  hence $a^*+b^* = 1$ at optimality. Substituting $a = 1 - b$ above, we obtain 
\[
b^* = \argmin_{0 \leq b \leq 1} \; 2 |1- (1-\lambda)b| = 1, 
\]
which is unique as the slope at $b=1$ is negative (since $0 \leq \lambda \leq 1/2$). 

Finally, we have $b^* = 1$, $a^* = 0$ is the unique solution of \eqref{eq1} implying that $y' = y = 0$ and that the minimal objective function value of \eqref{optx} is 2$\epsilon$. Moreover, this implies $||z'||_1 = ||z||_1 = 1$. 
It remains to show that the entries of $z$ are equal to each other, that is, show that the unique solution to the following system 
\[
  \left\|
\left( \begin{array}{c}
  0_{(r-1) \times 1}  \\
 \frac{\kappa}{2}  \\ 
 0_{(r-1) \times 1} 
\end{array} \right) 
-  \left( \begin{array}{c}
  \frac{\lambda \kappa}{2}  I_{r-1}   \\
  (1-\lambda) \frac{\kappa}{2} \\
 Z 
\end{array} \right) z \right\|_1 = 2 \epsilon, 
\] 
is $z^* = \frac{1}{r-1} e$, which is clearly the case as the only $z$ such that $Zz = 0$ and $||z||_1 = 1$ is $z^*$. This completes the proof.  
\end{proof}

\bibliographystyle{siam} 
\bibliography{Biography}

\end{document}